\setlist[enumerate]{leftmargin=*,topsep=0.2cm}
\setlist[itemize]{leftmargin=*,topsep=0.2cm}
\newcommand{\addrebut}[1]{{\color{blue} #1}}
\newcommand{\rmrebut}[1]{\textcolor{blue}{\sout{#1}}}
\renewcommand{\addrebut}[1]{#1}
\renewcommand{\rmrebut}[1]{}
\newcommand{\unchanged}[1]{[Unchanged] \textit{#1}}
\newcommand{\newtext}[1]{[New] #1}
\theoremstyle{thmstyleone}\newtheorem{theorem}{Theorem}[section]\newtheorem{lemma}[theorem]{Lemma}\newtheorem{proposition}[theorem]{Proposition}\newtheorem{fact}[theorem]{Fact}
\newtheorem{corollary}[theorem]{Corollary}
\theoremstyle{thmstyletwo}\newtheorem{example}[theorem]{Example}\newtheorem{remark}[theorem]{Remark}
\theoremstyle{thmstylethree}\newtheorem{definition}[theorem]{Definition}\newtheorem{assumption}[theorem]{Assumption}\Crefname{assumption}{Assumption}{Assumptions}
\newtcolorbox{box_light}{colback=blue!5!white,            colframe=blue!5!white,                  boxrule=0pt,                     sharp corners,                   top=4pt,
    left=2pt,
    right=2pt
}
\newenvironment{question}
  {\begin{center}\itshape}
  {\end{center}}
\begin{document}

\title[Article Title]{Convexity in ReLU Neural Networks: beyond ICNNs?}

\author*[1]{\fnm{Anne} \sur{Gagneux}}\email{anne.gagneux@ens-lyon.fr}

\author[2]{\fnm{Mathurin} \sur{Massias}}

\author[3]{\fnm{Emmanuel} \sur{Soubies}}
\author[2]{\fnm{R{\'e}mi} \sur{Gribonval}}

\affil*[1]{\orgname{Ens de Lyon, CNRS, Universit{\'e} Claude Bernard Lyon 1, Inria, LIP, UMR 5668}, \orgaddress{\street{69342}, \city{Lyon Cedex 07}, \country{France}}}

\affil*[2]{\orgname{Inria, ENS de Lyon, CNRS, Universit{\'e} Claude Bernard Lyon 1, LIP, UMR 5668}, \orgaddress{\street{69342}, \city{Lyon Cedex 07}, \country{France}}}

\affil[3]{\orgname{Universit{\'e} de Toulouse, CNRS, IRIT, UMR 5505}, \orgaddress{\street{31000}, \city{Toulouse} \country{France}}}

\abstract{

Convex functions and their gradients play a critical role in mathematical imaging, from proximal optimization to Optimal Transport.
The successes of deep learning has led many to use \emph{learning-based methods}, where fixed functions or operators are replaced by learned neural networks.
Regardless of their empirical superiority, establishing rigorous guarantees for these methods often requires to impose structural constraints on neural architectures, in particular convexity.
The most popular way to do so is to use so-called Input Convex Neural Networks (ICNNs).
In order to explore the expressivity of ICNNs, we provide necessary and sufficient conditions for a ReLU neural network to be convex.
Such characterizations are based on product of weights and activations, and  write nicely for any architecture in the \emph{path-lifting} framework.
As particular applications, we study our characterizations in depth for 1 and 2-hidden-layer neural networks: we show that every convex function implemented by a 1-hidden-layer ReLU network can be also expressed by an ICNN with the same architecture; however this property no longer holds with more layers.
Finally, we provide a numerical procedure that allows an exact check of convexity for ReLU neural networks with a large number of affine regions.
}

\maketitle

\section{Introduction}
\label{section:introduction}

There exists a strong demand for neural networks that implement expressive convex functions.
Here are a few use cases.
In image processing, Plug-and-Play (PnP) methods \cite{venkatakrishnan2013plug} replace explicit proximal operators by learned denoisers using neural networks \cite{meinhardt2017learning,zhang2017beyond,zhang2021plug,pesquet2021learning}.
Since proximal operators are (sub)gradients of convex functions
 \cite{gribonval2020characterization}, an important ingredient in the mathematical analysis of PnP schemes is to enforce the learned denoiser to be the gradient of a convex function -- e.g. through automatic differentiation of a convex neural network \cite{hurault2023convergent}.
More generally, handling convex priors in inverse problems is often beneficial \cite{cohen2021has,fang2024whats}.
Another example is in optimal transport, where Brenier's theorem \cite{brenier1991polar} states that any Monge map which moves an absolutely continuous distribution to another distribution while minimizing a quadratic cost is the gradient of a convex function.
Hence, many applications which build upon the optimal transport framework seek to train convex neural networks to learn Monge maps \cite{korotin2019wasserstein,makkuva2020optimal,korotin2021neural,amos2022amortizing}.

Though there exist alternatives \citep{saremi2019approximating,chaudhari2023learning,hurault2023convergent}, Input Convex Neural Networks (ICNNs) \cite{amos2017input} are by far the most-used solution when it comes to implementing convex functions with neural networks.
ICNNs are a simple approach to enforce convexity which only requires slight modifications to popular architectures (e.g. ResNets, Convolutional Neural Networks, etc).
In short, clipping to zero the negative weights of all hidden layers of a $\relu$ neural network (except the first one) is sufficient to obtain a convex neural network.

Besides providing a very easy-to-implement approach to get convex neural networks and being widespread in practice, ICNNs are also grounded
on the theoretical side: \citet{chen2018optimal} show that ICNNs are rich enough to approximate any convex function over a compact domain in the sup norm.

But is the story over?
Not quite.
First, ICNNs have shown poor scalability properties \cite{korotin2023neural} and there remains a need for more expressiveness, as suggested by the introduction of several variants of
the standard ICNN architecture \cite{korotin2019wasserstein,vesseron2024neural}.
Second, the architecture that \citet{chen2018optimal} exhibit has one neuron per layer and \emph{as many layers as affine pieces in the implemented function}, which is far from what one would use in practice.
These observations lead us to the following:
\vspace{1mm}
\begin{question}
Given a $\relu$ neural network architecture with set depth and widths, do ICNNs cover all convex functions that are implementable with this architecture?
\end{question}
\vspace{1mm}

In this paper, we first give a positive answer for neural networks with one hidden-layer: we cannot hope to do better than ICNNs.
Yet, with 2 hidden layers, we can already exhibit a network
which does not abide by the non-negativity constraint of ICNNs but still implements a convex function.
Moreover, we show that \emph{no ICNN with the same number of layers and neurons per layer} can implement this function.
These results stem from our work to provide \emph{exact}
characterizations of $\relu$ neural network parameters (on general deep $\relu$ architectures that notably allow modern features such as skip connections: connections -- weighted or not -- between neurons of non-consecutive layers) which yield convex functions, beyond ICNNs.

Since the functions implemented by such networks are continuous and piecewise affine, the first stage of our analysis is to characterize the convexity of such functions.
This characterization is reminiscent of global convexity conditions for piecewise convex functions \citep{bauschke2016convexity}, but  is somehow made ``minimal'' by finely exploiting the affine nature of the pieces.

The paper is organized as follows: \Cref{sec:relu_nn} formalizes the considered neural network architectures, and provides (\Cref{prop:ICNN_counter_ex}) our simple, two-hidden-layer motivating counter-example of a convex $\relu$ network that  cannot be implemented by an ICNN with the same architecture.
Then, we characterize convexity of continuous piecewise affine functions (\Cref{section:cpwl}) and provide a first translation of these conditions to one-hidden-layer $\relu$ networks (\Cref{sec:onehiddenlayer}), showing
(\Cref{prop:ICNNallyouneed}) that
in this simple case ICNNs ``are all you need''.

The characterization of convexity conditions in the one- and two-hidden-layer case (\Cref{sec:onehiddenlayer,sec:twohiddenlayer}) serve as a gentle entrypoint to handle more general $\relu$ networks in \Cref{section:dag,section:cvx_relu_sufficient}.
The path-lifting formalism
of \cite{gonon2023approximation} for such DAG (Directed Acyclic Graph) networks is recalled in  \Cref{section:dag} where we translate the convexity conditions of \Cref{section:cpwl} to necessary conditions for DAG $\relu$ networks.
\emph{Sufficient} convexity conditions for DAG networks are expressed in \Cref{section:cvx_relu_sufficient}.
Finally, we use these theoretical results to derive in \Cref{sec:numerics} a practical algorithm that is able to check convexity on small networks.
 \section{Convex $\relu$ networks}
\label{sec:relu_nn}

Our goal throughout this paper is to study how well $\relu$ networks can express convex functions.
Two key observations will guide our study:
\begin{enumerate}[leftmargin=*,topsep=0.2cm]
      \item Functions implemented by $\relu$ networks are Continuous PieceWise Linear (CPWL) \cite{arora_understanding_2018}.
  \item {\em Convex} CWPL functions can uniformly approximate arbitrarily well any  Lipschitz-continuous convex function on a compact domain \cite{chen2018optimal}.
\end{enumerate}

In other words, \emph{the ability of a given $\relu$ neural network architecture to approximate convex functions comes down to its capability to represent convex CPWL functions}.
Therefore, we will focus on how effectively $\relu$ networks can implement convex CPWL functions.

\subsection{$\relu$ MLPs and ICNNs}

We consider a $\relu$ neural network associated to a function $f_{\theta} : \bbR^d \to \bbR$ which is a function of the input $x \in \bbR^d$ and some parameters $\theta$.

In the case of a multilayer perceptron (MLP) or its variants with weighted input skip-connections,
let $L \geq 2$ denote the number of (linear) layers.
The set of neurons is partitioned into $\Nin=N_{0}$  the input layer ($|\Nin|  = d$), $N_1, \dots N_{L-1}$ the hidden layers,  and $\Nout  = N_L$ the last (output) layer (to implement convex functions we focus on scalar-valued networks, so that $|N_{L}| =1$). We denote $H := \cup_{\ell=1}^{L-1} N_{\ell}$ the set of all hidden neurons.

The neural network function $f_{\theta}$ is implemented as follows: starting from an input $x \in \bbR^d$, we iteratively compute the \emph{pre-activation} $z_{\ell} \in \bbR^{N_{\ell}}$
of each layer $\ell$ to obtain the network output as \begin{align*}
    z_1(x, \theta) &  := \text{  affine function of
      $x$}, \\
    z_\ell(x, \theta) & :=  \text{ affine function of
     $\relu(z_{\ell-1})$ and $x$}, \\
    &  \qquad \ell  \in \{2, \dots, L \}, \\
   f_{\theta}(x) & := z_L(x,\theta),
\end{align*}
where $\theta$ represents the collection of weights and biases in the network's linear functions. We can also define the pre-activation \emph{ of a neuron}\footnote{It will be convenient to consider neurons as distinct elements of the set $N := \cup_{\ell=0}^{L} N_{\ell}$ (with $N_{0}:= N_{\mathrm{in}}$) rather than indices $1 \leq i \leq n_{\ell}:= |N_{\ell}|$. Hence the notation $\bbR^{N_{\ell}}$ instead of $\bbR^{n_{\ell}}$. Accordingly $W_{\ell} \in \bbR^{N_\ell \times N_{\ell-1}}$ and $b_\ell \in \bbR^{N_\ell}$.}  $\nu \in N_{\ell}$ as $z_\nu(x, \theta) = (z_\ell(x, \theta))_\nu$.

\begin{figure}[htp]
    \centering
    \includegraphics[width=\linewidth]{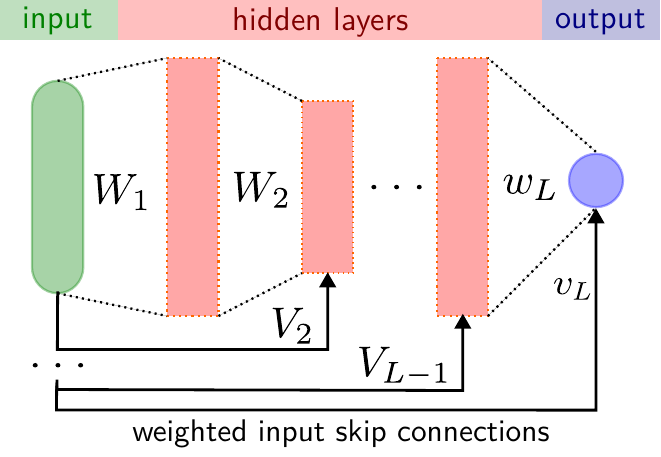}
    \caption{Architecture of a $\SMLPclass$ network.  $\ICNNclass$ imposes weights $W_2, \ldots, W_{L-1}, w_L$ to be non-negative entry-wise.}
    \label{fig:architectures}
\end{figure}
ICNNs are a subset of Multi Layers Perceptrons (MLPs) {\em including linear, weighted, input skip connections} with the additional requirement of non-negativity of certain weights.
Based on this difference, we now formalize
two classes of functions (see \Cref{fig:architectures}): the general class of MLPs with weighted input skip connections (unconstrained) and the constrained subset corresponding to ICNNs.
To specify an architecture, we denote $\bfn = (n_{\ell})_{\ell=1}^{L-1}$ the tuple of widths, \emph{excluding the width $n_{L}$ of the last layer} as we focus on scalar-valued networks ($n_{L}=1$).
Given $\bfn$:
\begin{itemize}[topsep=1mm]
    \item $\SMLPclass_d(\bfn)$ is the set of all functions  $ f : \bbR^d \to \bbR$ that can be implemented as a neural network with $L$ layers ($L-1$ hidden layers) and each hidden layer $N_{\ell}$, $\ell \in \{1 , \dots, L-1 \}$ has at most $n_\ell$ $\relu$ neurons.
    In addition are allowed \emph{weighted input skip connections}, linking the input $x$ to each layer $\ell \geq 1$.
    In this setting, each pre-activation for $\ell \geq 2$ writes as
    \begin{align*}
        z_\ell(x, \theta) & :=  W_\ell \relu(z_{\ell-1}(x, \theta)) + V_\ell x +  b_\ell,
    \end{align*}
    and $\theta = (W_1, b_1, W_2, V_2, b_2, \dots, w_L, v_L, b_L)$.
    \item $\ICNNclass_d(\bfn) \subseteq \SMLPclass_d(\bfn)$ corresponds to the subset of functions implemented by ICNNs:
    MLPs
     with weighted input skip connections {\em with the additional constraint} that the weight matrices/vectors $(W_2, \dots, W_{L-1}, w_{L})$ have non-negative entries.
    Noticeably, {\em no non-negativity constraint} is imposed on $W_{1}$ nor $V_{\ell}$, $2 \leq \ell \leq L$.
\end{itemize}
We will usually index weights matrices or bias vectors by neurons: e.g.
 given two neurons $\mu$ and $\nu$ in consecutive layers $N_{\ell - 1}$ and $N_{\ell}$ with $\ell \in \{ 1 , \dots, L \}$, the weight connecting $\mu$ to $\nu$ is $W_\ell[\nu, \mu]$.

We also refer to {\em standard} MLPs when we consider MLPs \emph{without input skip connections}: they can be retrieved from the previous classes by imposing zero weights on the input skip connections, {\em i.e.}, $V_{\ell}=0$.
Finally, in \Cref{section:dag}, we will consider a general family of $\relu$ neural networks with Directed Acyclic Graphs (DAGs) architectures.
\subsection{Convexity of ICNNs}

The convexity of ICNNs is based on rules for composition of convex functions: the composition $f \circ g$ of a convex function $g$ with a non-decreasing convex function $f$ is convex \cite[Sec. 3.2.4]{boyd2004convex}, and so is any linear combination of convex functions using non-negative coefficients.
By recursion over the layers, the entry-wise non-negative weight constraints $W_\ell \geq 0$ ($\ell \geq 2$) of ICNNs ensure convexity of each neuron's pre-activation, and of the overall function.
Indeed, the first layer implements an affine function, so each component of $z_1$ is convex.
Then, all the subsequent operations are non-decreasing and convex: using convex non-decreasing activation functions ($\relu$ in our case) and non-negative weight matrices preserves convexity.
Besides, this approach also allows for weighted input skip-connections of unconstrained sign between the input and any layer as the sum of a convex function and a linear function remains convex.

\subsection{Limits of ICNNs} \label{subsection:iccn_counter_ex}

While \citet{chen2018optimal} have shown that any convex CPWL function can be implemented with an ICNN, the corresponding implementation has a very peculiar architecture, of depth {\em equal to the number of affine pieces of the function}.
This seems potentially suboptimal, and indeed we show in this section that to implement some convex CPWL functions, ICNNs require more neurons or layers that their unconstrained  counterpart.
In other words, there are convex CPWL functions
in  $\SMLPclass_d(\bfn) \setminus \ICNNclass_d(\bfn)$.
To establish this, we simply provide a concrete example.

\begin{proposition}\label{prop:ICNN_counter_ex}
Let $f_\ex:\bbR^2 \to \bbR$ defined by
\begin{multline}
    \label{eq:Ex2DConvex}
    f_\ex \begin{pmatrix} x_1 \\ x_2 \end{pmatrix}  = \begin{pmatrix}
        1 & 1
    \end{pmatrix} \relu \left(\begin{pmatrix*}[r]
        -1 & 1 \\
        2 & 1
    \end{pmatrix*} \relu \begin{pmatrix} x_1 \\ x_2 \end{pmatrix} \right. \\
    \left. + \begin{pmatrix*}[l]
        -1 \\
        -0.5
    \end{pmatrix*} \right) \ .
\end{multline}
    This function is convex and CPWL (see \Cref{fig:counter_ex_plots}).
    By its definition, it can be implemented by a network in $\SMLPclass_2((2,2))$, {\em i.e.}, an MLP with  $2$ hidden layers, each made of $2$ neurons.
    Yet, it cannot be implemented by any ICNN network with the same architecture, i.e  belonging to $\ICNNclass_2((2,2))$.
\end{proposition}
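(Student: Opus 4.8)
The plan is to establish the three assertions in order, the last being the only substantial one.

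\emph{CPWL, membership in $\SMLPclass_2((2,2))$, and convexity.} The first two are immediate: formula \eqref{eq:Ex2DConvex} writes $f_\ex$ as a composition of affine maps and $\relu$, so it is CPWL, and that same formula is literally a network with two hidden layers of width two and no skip connections, hence $f_\ex \in \SMLPclass_2((2,2))$. For convexity I would write $f_\ex(x) = g(\relu(x_1),\relu(x_2))$ with $g(p,q) := \relu(q-p-1) + \relu(2p+q-0.5)$. Here $g$ is convex on $\bbR^2$ as a sum of two compositions of the convex non-decreasing $\relu$ with affine maps. The only obstruction to the textbook composition rule for $g(\relu(\cdot),\relu(\cdot))$ is that $g$ is not globally non-decreasing in $p$ because of the $-p$ in the first term; but on the orthant $\{p,q\ge 0\}$ one checks that whenever the first $\relu$ is active (so $q>p+1\ge 1$) the second is active too, so the $-1$ it contributes to $\partial_p g$ is dominated by the $+2$ from the second term, making $g$ non-decreasing in each argument there. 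Since $\relu\ge 0$ the arguments always lie in this orthant, and a segment-wise convexity argument then transfers convexity to $f_\ex$. Equivalently, one may certify convexity by exhibiting $f_\ex$ as the pointwise maximum of its six affine pieces, whose gradients are $(0,0),(2,0),(0,1),(0,2),(2,1),(1,2)$.

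\emph{$f_\ex \notin \ICNNclass_2((2,2))$: strategy.} I would argue by contradiction, assuming some ICNN $g\in\ICNNclass_2((2,2))$, with first layer $z_1 = W_1x+b_1$, second layer $z_2 = W_2\relu(z_1)+V_2x+b_2$ ($W_2\ge 0$), and output $g = w_3^\top\relu(z_2)+v_3^\top x+b_3$ ($w_3\ge 0$), satisfies $g=f_\ex$. The crucial object is the kink line of $f_\ex$ that appears, on the half-plane $x_1<0$, as the segment $\{x_2=1\}$ and, on $x_1>0$, as $\{x_2=x_1+1\}$: this is the breakpoint of the inner neuron $-\relu(x_1)+\relu(x_2)-1$, the carrier of the offending negative weight. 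Across this line the gradient of $f_\ex$ jumps by $(0,2)-(0,1)=(0,1)$ for $x_1<0$ and by $(1,2)-(2,1)=(-1,1)$ for $x_1>0$. Since such a jump equals $w_3[\pi]\,\nabla z_{2,\pi}$ for the second-layer neuron $\pi$ that turns on, we get that $w_3[\pi]\,\nabla z_{2,\pi}$ equals $(0,1)$ on the side $x_1<0$ and $(-1,1)$ on the side $x_1>0$ of the corner $(0,1)$; as $w_3[\pi]>0$, the first coordinate of $\nabla z_{2,\pi}$ is $0$ on the left and strictly negative on the right.

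\emph{Exploiting non-negativity, and the main obstacle.} Because $z_{2,\pi}$ is affine away from the first-layer breakpoint lines, its gradient changes only when a first-layer neuron toggles; crossing such a line in the direction $+e_1$ of increasing $x_1$ changes $\nabla z_{2,\pi}$ by $\mathrm{sign}((u_i)_1)\,W_2[\pi,i]\,u_i$, where $u_i$ is the $i$-th row of $W_1$, and the first coordinate of this increment is $W_2[\pi,i]\,|(u_i)_1|\ge 0$ precisely because $W_2\ge 0$. Hence, moving horizontally from just left to just right of the bend, the first coordinate of $\nabla z_{2,\pi}$ can only \emph{increase}, contradicting the strict decrease forced in the previous paragraph; this would require a negative entry of $W_2$, the desired contradiction. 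The part that needs real care — and which I expect to be the main obstacle — is justifying that this kink is carried by one and the same second-layer neuron on both sides of its bend (rather than being the meeting point of two distinct breakpoints) and controlling the first-layer geometry near $(0,1)$. Both are settled by a short case analysis on the possible first-layer breakpoint lines, read off from the fact that the two kink polylines of $f_\ex$ bend only on first-layer lines (their bend points are $(0,1)$, $(0,0.5)$, $(0.25,0)$), exploiting the tight budget of only two neurons in each hidden layer.
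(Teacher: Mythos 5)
Your preliminary claims (CPWL, membership in $\SMLPclass_2((2,2))$, and convexity, via either the restricted-monotonicity composition argument or the representation of $f_\ex$ as a maximum of its six affine pieces) are correct. The impossibility argument, however, has a fatal sign gap at its core. When the kink of $f_\ex$ along $\{x_2=1,x_1<0\}\cup\{x_2=x_1+1,x_1>0\}$ is carried by a second-layer neuron $\pi$, the gradient jump of $f_\ex$ across the kink (value above minus value below) equals $+w_3[\pi]\nabla z_{2,\pi}$ only if $z_{2,\pi}$ is positive \emph{above} the kink, i.e.\ if the neuron ``turns on'' as you cross upward. If instead $z_{2,\pi}$ is positive \emph{below} the kink, the jump equals $-w_3[\pi]\nabla z_{2,\pi}$, and all your sign conclusions flip: the first coordinate of $\nabla z_{2,\pi}$ is then $0$ on the left of the corner and \emph{strictly positive} on the right, which is perfectly consistent with the horizontal monotonicity you correctly derive from $W_2\ge 0$. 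Nothing forces the ``turns on'' case: the monotonicity of $\nabla f_\ex$ across the kink holds either way, and the internal sign of $z_{2,\pi}$ is not observable from $f_\ex$.

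This loophole is not hypothetical. The ICNN-legal neuron
\begin{equation*}
x \mapsto \relu\left(1+\relu(x_1)-x_2\right),
\end{equation*}
obtained with first-layer row $(1,0)$ and zero bias, second-layer weight $1\ge 0$ on $\relu(x_1)$, skip-connection row $V_2[\pi,:]=(0,-1)$ (unconstrained in an ICNN), bias $1$, and output weight $w_3[\pi]=1\ge 0$, has zero set exactly equal to your kink polyline and reproduces both observed jumps $(0,1)$ and $(-1,1)$; here $\nabla z_{2,\pi}=(0,-1)$ left of the corner and $(1,-1)$ right of it. So the same neuron carries both rays (your anticipated ``main obstacle'' is settled affirmatively), yet no contradiction arises from this corner, so your corner argument cannot be repaired locally. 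This is precisely mirrored in the paper's proof: after parametrizing this kink as $\alpha(x_2-1)=0$, it reaches a contradiction only in the case $\alpha>0$ (your ``turns on'' case), and in the case $\alpha<0$ it must keep going --- identifying the other second-layer neuron's bent hyperplane with the second kink polyline, ruling out one sign there by the constraint $W_2\ge 0$, and then deriving the final contradiction from a genuinely \emph{global} fact: $f_\ex\equiv 0$ on the central region $R_5$, where the first second-layer neuron is active with four distinct affine pieces that the linear skip term $v_3^\top x+b_3$ cannot cancel. That global step, together with the degenerate (rank-one $W_1$) case, is the actual heart of the impossibility proof and is entirely absent from your outline.
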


The proof is in \Cref{app:counter_ex}.
This example was constructed from a careful study of necessary and sufficient conditions for convexity of CPWL functions, that we detail in the next section.

 \section{Convex CPWL functions}
\label{section:cpwl}
Our characterization of the convexity of functions implemented by $\relu$ networks leverages a
general machinery to characterize convexity of CPWL functions, that we establish in this section.

\subsection{CPWL functions: definition}
\label{sub:cpwl_def}

Several ways of defining CPWL functions can be found in the literature: either from their linear pieces \cite{goujon2024number}, or from the notion of polyhedral complex and cells \cite{grigsby2022transversality,masden2022algorithmic}, or also from the notion of polyhedral partition and regions \cite{gorokhovik1994piecewise}.
We follow the latter approach as it removes some terminology related to polyhedra while allowing a rather simple connection with $\relu$ networks.
A brief complementary reminder is provided in \Cref{app:background_cpwl}.

\begin{definition}[Polyhedral partition, \cite{gorokhovik1994piecewise}]\label{def:poluhedron_part}
        A finite family of convex polyhedral sets $(R_k)_{k=1}^K$ is said to form \emph{a polyhedral partition} of $\bbR^d$ if
        \begin{enumerate}[label=(\roman*)]
            \item $\bbR^d = \cup_{k=1}^K R_k$,
            \item $\intt(R_k) \neq \emptyset$ for every $k \in \{1, \dots, K\}$,
            \item $\intt(R_k) \cap \intt(R_\ell) = \emptyset$ for every $k,\ell \in \{1, \dots, K \}, k \neq \ell$.
        \end{enumerate}
        The sets $(R_k)_{k=1}^K$ are called \emph{regions} of $\bbR^d$.
\end{definition}

\begin{definition}[CPWL function]
    \label{def:cpwl}
    A function $f : \bbR^d \to \bbR$ is said to be continuous and piecewise linear (CPWL) if it is continuous and if there exists a polyhedral partition $(R_k)_{k=1}^K$ of $\bbR^d$ such that $f_{|R_k}$ is affine for each $k \in \{1, \dots, K \}$.
\end{definition}
In \Cref{def:cpwl}, the polyhedral partition $(R_k)_{k=1}^K$ is not uniquely defined.

\begin{definition}[Compatible partition]
   A \emph{compatible partition} for a CPWL function $f$ refers to any polyhedral partition of $\bbR^d$ for which $f$ is affine on each set of the partition.
\end{definition}

In what follows, we introduce a few notions that will come in handy when studying the convexity of CPWL functions.
We have not found any equivalent to these definitions in the literature.

\begin{definition}[Neighboring regions]
    \label{def:neighbors}
    Two regions $R_k$ and $R_l$ of a polyhedral partition are  \emph{neighboring regions} if $k \neq \ell$ and $R_k$ and $R_\ell$ share a {\em facet}, i.e. the affine hull of $R_k \cap R_\ell$ is an affine hyperplane.
    We denote $R_k \sim R_\ell$ or $k \sim \ell$ in short the neighboring relationship between $R_k$ and $R_\ell$.
\end{definition}

\begin{definition}[Frontiers]
    \label{def:frontiers}
    Given a polyhedral partition $(R_k)_{k=1}^K$,
    the {\em frontier} $F_{k,\ell}$ between neighboring regions $R_k$ and $R_\ell$ is  the relative interior of $R_{k} \cap R_{\ell}$. \end{definition}

A visualization of neighbouring regions and their frontier is given in \Cref{fig:proof_cvx_cpwl_case_1}.

\begin{remark}
    \addrebut{By \Cref{def:polyhedron}, the polyhedra considered in this paper are \emph{convex polyhedra}, \emph{i.e.} they are defined as the intersection of finitely many closed half-spaces.
    There exists in the literature other definitions that do not enforce convexity.
    However, note that any non-convex polyhedral partition can always be subdivided into a convex polyhedral partition.}
\end{remark}

\subsection{``Minimal'' convexity conditions}

We consider in this section a CPWL function $f : \bbR^d \to \bbR$ and investigate convexity conditions.
Given a compatible polyhedral partition $(R_k)_{k=1}^K$, we denote  $u_k \in \bbR^d$ and  $b_k \in \bbR$ respectively the slope of $f_{|R_k}$ and its intercept:
\begin{equation}\label{eq:DefAffinePiece}
    f_{|R_k} : x \mapsto \langle u_k, x \rangle + b_k \ .
\end{equation}
This implies that the gradient of $f$ is well-defined at any $x \in \intt(R_k)$, with $\nabla f(x) = u_{k}$.

Any convex function $f$ must satisfy a well-known monotonicity condition  \citep[Example 20.3]{bauschke2011convex}
   \begin{equation}
        \label{eqn:monotonicity}
        \langle \nabla f(x)-\nabla f(y), x - y \rangle \geq 0
    \end{equation}
at every points $x,y \in \bbR^{d}$ where $f$ is differentiable. To go further and obtain necessary and sufficient conditions,
\citet{bauschke2016convexity} already studied conditions under which a piecewise-defined and piecewise-convex function is globally convex. CPWL functions fall under this framework.
More precisely, \citet{bauschke2016convexity}  provide sufficient conditions that only need to be checked at boundaries between regions. They also show that a finite number of points can be ignored when checking convexity.
Yet, as CPWL functions have {\em the same slope} on each of their affine components, we are able to derive ``minimal'' conditions for this specific case: convexity can be checked {\em at only a finite number of points}.
    The following proposition (\ref{prop:cvx_cpwl_item1} $\Leftrightarrow$\ref{prop:cvx_cpwl_item4}) asserts that {\em it suffices to check \eqref{eqn:monotonicity} on a finite number of points} (one pair of points around each frontier $F_{k,\ell}$) to assess convexity.

\begin{proposition} \label{prop:cvx_cpwl}
Consider a CPWL function $f : \bbR^d \to \bbR$, and any compatible partition $(R_k)_{k=1}^K$.
Denote $ \cF := \bigcup_{k \sim \ell} F_{k,\ell}$ the set of all frontier points.
The following are equivalent:
\begin{enumerate}[label=(\roman*)]
    \item \label{prop:cvx_cpwl_item1} The function $f$ is convex on $\bbR^d$.
    \item \label{prop:cvx_cpwl_item2}
    For all frontier points $x \in \mathcal{F}$ and $v \in \bbR^d$, there is $\epsilon > 0 $ s.t. $f_{\mid [x - \epsilon v, x+ \epsilon v]}$ is convex.
    \item  \label{prop:cvx_cpwl_item3}
    For all neighboring regions $R_k \sim R_\ell$, there are $x \in F_{k,\ell}$, $v \in \bbR^d \setminus \color{blue} \Span((R_k \cap R_\ell) - x)$  and $\epsilon > 0 $ such that $f_{\mid [x - \epsilon v, x+ \epsilon v]}$ is convex.
    \item (Minimal characterization) \label{prop:cvx_cpwl_item4}
    For all neighboring regions  $R_k \sim R_\ell$, there are $x_k \in \intt( R_k)$, $x_\ell \in  \intt(R_\ell)$ such that\footnote{$[x,y]$ denotes a line segment, see \eqref{eq:DefLineSegment} in \Cref{appendix:background_cvx}.}  $[x_k, x_\ell] \cap F_{k,\ell} \neq \emptyset$ and \begin{equation}
        \langle \nabla f(x_{k})-\nabla f(x_{\ell}) ,  x_k - x_\ell \rangle \geq 0 \ . \label{ineq:cvx_slopes}
    \end{equation}
    \item (Local characterization) \label{prop:cvx_cpwl_item5}  For all frontier points $x \in \mathcal{F}$ there exists a neighborhood $\cN$ of $x$ such that for every $x_1, x_2 \in \cN \setminus \cF$,  $f$ is differentiable at $x_1$ and $x_{2}$ and
    \begin{equation}
        \label{eqn:local_cvx_ineq}
        \langle \nabla f(x_1)-\nabla f(x_2) , x_1 - x_2 \rangle \geq 0 \ .
    \end{equation}
\end{enumerate}
\end{proposition}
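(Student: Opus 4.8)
The plan is to establish the cycle (i) $\Rightarrow$ (ii) $\Rightarrow$ (iii) $\Rightarrow$ (iv) $\Rightarrow$ (i), and then to graft the local characterization (v) onto it through (i) $\Rightarrow$ (v) $\Rightarrow$ (iv). Several links are essentially formal. For (i) $\Rightarrow$ (ii), a convex function on $\bbR^d$ restricts to a convex function on every line segment, so the conclusion holds for \emph{all} $x$ and $v$. The implication (ii) $\Rightarrow$ (iii) is just a specialization: since $R_k \sim R_\ell$ means $R_k \cap R_\ell$ spans a hyperplane, the direction space $\Span((R_k\cap R_\ell)-x)$ has dimension $d-1$, so a transverse $v$ exists and (ii) applies to it. For (i) $\Rightarrow$ (v) I would pick, around a frontier point $x \in F_{k,\ell}$, a ball $\cN$ so small that it meets only the two regions $R_k,R_\ell$ and their shared facet; this is possible precisely because $x$ lies in the \emph{relative interior} of the facet, so no lower-dimensional face intrudes. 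On such $\cN$ every point of $\cN\setminus\cF$ lies in $\intt(R_k)\cup\intt(R_\ell)$, hence is a differentiability point, and \eqref{eqn:local_cvx_ineq} is the global gradient monotonicity \eqref{eqn:monotonicity} of the convex $f$. Conversely, (v) $\Rightarrow$ (iv) follows by choosing inside $\cN$ two points $x_k\in\intt(R_k)$, $x_\ell\in\intt(R_\ell)$ symmetric about $x$, so that $[x_k,x_\ell]$ crosses $F_{k,\ell}$ and (v) yields exactly \eqref{ineq:cvx_slopes}.

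For (iii) $\Rightarrow$ (iv): given the transverse direction $v$ at $x\in F_{k,\ell}$, shrinking $\epsilon$ ensures $x+\epsilon v$ and $x-\epsilon v$ lie in $\intt(R_k)$ and $\intt(R_\ell)$ respectively, with the segment crossing the facet at $x$. The restriction $t\mapsto f(x+tv)$ is then a $1$D piecewise-affine map with two pieces of slopes $\langle u_k,v\rangle$ and $\langle u_\ell,v\rangle$; its convexity (the hypothesis) forces the slope to be non-decreasing, i.e. $\langle u_k-u_\ell,v\rangle\geq 0$, which after multiplying by $2\epsilon>0$ is exactly \eqref{ineq:cvx_slopes} with $x_k=x+\epsilon v$, $x_\ell=x-\epsilon v$.

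The main obstacle is the passage (iv) $\Rightarrow$ (i), i.e. recovering global convexity from one inequality per facet. The first key step is to turn each instance of (iv) into a point-independent sign condition on the slope jump. Because $f$ is continuous and its two affine pieces agree on $R_k\cap R_\ell$, the affine map $\langle u_k-u_\ell,\cdot\rangle+(b_k-b_\ell)$ vanishes on a set spanning the separating hyperplane $H_{k,\ell}$; hence $u_k-u_\ell=\lambda_{k\ell}\,n_{k\ell}$ is normal to $H_{k,\ell}$. Orienting $n_{k\ell}$ from $R_\ell$ towards $R_k$, any $x_k\in\intt(R_k)$ and $x_\ell\in\intt(R_\ell)$ satisfy $\langle n_{k\ell},x_k-x_\ell\rangle>0$, so \eqref{ineq:cvx_slopes} is equivalent to the single scalar condition $\lambda_{k\ell}\geq 0$, \emph{independent of the chosen points}. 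This independence is precisely what makes the characterization ``minimal''.

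The second key step upgrades these per-facet conditions to global convexity. I would call a segment $[a,b]$ \emph{generic} if it avoids every face of codimension $\geq 2$ and meets each facet only in its relative interior; since the union of codimension-$\geq 2$ faces has dimension $\leq d-2$, the cone it subtends from a fixed $a$ is Lebesgue-null in $\bbR^d$, so generic pairs $(a,b)$ are dense. Along a generic segment $f$ restricts to a $1$D piecewise-affine $g$ whose successive pieces sit in regions $R_{k_0},\dots,R_{k_m}$, and at each crossing the slope of $g$ jumps by $\lambda_{k_p k_{p+1}}\langle n_{k_p k_{p+1}},b-a\rangle\geq 0$, the inner product being positive because the segment crosses in the direction of the oriented normal. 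Thus $g$ has non-decreasing slope, hence is convex, yielding the chord inequality $f((1-t)a+tb)\leq(1-t)f(a)+tf(b)$ for generic $a,b$; density of generic pairs together with continuity of $f$ then extends it to all $a,b$, proving (i). The points I expect to require the most care are the genericity/density argument (in particular, checking that a generic segment stays inside a single region between consecutive crossings) and the bookkeeping of normal orientations that keeps every slope jump non-negative.
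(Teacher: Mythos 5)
Your proposal is correct and follows essentially the same route as the paper: your Step~1 (the slope jump $u_k-u_\ell$ is normal to the shared facet, turning \eqref{ineq:cvx_slopes} into a point-independent sign condition) is exactly the key argument inside the paper's proof of (iv)$\Rightarrow$(ii), and your Step~2 (generic segments avoiding the codimension-$\geq 2$ skeleton, non-decreasing $1$D slopes, then a density/continuity closure) matches the paper's proof of (ii)$\Rightarrow$(i), which handles the pathological segments by perturbation and pointwise limits of convex functions instead of your density-of-generic-pairs argument. The only structural differences are cosmetic: you merge the paper's chain (iv)$\Rightarrow$(ii)$\Rightarrow$(i) into a direct (iv)$\Rightarrow$(i), and you close the cycle via (v)$\Rightarrow$(iv) where the paper proves (v)$\Rightarrow$(iii).
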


The proof  in \Cref{appendix:proof_cvx_cpwl} uses tools from convex analysis reminded in
 \Cref{appendix:background_cvx}.

 \section{The one-hidden-layer case}
\label{sec:onehiddenlayer}
In this section, we use \Cref{prop:cvx_cpwl} to show that, for one hidden-layer networks, any convex neural network is an ICNN.
For this we first introduce a few additional notations and definitions for $\relu$ networks that are valid {\em beyond the one-hidden-layer case} and will be re-used throughout the paper.

\subsection{Preliminaries} \label{sec:prelim}

Checking the convexity of a CPWL function using \Cref{prop:cvx_cpwl} involves checking certain properties near {\em frontiers between regions}. In the case of $\relu$ networks, such frontiers are related to {\em changes in neuron activations}.

We thus start by introducing some notations regarding the activations of neurons of a neural network.
Let $f_\theta \in \SMLPclass_d(\bfn)$.
It is a CPWL function \cite{arora_understanding_2018}, as the composition and addition of affine and continuous piecewise linear functions.
Given an input $x \in \bbR^d$, if $\cN$ is a neighborhood of $x$ where all pre-activations $z_\nu$ of all hidden neurons $\nu \in H$ have constant sign, then $f_\theta$ is affine on $\cN$.
This observation leads us to define the (binary) activation of a hidden neuron $\nu \in H$:
\begin{equation}
    a_{\nu}(x,\theta) := \mathbf{1}_{z_\nu(x, \theta) > 0}  \in \{0, 1\}.
\end{equation}
We define similarly the layer-wise activation pattern $\bfa_\ell := (a_\nu)_{\nu \in N_\ell} \in \{0,1\}^{N_\ell}$ for the $\ell-$th hidden layer of $\relu$ neurons ($1 \leq \ell \leq L-1$).
Note that the function $a_{\nu}(\cdot, \theta)$ is piecewise constant.
\addrebut{See \cite{hanin2019deep} or \cite{stock_embedding_2022} for similar definitions of activation patterns. }
To verify convexity of $f_{\theta}$, we want to look at input points where the slope of $f_\theta$ changes, i.e. points around which the activation function of at least one neuron is not constant.
To study {\em necessary} conditions for convexity, we will even focus on
points where \emph{only} the activation of one specific neuron changes.

\begin{definition}[Isolated neuron]
    \label{def:isolated_neurons}
    Given \rmrebut{a parameter} \addrebut{parameters} $\theta$, for each hidden
    neuron $\nu  \in H:=\cup_{\ell=1}^{L-1}N_{\ell}$, we define\footnote{this set depends on $\theta$, but for the sake of brevity we omit it in the notation, as $\theta$ should always be clear from context} $\fX_\nu$ as the set of input points for which in every small enough neighborhood\footnote{$B(x,\epsilon)$ denotes the open ball ({\em e.g.} with respect to the Euclidean metric) of radius $\epsilon>0$ centered at $x$} , \emph{only} the activation of neuron $\nu$ switches
    \begin{multline*}
        \fX_\nu := \Bigg\{x \in \bbR^d:  \exists \epsilon_0>0\ : \forall \, 0<\epsilon \leq \epsilon_0,
         \\
  \begin{cases}
            a_\mu(\cdot,\theta) \text{ constant on } B(x, \epsilon)  \ \forall \mu \neq \nu,  &\\
            a_\nu(\cdot,\theta) \text{ not constant on } B(x, \epsilon).&
        \end{cases} \Bigg\} \ .
    \end{multline*}
    A hidden neuron $\nu$ is said to be \emph{isolated} if $\fX_\nu \neq \emptyset$.
\end{definition}

\subsection{ICNNs are all you need}

Consider a one-hidden-layer standard MLP (thus \emph{without} input skip-connection) and
 denote $f_\theta$ the CPWL function it implements, which thus reads:
\begin{equation}\label{eq:twolayer}
    f_\theta  : x \in \bbR^{d} \mapsto w_2^\top \relu(W_1 x + b_1) + b_2 \in \bbR
\end{equation}
with $w_{2},b_{1} \in \bbR^{N_{1}}$ and $W_{1} \in \bbR^{N_{1} \times N_{0} }$.

We prove non-negativity of the last layer $w_2$ associated to isolated neurons when $f_{\theta}$ is convex.
\begin{lemma}[Non-negativity of the last layer]
    \label{lemma:last_layer}
    Assume that $f_{\theta}$ expressed as~\eqref{eq:twolayer} is convex.
    Then, for any \emph{isolated} hidden neuron of the (only) hidden layer $\nu \in
    N_{1}$, the corresponding weight in the last layer must be non-negative: $w_2 [\nu] \geq 0 $.
\end{lemma}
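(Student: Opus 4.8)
The plan is to use the fact that an \emph{isolated} neuron forces the activation of $\nu$ (and of $\nu$ only) to flip across a genuine hyperplane, and to read off the single gradient jump that $f_\theta$ undergoes there. First I would fix an isolated hidden neuron $\nu$ and a point $x_0 \in \fX_\nu$ (nonempty by definition of isolated). Writing $\nabla z_\nu \in \bbR^d$ for the gradient of the affine pre-activation $z_\nu(\cdot,\theta)$ (i.e. the $\nu$-th row of $W_1$), I would first note that $\nabla z_\nu \neq 0$: otherwise $z_\nu$ would be constant and $a_\nu$ could never switch, contradicting $x_0 \in \fX_\nu$. By the definition of $\fX_\nu$, choose $\epsilon>0$ small enough that on $B(x_0,\epsilon)$ every $a_\mu$ with $\mu \neq \nu$ is constant while $a_\nu$ is not. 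On this ball each term $w_2[\mu]\relu(z_\mu)$, $\mu \neq \nu$, equals either $w_2[\mu] z_\mu$ or $0$ throughout $B(x_0,\epsilon)$ and is therefore affine, so \eqref{eq:twolayer} yields the local decomposition
\[
f_\theta(x) = g(x) + w_2[\nu]\,\relu(z_\nu(x)), \qquad x \in B(x_0,\epsilon),
\]
where $g$ gathers $b_2$ and all the affine contributions of the other neurons, hence is affine on $B(x_0,\epsilon)$.

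Next I would exploit this decomposition. Since $a_\nu$ is not constant and $\nabla z_\nu \neq 0$, the hyperplane $\{z_\nu = 0\}$ splits $B(x_0,\epsilon)$ into the two nonempty open half-balls $\{z_\nu>0\}$ and $\{z_\nu<0\}$, on each of which $f_\theta$ is affine, hence differentiable, with $\nabla f_\theta = \nabla g + w_2[\nu]\,\nabla z_\nu$ on the former and $\nabla f_\theta = \nabla g$ on the latter. Picking any $x_1 \in B(x_0,\epsilon)\cap\{z_\nu>0\}$ and $x_2 \in B(x_0,\epsilon)\cap\{z_\nu<0\}$, the monotonicity inequality \eqref{eqn:monotonicity}, which holds at differentiability points of any convex $f_\theta$, gives
\[
0 \le \langle \nabla f_\theta(x_1)-\nabla f_\theta(x_2),\, x_1-x_2\rangle = w_2[\nu]\,\langle \nabla z_\nu,\, x_1-x_2\rangle .
\]
Because $z_\nu$ is affine with gradient $\nabla z_\nu$, one has $\langle \nabla z_\nu, x_1-x_2\rangle = z_\nu(x_1)-z_\nu(x_2) > 0$ since $z_\nu(x_1)>0>z_\nu(x_2)$. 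Dividing by this strictly positive quantity forces $w_2[\nu] \ge 0$, which is the claim. (This is precisely the minimal characterization \ref{prop:cvx_cpwl_item4} of \Cref{prop:cvx_cpwl} applied to the frontier $\{z_\nu=0\}$ carried by neuron $\nu$.)

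I do not expect a serious obstacle here: the argument is a direct application of gradient monotonicity to the one gradient jump produced by $\nu$. The only points demanding care are the bookkeeping ones, namely (i) verifying that the isolated-neuron hypothesis genuinely yields the local decomposition above, i.e. that \emph{all} neurons $\mu \neq \nu$ contribute affinely on $B(x_0,\epsilon)$, and (ii) exhibiting straddling points $x_1,x_2$ in the open ball with strictly opposite signs of $z_\nu$, which follows from $\nabla z_\nu \neq 0$ together with the openness of $B(x_0,\epsilon)$. Both are elementary consequences of \Cref{def:isolated_neurons}.
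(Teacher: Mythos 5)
Your proof is correct and follows essentially the same route as the paper's: localize to a ball given by the isolated-neuron definition, identify the gradient jump across the switching hyperplane of $\nu$ (your $w_2[\nu]\,\nabla z_\nu$ is exactly the paper's $w_2[\nu]\,W_1[\nu,:]$), and apply gradient monotonicity \eqref{eqn:monotonicity} together with $z_\nu(x_1)-z_\nu(x_2)>0$ to conclude $w_2[\nu]\geq 0$. The only cosmetic difference is that you package the other neurons' contributions into an explicit affine function $g$, whereas the paper writes $f_\theta$ via $\diag(\bfa_1)$ and subtracts the two activation patterns, which is the same computation.
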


\begin{proof}
    Consider an isolated neuron $\nu \in N_{1}$, $x \in \fX_\nu \neq \emptyset$, and a neighborhood $B(x,\epsilon)$ where only the activation of $\nu$ changes, as given by \Cref{def:isolated_neurons}.
    Using that $\relu(t) = \mathbf{1}_{t>0} t$ it is standard to rewrite $f_{\theta}$ as
    \begin{align}
        f_\theta : x \mapsto w_2^\top \diag(\bfa_1(x,\theta)) (W_1 x + b_1) + b_2 \, .
    \end{align}
    The neuron $\nu$ being isolated, the layerwise activation pattern $\bfa_{1}(\cdot,\theta) \in \{0,1\}^{N_{1}}$ takes two different values on $B(x,\epsilon)$, denoted $\bfa^+$ and $\bfa^-$ with $\bfa^+_{\mu} = \bfa^-_{\mu}$ for all hidden neurons $\mu \in N_{1} \setminus \{\nu\}$
    (because $a_\mu(\cdot,\theta)$ is constant on $B(x, \epsilon)$).
    Wlog, we assume $\bfa^+_{\nu} = 1$ and $\bfa^-_{\nu} = 0$.
     By construction, there exists $x^+,x^- \in B(x,\epsilon)$ such that $\bfa_{1}(x^+,\theta) = \bfa^{+}$, $\bfa_{1}(x^-,\theta) = \bfa^{-}$ and that $f_{\theta}$ is differentiable at both points. Denoting  $u^{+} = \nabla f_{\theta}(x^{+})$, $u^{-} = \nabla f_{\theta}(x^{-})$,
        we have:
    \begin{align*}
        (u^+ - u^-)^\top & = w_2^\top \left( \diag(\bfa^+) - \diag(\bfa^-) \right) W_{1}  \\
        & =  w_2 [\nu] W_1 [\nu,:] \, .
    \end{align*}
     Since $\bfa^+_\nu = 1$, the pre-activation of $\nu$ at $x^{+}$ is positive: $z_\nu(x^+,\theta) > 0 $.
    Similarly, because $\bfa^-_\nu = 0$ we have $z_\nu(x^-,\theta) \leq 0$.
    This rewrites
    \begin{align*}
        (W_1 x^+ + b_1)[\nu] & > 0 \, , \qquad
        (W_1 x^- + b_1)[\nu] & \leq 0 \, ,
    \end{align*}
    from which we get
    \begin{equation}
         W_{1} [\nu, : ] (x^+ -x^-) > 0 \, .
    \end{equation}
    Combining with the above expression of $u^+-u^-$, we obtain
    \begin{align*}
      \langle u^+ - u^-, x^+ - x^- \rangle = w_2[\nu]  \underbrace{W_1 [\nu,:](x^+ - x^-)}_{>
      0 }.
    \end{align*}
   Since $f_\theta$ is convex, by monotonicity of the gradient (cf \Cref{eqn:monotonicity}) it holds $\langle u^+ - u^-, x^+ - x^- \rangle \geq 0$ from which we get $w_2 [\nu]   \geq 0$.
\end{proof}

\begin{remark}
    Even though we focus in this section on the one-hidden-layer case, \Cref{lemma:last_layer} extends to any
    standard MLP
     with $L$ layers with essentially the same proof.
	In \Cref{prop:nec_relu_cvx} we state the most general result which applies to $\relu$ networks of any depth, in their general DAG (Directed Acyclic Graph) form.
\end{remark}

We have just shown that if $f_\theta$ is convex, any weight $w_2[\nu]$  corresponding to an \emph{isolated} hidden neuron $\nu$ (i.e. a neuron which can be activated independently of the other neurons) must be non-negative.
Before trivially extending the result to one-hidden-layer $\relu$ networks {\em with input skip connections}, we characterize one-hidden-layer networks where {\em all} hidden neurons $\nu \in N_{1}$ are isolated.
\begin{assumption}
    \label{assumption:1_layer_colin}
    For \emph{all} neurons $\nu$ of the first hidden layer $N_{1}$, we have $W_1[\nu,:] \neq 0$.
    For all pairwise distinct neurons $\nu_1 \neq \nu_2$ of this
     layer, the corresponding augmented
     rows $\left( W_1[\nu_1,:]  \ \vert \   b_1[\nu_1] \right)  \in \bbR^{N_{1}+1}$ and $ \left( W_1[\nu_2,:] \ \vert \ b_1[\nu_2] \right)\in \bbR^{N_{1}+1}$ are not co-linear.
\end{assumption}

\addrebut{The next results serve as steps towards the main proposition \ref{prop:ICNNallyouneed}.
They show that under \Cref{assumption:1_layer_colin}, every hidden neuron is isolated, which allows to use \Cref{lemma:last_layer}.
This finally leads to the equivalence between convex one-hidden-layer $\relu$ networks and ICNNs.}

 \begin{lemma}\label{lem:nosisolatedcaract}
    Consider a one-hidden-layer $\relu$ network, possibly with weighted input skip connections.
    \Cref{assumption:1_layer_colin} holds if and only if every neuron $\nu \in N_1$ is isolated.
 \end{lemma}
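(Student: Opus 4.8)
The plan is to translate isolation into a purely geometric statement about the \emph{activation hyperplanes}
$H_\nu := \{x \in \bbR^d : W_1[\nu,:]\,x + b_1[\nu] = 0\}$, $\nu \in N_1$, and then to recognise \Cref{assumption:1_layer_colin} as exactly the condition that makes each $H_\nu$ a genuine hyperplane not swallowed by the others. The first observation is that for a one-hidden-layer network the pre-activation of a hidden neuron is the affine map $x \mapsto W_1[\nu,:]\,x + b_1[\nu]$, which depends only on $(W_1,b_1)$ and \emph{not} on any input skip-connection weight $V_2$; hence the activation $a_\nu$, and the whole analysis below, is insensitive to the presence of skip connections. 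If $W_1[\nu,:]=0$ then $z_\nu$ is constant and $a_\nu$ is globally constant, so $\nu$ can never switch; if $W_1[\nu,:]\neq 0$ then $a_\nu$ fails to be locally constant at $x$ precisely when $x \in H_\nu$, a $(d-1)$-dimensional affine hyperplane.

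Next I would make \Cref{def:isolated_neurons} explicit in this language. A point $x$ lies in $\fX_\nu$ iff $a_\nu$ is non-locally-constant at $x$ while every $a_\mu$, $\mu\neq\nu$, is locally constant there; by the previous paragraph this says exactly that $x\in H_\nu$ and $x\notin H_\mu$ for all $\mu\neq\nu$ with $W_1[\mu,:]\neq 0$. Adopting the convention $H_\mu=\emptyset$ when $W_1[\mu,:]=0$, this yields the clean characterisation
\[
\fX_\nu = H_\nu \setminus \bigcup_{\mu\neq\nu} H_\mu ,
\]
so that $\nu$ is isolated iff $W_1[\nu,:]\neq 0$ and $H_\nu \not\subseteq \bigcup_{\mu\neq\nu} H_\mu$.

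With this reduction both implications are short. For the forward direction, suppose \Cref{assumption:1_layer_colin} holds: every $W_1[\nu,:]\neq 0$, so each $H_\nu$ is a true hyperplane, and non-colinearity of the augmented rows forces $H_\mu\neq H_\nu$ for $\mu\neq\nu$, whence $H_\mu\cap H_\nu$ is a proper affine subspace of $H_\nu$ of dimension at most $d-2$. Since a finite union of affine subspaces of dimension $\le d-2$ cannot cover the $(d-1)$-dimensional set $H_\nu$, there is a point of $H_\nu$ outside all other $H_\mu$, i.e. $\fX_\nu\neq\emptyset$ and $\nu$ is isolated. For the converse I argue by contraposition: if the assumption fails, then either some row $W_1[\nu,:]=0$ -- in which case $\fX_\nu=\emptyset$ directly -- or two augmented rows indexed by $\nu_1,\nu_2$ are co-linear. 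Co-linearity with both weight-rows nonzero gives $H_{\nu_1}=H_{\nu_2}$ (even with a negative proportionality factor the two zero sets coincide, the activations merely flipping), so every switching point of $\nu_1$ is also a switching point of $\nu_2$; and co-linearity involving a zero weight-row forces both rows to vanish. In all cases $\fX_{\nu_1}=\emptyset$, so not every neuron is isolated.

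The \textbf{main obstacle} is the elementary but essential covering step: that a finite union of affine subspaces of dimension strictly below $d-1$ cannot fill a $(d-1)$-dimensional hyperplane. I would justify it by working inside $H_\nu$ (itself affine of dimension $d-1$) and noting that each $H_\mu\cap H_\nu$ there has $(d-1)$-dimensional Lebesgue measure zero, so their finite union cannot be all of $H_\nu$; a Baire-category argument works equally well. The only other point requiring care is the bookkeeping around co-linearity -- tracking the sign of the proportionality factor and the degenerate case where one augmented row has a zero weight-part -- which the case analysis above is designed to absorb.
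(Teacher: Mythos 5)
Your proof is correct and follows essentially the same route as the paper's: both arguments reduce isolation to the geometry of the zero-level sets $H_\nu$, use the fact that a $(d-1)$-dimensional hyperplane cannot be covered by finitely many distinct hyperplanes to produce a point of $H_\nu$ avoiding all $H_\mu$ (forward direction), and use coincidence of the zero sets under colinearity of augmented rows to kill isolation (converse). Your write-up is in fact slightly more complete than the paper's, since you justify the covering step explicitly and track the sign of the proportionality factor as well as the degenerate zero-row cases, all of which the paper leaves implicit.
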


 \begin{proof}
    We prove the forward implication \rmrebut{(the other implication is left to the reader)}. Denote $H_\nu := \{ x : W_1[\nu,:] x + b_1[\nu] =0 \}$ the $0$-level set of $z_\nu(\cdot,\theta)$ for $\nu \in N_1$.
    By~\Cref{assumption:1_layer_colin}:
    \begin{itemize}[leftmargin=*]
        \item For every neuron $\nu \in N_1$, $z_\nu(\cdot, \theta)$ cannot be of constant sign on $\bbR^d$, as otherwise we would have $W_1[\nu,:] = 0$. So, $a_\nu(\cdot,\theta)$ is not constant on $\bbR^d$.
        \item Given two distinct neurons $\nu_1 \neq \nu_2$ in the first layer $N_1$, the corresponding
        level sets $H_{\nu_1},H_{\nu_2}$ are hyperplanes, and are distinct.
 \end{itemize}
Fix $\nu \in N_1$. Since all considered hyperplanes are pairwise distinct, there exists $x \in H_\nu$ such that for every $\mu \neq \nu$, we have $x \not \in H_\mu$.
Since $x \in H_{\nu}$ we have
$z_\nu(x,\theta) = 0$ while since $x \notin H_{\mu}$ we have $z_\mu(x,\theta) \neq 0$ for every $\mu \neq \nu$. As all functions $z_\mu(\cdot,\theta)$ are continuous, it follows that there exists an open ball $B(x,\epsilon)$ on which $\mathrm{sign}(z_\mu(\cdot,\theta)) \neq0$ is constant for every $\mu \neq \nu$.
This implies that $a_{\mu}(\cdot,\theta)$ is constant on $B(x,\epsilon)$ for every $\mu \neq \nu$ and establishes $x \in \fX_\nu$, thus proving that $\fX_\nu \neq \emptyset$, {\em i.e.} by definition $\nu$ is isolated. Since this holds for every $\nu \in N_{1}$ this proves the forward implication.
\addrebut{For the reverse implication, the same arguments hold. Assume $\nu \in N_1$ is isolated (\emph{i.e.} $\fX_\nu \neq \emptyset$), then $a_\nu(\cdot, \theta)$ is not constant on $\bbR^d$, forbidding $W_1[\nu,:]=0$. Since, by definition of $\fX_\nu$, there exists a ball around $x \in \fX_\nu$ such that $a_{\mu}(\cdot,\theta)$ is constant for $\mu \neq \nu$ and $a_\nu(\cdot, \theta)$ is not constant, it prevents $H_\mu$ and $H_\nu$ from being coincident.}
\end{proof}

We are now equipped to characterize the convexity of $f_{\theta}$ of the form~\eqref{eq:twolayer} when all neurons are isolated.
\begin{proposition} \label{prop:nsc_1_layer}
    Under \Cref{assumption:1_layer_colin}, $f_\theta$ of the form~\eqref{eq:twolayer} is convex if and only if its last layer is non-negative: $\forall \nu \in N_1, w_2[\nu] \geq 0$.
\end{proposition}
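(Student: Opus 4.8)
The plan is to prove the two implications separately, and to observe that only the forward (``convexity $\Rightarrow$ non-negativity'') direction actually uses \Cref{assumption:1_layer_colin}.

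I would start with the easy ``if'' direction. Suppose $w_2[\nu] \geq 0$ for every $\nu \in N_1$. For each neuron $\nu$, the map $x \mapsto \relu(W_1[\nu,:]x + b_1[\nu])$ is convex, being the composition of the affine map $x \mapsto W_1[\nu,:]x + b_1[\nu]$ with the convex non-decreasing function $\relu$. Writing $f_\theta(x) = \sum_{\nu \in N_1} w_2[\nu]\,\relu(W_1[\nu,:]x + b_1[\nu]) + b_2$, we see that $f_\theta$ is a non-negative linear combination of convex functions plus a constant, hence convex. I would stress that this direction holds \emph{unconditionally}, without appealing to \Cref{assumption:1_layer_colin}.

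For the ``only if'' direction I would simply chain together the two lemmas already established. Assume $f_\theta$ is convex. By \Cref{lem:nosisolatedcaract}, \Cref{assumption:1_layer_colin} guarantees that every neuron $\nu \in N_1$ is isolated, i.e.\ $\fX_\nu \neq \emptyset$. This is exactly the hypothesis required to invoke \Cref{lemma:last_layer}, which then yields $w_2[\nu] \geq 0$ for each such $\nu$. Since this holds for every neuron of the hidden layer, we obtain the desired non-negativity of the whole last layer.

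I do not expect a genuine obstacle here: the substance of the argument has been front-loaded into \Cref{lemma:last_layer} (the monotonicity-of-the-gradient computation around a single switching neuron) and \Cref{lem:nosisolatedcaract} (the geometric fact that pairwise non-colinear augmented rows force each level hyperplane $H_\nu$ to be reachable while all others stay strictly signed). The only point worth making explicit in the write-up is the \emph{role} of \Cref{assumption:1_layer_colin}: it is needed solely to ensure that \emph{every} neuron is isolated, so that \Cref{lemma:last_layer} applies to each of them; without it, a neuron whose contribution is redundant (e.g.\ a colinear or vanishing row) could carry a negative weight while $f_\theta$ remains convex, and the equivalence would fail.
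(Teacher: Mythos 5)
Your proof is correct and follows essentially the same route as the paper: the forward direction chains \Cref{lem:nosisolatedcaract} (isolation of all neurons under \Cref{assumption:1_layer_colin}) with \Cref{lemma:last_layer}, exactly as the paper does. For the converse, the paper simply notes the network is an ICNN and hence convex, while you unpack that same fact into the explicit non-negative-combination-of-convex-functions argument; this is a presentational difference only, and your observation that this direction needs no assumption matches the paper's remark following the proposition.
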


\begin{remark}
    \Cref{assumption:1_layer_colin} cannot be simply omitted:
    for example, consider the 1D example $f_\theta : \bbR \to \bbR$ and take $W_1 = (1,-1)^\top, b_1 = (0,0)^\top$ (2 hidden neurons), and $w_2 = (1, -1)^\top$, then $f_\theta(x)= x$, which is convex although $w_{2}$ has a negative entry.
\end{remark}

\begin{proof}[Proof of~\Cref{prop:nsc_1_layer}]
    Assume $f_\theta$ convex.
    By \Cref{assumption:1_layer_colin} and \Cref{lem:nosisolatedcaract}, every neuron $\nu \in N_1$ is isolated.
    By~\Cref{lemma:last_layer} it follows that
    $w_2 [\nu] \geq 0 $ for every $\nu \in N_1$.
    Conversely, if $w_2 [\nu] \geq 0$ for every $\nu \in N_1$ then the network is an ICNN, hence $f_{\theta}$ is convex.
\end{proof}

The extension to one-hidden-layer $\relu$ networks with input skip connections is an immediate corollary, and the main result of this section.

\begin{box_light}
    \begin{corollary}[\textbf{Convex one-hidden-layer $\relu$ networks are ICNNs}]
        \label{cor:icnnonelayer}
    Under \Cref{assumption:1_layer_colin}, the only convex one hidden-layer $\relu$ networks with weighted input skip connections are Input Convex Neural Networks, i.e. networks which write as
        \begin{equation*}
            f_\theta : x \in \bbR^d \mapsto w_2^\top \relu(W_1 x + b_1) + b_2 + v_2^\top x \ ,
        \end{equation*}
    with $w_2 \geq 0$.
    \end{corollary}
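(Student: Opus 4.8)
The plan is to reduce the skip-connection case to the standard-MLP case already settled in \Cref{prop:nsc_1_layer}, by peeling off the input skip connection as an affine term that cannot affect convexity. Concretely, I would write the network as $f_\theta(x) = g_\theta(x) + v_2^\top x$, where $g_\theta(x) := w_2^\top \relu(W_1 x + b_1) + b_2$ is a one-hidden-layer \emph{standard} MLP of the exact form \eqref{eq:twolayer}. Since $x \mapsto v_2^\top x$ is affine (hence both convex and concave), $f_\theta$ is convex if and only if $g_\theta$ is convex.

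The second step is to transfer \Cref{assumption:1_layer_colin} between the two functions. This assumption constrains only the first-layer parameters $(W_1, b_1)$, which are shared by $f_\theta$ and $g_\theta$ --- the skip connection modifies only the output affine map. Hence \Cref{assumption:1_layer_colin} holds for $g_\theta$ exactly when it holds for $f_\theta$, and I may apply \Cref{prop:nsc_1_layer} to $g_\theta$: under the assumption, $g_\theta$ is convex if and only if $w_2[\nu] \geq 0$ for every $\nu \in N_1$. Chaining this with the first equivalence gives that $f_\theta$ is convex if and only if $w_2 \geq 0$ entrywise. Finally, for a one-hidden-layer architecture ($L=2$) the only weight subject to the ICNN non-negativity constraint is $w_L = w_2$, while $W_1$ and $v_2$ stay unconstrained; so $w_2 \geq 0$ is precisely the condition that $f_\theta \in \ICNNclass_d(\bfn)$, which proves both that convexity forces an ICNN structure and (conversely) that such networks are convex.

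I expect no genuine obstacle here: all the substance lies in \Cref{prop:nsc_1_layer} (and the underlying \Cref{lemma:last_layer} and \Cref{lem:nosisolatedcaract}), and the corollary is really just the observation that an input skip connection contributes an affine term. The only point worth a moment's care is the transfer of \Cref{assumption:1_layer_colin}, but this is immediate since the assumption never involves the output-layer parameters $w_2, v_2, b_2$.
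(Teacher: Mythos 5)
Your proof is correct and follows exactly the paper's own argument: the paper's proof of \Cref{cor:icnnonelayer} likewise reduces to \Cref{prop:nsc_1_layer} by observing that adding or removing the affine skip term $x \mapsto v_2^\top x$ does not change convexity. Your extra remark that \Cref{assumption:1_layer_colin} transfers automatically (since it only concerns $(W_1,b_1)$) is a valid point that the paper leaves implicit.
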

\end{box_light}

\begin{proof}
    \addrebut{The result follows directly from \Cref{prop:nsc_1_layer} as adding or removing a linear term  $x \mapsto v_2^\top x$ (corresponding to a weighted input skip connection) to the function does not change its convexity.}
    \rmrebut{By \mbox{\Cref{prop:nsc_1_layer}}, under \mbox{\Cref{assumption:1_layer_colin}}, $x \in \bbR^d \mapsto w_2^\top \relu(W_1 x + b_1) + b_2$ is convex if and only if $w_2 \geq 0 $.}
    \rmrebut{Adding or removing the linear term  $x \mapsto v_2^\top x$ (corresponding to a weighted input skip connection) does not change the convexity of the considered function, so $x \in \bbR^d \mapsto w_2^\top \relu(W_1 x + b_1) + b_2 + v_2^\top x$ is convex if and only if $w_2 \geq 0$.}
    \rmrebut{The condition $w_2 \geq 0$ is precisely the ICNN requirement.} \end{proof}

\Cref{assumption:1_layer_colin} is not restrictive, as networks which do not satisfy it can be rewritten as networks of the same depth and fewer neurons that do satisfy it, leading to the following proposition.
\begin{box_light}

\begin{proposition}\label{prop:ICNNallyouneed}
    Denote $\CvxCPwL_d$ the class of convex CPWL functions from $\bbR^d$ to $\bbR$.
    Then, it holds for any one-hidden-layer architecture $\bfn = (n)$ with $n \in \bbN^*$ and any dimension $d \in \bbN^*$:
    \begin{equation}
    \SMLPclass_d(\bfn ) \cap \CvxCPwL_d =   \ICNNclass_d(\bfn ),
    \end{equation}
    i.e. a convex function implemented by a one-hidden-layer $\relu$ network with weighted input skip connections can always be also implemented by a one-hidden-layer ICNN with the same width.
\end{proposition}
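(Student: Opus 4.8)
The plan is to prove the two set inclusions separately. The inclusion $\ICNNclass_d((n)) \subseteq \SMLPclass_d((n)) \cap \CvxCPwL_d$ is immediate: by definition $\ICNNclass_d((n)) \subseteq \SMLPclass_d((n))$, and every ICNN implements a convex CPWL function, as recalled in the subsection on the convexity of ICNNs. All the content lies in the reverse inclusion $\SMLPclass_d((n)) \cap \CvxCPwL_d \subseteq \ICNNclass_d((n))$, which I would reduce to \Cref{cor:icnnonelayer}.

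Concretely, let $f_\theta \in \SMLPclass_d((n))$ be convex, written as $f_\theta(x) = w_2^\top \relu(W_1 x + b_1) + v_2^\top x + b_2$. If $\theta$ satisfies \Cref{assumption:1_layer_colin}, then \Cref{cor:icnnonelayer} directly provides an ICNN of width $n$ implementing $f_\theta$, so $f_\theta \in \ICNNclass_d((n))$ and we are done. The remaining task is to show that any $\theta$ violating \Cref{assumption:1_layer_colin} can be replaced by a parameter $\theta'$ that implements the \emph{same} function $f_\theta$, has width $n' \le n$, and \emph{does} satisfy \Cref{assumption:1_layer_colin}. Since the architecture classes are defined with \emph{at most} $n$ neurons per hidden layer, we have $\ICNNclass_d((n')) \subseteq \ICNNclass_d((n))$, so applying \Cref{cor:icnnonelayer} to $\theta'$ yields the conclusion.

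The assumption can fail in exactly two ways, each removable by a width-reducing rewrite that leaves $f_\theta$ unchanged. (i) If some row $W_1[\nu,:] = 0$, then $z_\nu(\cdot,\theta) \equiv b_1[\nu]$ is constant, so $\relu(z_\nu)$ contributes the constant $w_2[\nu]\relu(b_1[\nu])$; I absorb it into $b_2$ and drop the neuron. (ii) If two augmented rows are colinear — after (i) all rows are nonzero, so $(W_1[\nu_2,:]\mid b_1[\nu_2]) = \lambda(W_1[\nu_1,:]\mid b_1[\nu_1])$ with $\lambda \neq 0$, hence $z_{\nu_2} = \lambda z_{\nu_1}$ — I merge the two neurons. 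For $\lambda > 0$, $\relu(\lambda z_{\nu_1}) = \lambda \relu(z_{\nu_1})$, so the two terms collapse into a single neuron of weight $w_2[\nu_1] + \lambda w_2[\nu_2]$. For $\lambda < 0$, the identity $\relu(-t) = \relu(t) - t$ gives $\relu(\lambda z_{\nu_1}) = |\lambda|(\relu(z_{\nu_1}) - z_{\nu_1})$, so the two terms collapse into one neuron $\relu(z_{\nu_1})$ of weight $w_2[\nu_1] + |\lambda| w_2[\nu_2]$, plus the residual affine term $-|\lambda| w_2[\nu_2]\, z_{\nu_1}(x)$, which I fold into the skip connection $v_2$ and the bias $b_2$. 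In every case the number of ReLU neurons strictly decreases while $f_\theta$ is preserved, and no rewrite touches whether earlier rows are colinear.

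Iterating these rewrites terminates, since the neuron count strictly decreases and stays nonnegative; the final $\theta'$ has no zero rows and no colinear pairs, so it satisfies \Cref{assumption:1_layer_colin} with width $n' \le n$ while still implementing $f_\theta$, and \Cref{cor:icnnonelayer} finishes the reverse inclusion. The main obstacle is the bookkeeping in case (ii) with $\lambda < 0$: one must check that folding the residual affine term into the skip connection keeps the rewritten network inside $\SMLPclass_d$ — which is precisely why allowing weighted input skip connections matters — and that \Cref{cor:icnnonelayer}, stated for such skip-connection networks, indeed applies to $\theta'$. A minor secondary point is confirming that the classes count \emph{at most} $n$ neurons, so that no padding is needed; otherwise one pads $\theta'$ with neurons carrying $w_2 = 0$, which preserves both the function and the ICNN sign constraints.
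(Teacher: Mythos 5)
Your proof is correct and follows essentially the same route as the paper's: the trivial inclusion one way, and for the converse a reduction to \Cref{cor:icnnonelayer} by removing neurons with zero rows and merging neurons with colinear augmented rows into a network of width $n' \le n$ satisfying \Cref{assumption:1_layer_colin}. The only difference is that you spell out the merging identities (including the $\lambda<0$ case via $\relu(-t)=\relu(t)-t$ and folding the residual affine term into the skip connection), where the paper simply points to the discussion of twin neurons in \citet{stock_embedding_2022}.
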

\end{box_light}

\begin{proof}
    Consider a one-hidden-layer architecture $\bfn = (n)$ for some width $n \in \bbN^*$ and let $d \in \bbN^*$ be the input dimension.
    The inclusion $\ICNNclass_d(\bfn ) \subset \SMLPclass_d(\bfn ) \cap \CvxCPwL_d $ is trivial.
    Consider a {\em convex} CPWL function $f$ implemented by a network $\SMLPclass_d(\bfn )$.
    If \Cref{assumption:1_layer_colin} is satisfied, then by \Cref{cor:icnnonelayer}, $f$ belongs to $\ICNNclass_d(\bfn )$.
    Otherwise, we show that $f$ can be implemented by some one-hidden-layer Skip-MLP with width $n' \leq n$ which satisfies \Cref{assumption:1_layer_colin}:
    \begin{itemize}
        \item if $W_{1}[\nu,:]=0$ for some $\nu \in N_{1}$, this neuron can be removed from the network architecture, with no impact on the implemented function $f$, up to adjusting the bias of the last layer.
        \item if two neurons have co-linear augmented rows (or equivalently have the same hyperplanes
        $H_{\nu_1}=H_{\nu_2}$), they can be merged (see the discussion on {\em twin neurons} in \citet{stock_embedding_2022}). \qedhere
    \end{itemize}
\end{proof}

To summarize, once we exclude trivial degeneracies of network parameterizations corresponding to \Cref{assumption:1_layer_colin}, the \emph{only} convex one-hidden-layer $\relu$ networks (with or without input skip connection) are Input Convex Neural Networks.
Does this extend to deeper $\relu$ networks? Clearly not: \Cref{prop:ICNN_counter_ex} yields a non-ICNN two-hidden-layer $\relu$ network that \emph{does} implement a convex CPWL function.  Moreover this  function \emph{cannot} be implemented with \emph{any}
 ICNN with the same depth and width.
Next we investigate extensions and adaptation beyond the one-hidden-layer case of the necessary non-negativity condition of \Cref{lemma:last_layer}.

\section{The two-hidden-layer case}
\label{sec:twohiddenlayer}

\addrebut{This section is here as an intermediate step before addressing general architectures.
While readers can skip it and directly go to the more general results in \Cref{section:dag}, the two-hidden-layer case illustrates the main ideas of the reasoning and proofs without relying on the more abstract framework of the next section.}
\rmrebut{Before proving our most general result for any DAG $\relu$ network, we study a simple two-hidden-layer \emph{standard} MLP.}
\addrebut{We thus study a two-hidden-layer standard MLP implementing} the CPWL function
\begin{equation}\label{eq:2HL_feedfwd}
    f_\theta  : x \mapsto w_3^\top \relu(W_2 \relu(W_1 x + b_1) + b_2) + b_3.
\end{equation}
The obtained necessary conditions involve, as in the one-hidden-layer case, non-negativity of the last layer, but also of certain {\em products of weights}.

Regarding necessary convexity conditions on the weights of the last layer,
it is not difficult to adapt the proof of  \Cref{lemma:last_layer} (by replacing $\nu \in N_{1}$ with $\nu \in N_{2}$, and identifying proper slope vectors $u^{+},u^{-}$) to show that if $f_\theta$ convex, then all weights of the last layer associated to isolated hidden neurons $\nu \in N_{2}$ are again non-negative: $w_{3}[\nu] \geq 0$.
This necessary condition will be proved formally in the general case of DAG $\relu$ networks in the next section (\Cref{lemma:last_layer_general}).

Concerning other weights, we establish an additional necessary condition associated to isolated neurons of the {\em first} hidden layer, $\nu \in N_{1}$, and illustrate on the example of \Cref{prop:ICNN_counter_ex}, as a sanity check, that this condition indeed holds.

\subsection{Additional necessary condition}
The additional condition involves the reachable activation patterns of the second hidden-layer, i.e. the image of $\fX_{\nu}$ via $\bfa_2(\cdot,\theta)$, {\em cf.} the notations of \Cref{sec:prelim}.

\begin{lemma}
    \label{lemma:pos_cond_2_layer}
    Consider a \emph{convex two-hidden-layer} $\relu$ network $f_\theta : \bbR^d \to \bbR$ of the form~\eqref{eq:2HL_feedfwd}.
    For each \emph{isolated neuron} of the first hidden layer $\nu \in N_{1}$, it holds
    \begin{equation}\label{eq:pos_cond_2_layer}
        \min_{\bfa \in \bfa_2(\fX_\nu)}
             \langle \bfa, (w_3 \odot W_2 [:,\nu])\rangle \geq 0 \ ,
    \end{equation}
    where $\odot$ is the entry-wise (Hadamard) product, and $\bfa_2(\fX_\nu) = \{ \bfa_2(x, \theta) : x \in \fX_\nu \}$.
\end{lemma}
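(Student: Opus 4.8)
The plan is to mimic the proof of \Cref{lemma:last_layer}, upgrading the single-layer slope computation to one that propagates a neuron switch through \emph{two} layers, and then invoking the gradient monotonicity inequality \eqref{eqn:monotonicity} once for each reachable second-layer pattern. Fix an isolated neuron $\nu \in N_1$ and an arbitrary pattern $\bfa \in \bfa_2(\fX_\nu)$; by definition there is a point $x \in \fX_\nu$ with $\bfa_2(x,\theta) = \bfa$. From \Cref{def:isolated_neurons} I obtain a ball $B(x,\epsilon)$ on which \emph{only} $a_\nu$ is non-constant, so in particular the entire second-layer pattern $\bfa_2(\cdot,\theta)$ is constant and equal to $\bfa$ on $B(x,\epsilon)$. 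As in the one-hidden-layer proof I then select two differentiable points $x^+, x^- \in B(x,\epsilon)$ with $a_\nu(x^+)=1$ and $a_\nu(x^-)=0$ (all other activations, first- and second-layer alike, agreeing at the two points), and write $f_\theta$ locally in the factored form $f_\theta(\cdot) = w_3^\top \diag(\bfa)\big(W_2 \diag(\bfa_1(\cdot,\theta))(W_1 \cdot + b_1) + b_2\big) + b_3$.

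Next I would compute the slope difference. Writing $u^\pm = \nabla f_\theta(x^\pm)$ and using that $\diag(\bfa_1(x^+,\theta)) - \diag(\bfa_1(x^-,\theta))$ is the rank-one diagonal matrix with a single nonzero entry $1$ at position $(\nu,\nu)$ (the two first-layer patterns differ only in coordinate $\nu$), the common factor $\diag(\bfa)$ survives on the left and one obtains
\begin{equation*}
  (u^+ - u^-)^\top = \big(w_3^\top \diag(\bfa) W_2[:,\nu]\big)\, W_1[\nu,:] = \langle \bfa,\, w_3 \odot W_2[:,\nu]\rangle\, W_1[\nu,:],
\end{equation*}
so the scalar prefactor is exactly the quantity $c_\nu(\bfa) := \langle \bfa,\, w_3 \odot W_2[:,\nu]\rangle$ appearing in \eqref{eq:pos_cond_2_layer}. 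Exactly as in \Cref{lemma:last_layer}, the sign constraints $z_\nu(x^+,\theta) > 0 \geq z_\nu(x^-,\theta)$ give $W_1[\nu,:](x^+ - x^-) > 0$, whence $\langle u^+ - u^-, x^+ - x^-\rangle = c_\nu(\bfa)\, W_1[\nu,:](x^+-x^-)$ has the same sign as $c_\nu(\bfa)$.

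Finally, convexity of $f_\theta$ together with \eqref{eqn:monotonicity} forces $\langle u^+ - u^-, x^+ - x^-\rangle \geq 0$, hence $c_\nu(\bfa) \geq 0$. Since $\bfa \in \bfa_2(\fX_\nu)$ was arbitrary and $\bfa_2(\fX_\nu)$ is a finite subset of $\{0,1\}^{N_2}$, the minimum is attained and \eqref{eq:pos_cond_2_layer} follows. The one delicate point — and the only place where the two-layer case differs substantively from \Cref{lemma:last_layer} — is the claim that the second-layer activation pattern can be held fixed at $\bfa$ while $\nu$ switches: this is precisely what the definition of $\fX_\nu$ guarantees (every neuron other than $\nu$, including those of $N_2$, is constant on $B(x,\epsilon)$), and it is exactly what lets the single common $\diag(\bfa)$ factor out of $u^+ - u^-$. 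I would also confirm that differentiable points $x^\pm$ exist inside $B(x,\epsilon)$, which holds because within that ball the only possible non-differentiability locus is the zero set of $z_\nu$.
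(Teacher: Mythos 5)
Your proof is correct and follows essentially the same route as the paper's: fix an isolated $\nu\in N_1$, use the ball from \Cref{def:isolated_neurons} to freeze $\bfa_2$ and all other first-layer activations, compute $(u^+-u^-)^\top = \langle \bfa, w_3\odot W_2[:,\nu]\rangle\, W_1[\nu,:]$, and combine the sign constraint $W_1[\nu,:](x^+-x^-)>0$ with gradient monotonicity \eqref{eqn:monotonicity}. Your added remarks (existence of differentiable points $x^\pm$ because the only non-differentiability locus in the ball is the zero set of $z_\nu$, and finiteness of $\bfa_2(\fX_\nu)$ so the minimum is attained) are sound refinements of details the paper leaves implicit.
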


The proof follows the same approach as the one for non-negativity of the last layer in the one-hidden-layer case (\Cref{lemma:last_layer}).
More precisely, for each isolated $\nu \in N_1$, it compares the two possible slopes of $f_\theta$ at points within a neighborhood of a point where the activation of $\nu$ is non constant while other neurons have constant activations.

\begin{proof}[Proof of \Cref{lemma:pos_cond_2_layer}.]
     Since $\nu \in N_1$ is isolated, we can pick $x \in \fX_\nu \neq \emptyset$ together with a  ball $B(x,\epsilon)$ given by \Cref{def:isolated_neurons}.
    The activation pattern of the first layer $\bfa_1(\cdot,\theta) \in \{0,1\}^{N_{1}}$ takes two distinct values on $B(x,\epsilon)$, denoted $\bfa^+$ and $\bfa^-$, with $(\bfa^+)_\nu = 1$ and $(\bfa^-)_\nu = 0$.
    The patterns $\bfa^+$ and $\bfa^-$ are identical elsewhere.
     Regarding $\bfa_2(\cdot,\theta) \in \{0,1\}^{N_{2}}$, it takes a constant value $\bfa_2$ on $B(x,\epsilon)$ by the very definition of this ball for the considered isolated neuron $\nu$.
    By definition of $\bfa^+, \bfa^{-}$, there are $x^+,x^{-} \in B(x,\epsilon)$ such that $\bfa(x^+,\theta) = \bfa^+$ and $\bfa(x^-,\theta) = \bfa^-$ and $f$ is differentiable at $x^+$ and $x^-$. Denoting $u^+ = \nabla f_{\theta}(x^+)$ and $u^- = \nabla f_{\theta}(x^-)$
we have
    \begin{align*}
        (u^+ - u^-)^\top & = w_3^\top \diag(\bfa_2) W_2 \diag(\bfa^+ - \bfa^-) W_{1} \\
        & =   (w_3^\top \diag(\bfa_2) W_2) [\nu] W_1 [\nu,:] \ .
    \end{align*}
    Since $(\bfa^+)_\nu = 1$  (resp. $(\bfa^-)_\nu = 0$), we have $z_\nu(x^+,\theta) > 0 $ (resp. $z_\nu(x^-,\theta) \leq 0 $), i.e.
    \begin{align*}
        (W_1 x^+  + b_1)[\nu]
        > 0\ , \qquad
        (W_1 x^-  + b_1)[\nu]
      \leq
        0 \ ,
    \end{align*}
    from which we get $W_1[\nu, : ] (x^+ - x^-) > 0$ and
    \begin{align*}
        &\langle u^+ - u^-, x^+ - x^- \rangle   =  \\
        & \quad \quad \left( w_3^\top \diag(\bfa_2) W_2 \right) [\nu] \underbrace{ W_1[\nu, : ] (x^+ -x^-) }_{> 0} \ .
    \end{align*}
    Since $f_\theta$ is convex, by monotonicity of its gradient (cf \Cref{eqn:monotonicity}) it holds $ \langle u^+ - u^-, x^+ - x^- \rangle \geq 0$, from which we get $(w_3^\top \diag(\bfa_2) W_2) [\nu] \geq 0$, i.e.
    \begin{equation}
        \label{ineq:2_layers_path_pos}
       (\bfa_2)^\top ( w_3 \odot W_2 [:,\nu] )\geq 0 \ .
    \end{equation}
    Since \eqref{ineq:2_layers_path_pos} must hold for every $x \in \fX_\nu$, i.e. for any $\bfa \in \bfa_2(\fX_\nu)$, we recover \Cref{eq:pos_cond_2_layer}.
\end{proof}

\subsection{Comparison to ICNN condition}

\label{subsec:2HL_ICNN}
We have already seen in \Cref{prop:ICNN_counter_ex}
a convex CPWL implemented by a non-ICNN two-hidden-layer
 $\relu$ network.
As a sanity check, and a hands-on exercise to manipulate~\eqref{eq:pos_cond_2_layer}, let us verify that this
network  {\em does satisfy} the non-negativity condition of \Cref{lemma:pos_cond_2_layer}.

\begin{example}
    Recall the parametrization of $f_\ex$, which is a convex CPWL function:
    \begin{align*}
        w_3 = \begin{pmatrix}
            1
            \\ 1
        \end{pmatrix}, \quad  W_2 = \begin{pmatrix*}[r]
            -1 & 1 \\
            2 & 1
        \end{pmatrix*}, \quad W_1 & = \mathrm{Id}_2,\\
       b_3 = 0, \quad b_2 = \begin{pmatrix*}[l]
        -1 \\
        -0.5
    \end{pmatrix*}, \quad b_1 & = \mathbf{0}_2 .
    \end{align*}
    First, all neurons are isolated, see \Cref{app:counter_ex_isolated} for a proof.
    Second, $w_3 \geq 0$ satisfies the non-negativity constraint on the last layer.
    We denote $\mu_1$ and $\mu_2$ the two neurons of the first layer $N_1$ with their respective pre-activations given by $z_{\mu_1} (x) = \relu(x_{1}) $ and $z_{\mu_2} (x) = \relu(x_{2}) $ where $x = (x_{1},x_{2})^{\top}$.
    Checking  \eqref{eq:pos_cond_2_layer} for $\nu=\mu_{2}$ is trivial since $w_{3} \odot W_2[:, \mu_2] = (1 \  1)^\top$ is non-negative as well as {\em any} activation $\bfa \in \{0,1\}^{N_{2}}$.
    Since $w_{3} \odot W_2[:, \mu_1]  = (-1 \  2)^\top$,  \eqref{eq:pos_cond_2_layer}  holds for $\nu=\mu_{1}$ if, and only if,
    \begin{equation}\label{eq:ExampleConditionOnActivation}
        \min_{\bfa \in \bfa_2(\fX_{\mu_1})} -\bfa[1]+2\bfa[2] \geq 0\ .
    \end{equation}
    As $\bfa_2(\fX_{\mu_1}) = \{ (0,0), (0,1), (1,1) \}$ (see proof in~\Cref{app:counter_ex_isolated}), condition \eqref{eq:ExampleConditionOnActivation} indeed holds.

It is natural to wonder whether the necessary convexity condition of \Cref{lemma:pos_cond_2_layer} for two-hidden-layer standard MLPs extends to two-hidden-layer SkipMLPs, and whether it is also sufficient. Such questions are directly investigated using the more general path-lifting framework with DAG $\relu$ networks in the next section.

\end{example}

 \section{DAG $\relu$ networks}
\label{section:dag}

The two-hidden-layer case has highlighted the appearance of {\em products of layer weights} and their {\em inner product with activation patterns} in the exposed necessary conditions for convexity.
Such structures also appear naturally in the so-called path-lifting framework introduced by \citet{gonon2023path} which encompasses very general deep $\relu$ architectures including skip connections, max-pooling, or convolutional layers.
After recalling the needed definitions, we show that they enable a general characterization of necessary convexity conditions in this framework.

For clarity we restrict the exposition in the main text to DAG ReLU networks \emph{without max-pooling}.
We discuss in \Cref{app:maxpool} adaptations of the definitions (of path activations, and of isolated neurons) that yield the full extension to the general framework of \citet{gonon2023path}.

\subsection{The DAG $\relu$ model}
Consider a $\relu$ network with parameters $\theta$ which can be described as a Directed Acyclic Graph (DAG): each vertex represents a neuron with weight $\theta_\nu$ (the bias), each oriented edge connects neurons $\nu$ and $\mu$ with a parameter $\theta^{\nu \to  \mu}$.
The DAG, denoted $G$, can be described by the tuple $(N,E)$ made of respectively neurons $\nu \in N$ (i.e. vertices of the graph) and directed edges $(\mu,\nu) \in E \subseteq N \times N$.

To each neuron $\nu \in N$ we associate, with a slight abuse of notation, a function $\nu: \bbR^{d} \to \bbR$ called its {\em post-activation}.
For input $\nu \in \Nin$ we have $\nu(x) := x_{\nu}$ (there is no nonlinearity); the {\em pre-activation} of hidden neurons and output neurons  $\nu \in N \setminus \Nin$ is recursively given by an affine combination of the post-activation of its antecedents in the DAG, $z_\nu(x) = \sum_{\mu : (\mu, \nu ) \in E} \theta^{\mu \to \nu} \mu(x) + \theta_\nu$. The post-activation of hidden neurons is given by applying the $\relu$ to the corresponding pre-activation $\nu(x) = \relu(z_{\nu}(x))$. For output neurons $\nu \in \Nout$, we usually have $\nu(x) = z_{\nu}(x)$ (as for input neuron, there is no nonlinearity).
The formalism below however also covers the case where a $\relu$ is applied at the output.

We are now ready to define the path-lifting $\Phi$ and path-activations~$A$.

\begin{definition}[Path-lifting and path activations, \cite{gonon2023path}]
    \label{def:path-lifting}
    Consider a $\relu$ network described by a DAG $G$ and parametrized by $\theta$, with all hidden neurons equipped by a $\relu$ activation.
\begin{itemize}[leftmargin=*]
    \item \textbf{Paths} A \emph{path} $p$ is a sequence of neurons $(p_{0},\ldots,p_{m})$ (of arbitrary path length $m \geq 0$) connected in the graph $G$.
    We denote it as $p = p_0 \to \dots \to p_m$.
    The set $\cP$ denotes the collection of all paths {\em ending at the output neuron}\footnote{Given our interest in scalar-valued function, we consider only one output neuron for simplicity, but all the definitions generalize easily (see \cite{gonon2023path}) to several output neurons.} of $G$ (note that it includes paths which start at hidden or output neurons).
    \item \textbf{Path-lifting} For each path $p$ we define $\phi_p(\theta)$ as a product of parameters, depending on the nature of the first neuron $p_0 \in p$:
    \begin{equation}\label{eq:path_lift}
        \phi_p(\theta) =
        \begin{cases}
            \prod_{i=1}^m \theta^{p_{i-1} \to p_i} &\text{ if } p_0 \in \Nin \\
            b_{p_0}\prod_{i=1}^m \theta^{p_{i-1} \to p_i} &\text{ if } p_0 \in H \\
            b_{p_0} &\text{ if } p_0 \in \Nout
        \end{cases}
    \end{equation}
    The path-lifting is $\Phi(\theta) := ( \phi_p(\theta) )_{p \in \cP}  \in \bbR^{\cP}$.
\item \textbf{Path-activations} The activation of a path $p$ is
    \begin{equation}\label{eq:DefActPath}
        a_p(x,\theta) = \prod_{i=0
        }^m
         a_{p_i} (x,\theta) \in \{ 0, 1 \}
    \end{equation}
where by convention $a_{\nu}(x,\theta)\equiv 1$ for every linear neuron $\nu$ (all input neurons, and usually also the output neuron) In other words, the activation of the path is $1$ if all hidden neurons on it are active, and $0$ otherwise.
    We define a path-activation vector $a(x,\theta) := (a_{p}(x,\theta))_{p \in \cP}  \in \{0,1\}^{\cP}$ and the \emph{path-activations} matrix $A(x, \theta) \in \bbR^{\cP \times (\Nin +1)} $ as the matrix with entries
    \begin{equation}\label{eq:defpathactivation}
            A_{p, \nu}(x,\theta) = \begin{cases} a_p(x, \theta) \mathbf 1_{p_0 = \nu}  \quad \text{if } \nu \in \Nin, \\
            a_p(x, \theta) \mathbf 1_{p_0 \notin \Nin} \quad \text{if }\ \nu \notin \Nin.
            \end{cases}
    \end{equation}
   In the path-activations matrix, the columns indexed by $\Nin$ collect the activation of paths in $\cP$ that start from input neurons, while the last column gathers the activations starting from {\em hidden} neurons (equipped with biases).
   Just as the neuron-wise functions $a_\nu(\cdot,\theta)$, the matrix-valued function $A(\cdot,\theta)$ is piecewise constant.
    \end{itemize}
\end{definition}

Regarding our work, the main interest of these objects is that they enable to write the network output as a scalar product between $\Phi(\theta)$ and $A(x,\theta)$ \cite[Lemma A.1]{gonon2023path}:
\begin{equation}\label{eqn:f_from_path}
    f_\theta(x) = \left\langle \Phi(\theta), A(x,\theta) \begin{pmatrix}
        x \\ 1
    \end{pmatrix} \right\rangle.
\end{equation}
From this, the piecewise affine nature of $f_\theta$ is clear: at each point $x$ around which the activation patterns are locally constant, the network output is an affine function whose slope
depends on the path-lifting $\Phi(\theta)$ and path activation $A(x,\theta)$.

\subsection{Path-lifting factorization}

\begin{figure}
    \centering
    \includegraphics[width=\linewidth]{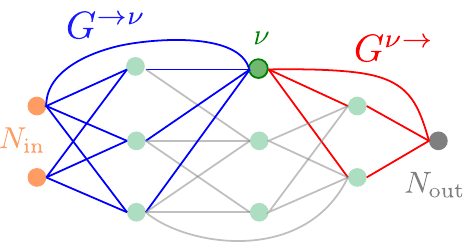}
    \caption{DAG $G$ and extracted subgraphs $G^{\to\nu}$ and $G^{\nu\to}$.}
    \label{fig:nn_dag_path}
\end{figure}
It will also be convenient to manipulate variants of the path-lifting and the path-activation matrix associated to certain sub-graphs of $G$ (and to the associated subset of parameters).

{\bf Subgraphs and their paths.}  Given a hidden neuron $\nu$, as illustrated on \Cref{fig:nn_dag_path}, we denote $G^{\to \nu}$
\addrebut{the (DAG) sub-graph deduced from $G$ by keeping input neurons, $\nu$ as a single output neuron, and all vertices and edges belonging to paths ending at $\nu$}.
\rmrebut{the largest (DAG) sub-graph of $G$ with the same input neurons $\Nin$ and with $\nu$ as the single output neuron.}

Likewise, $G^{\nu \to}$ is the  largest  (DAG) sub-graph with $\nu$ as its single input neuron and the same output neuron as $G$.

{\bf Path-lifting and activations on $G^{\to \nu}$.}
The pre-activation $z_{\nu}(x,\theta)$ is implemented by the restriction of the initial DAG $\relu$ network to $G^{\to \nu}$.
Instantiating the above notions yields $\cP^{\to \nu}$, the set of all paths of $G^{\to \nu}$ ending at \addrebut{the} \rmrebut{(the only)} output neuron $\nu$  \rmrebut{(by convention we equip it with a linear activation instead of its original $\relu$)}.
\addrebut{By convention, the output neuron is equipped with a linear activation instead of its original $\relu$.} \rmrebut{and with a slight abuse of notation (these functions depend on {\em a subset} of entries of $\theta$):}
Likewise, we define the path-lifting $\Phi^{\to \nu}(\theta)$; the path-activation vector $a^{\to \nu}(x,\theta)$; and the path-activation matrix $A^{\to \nu}(x,\theta)$. By~\eqref{eqn:f_from_path} we have
\begin{equation}\label{eqn:preact_from_path}
    z_{\nu}(x,\theta) = \left\langle \Phi^{\to \nu}(\theta),A^{\to \nu}(x,\theta)\begin{pmatrix}
    x \\ 1
    \end{pmatrix}\right\rangle.
\end{equation}

{\bf Modified definitions on $G^{\nu \to}$.}
By analogy with the above notions, and for later notational convenience, we denote $\cP^{\nu \to}$ the set of all paths on $G^{\nu\to}$ that {\em start at $\nu$ and end at the output neuron}.
The definition of the path-lifting $\Phi^{\nu\to}$ is:
\begin{align}
\Phi^{\nu \to}(\theta) & := \left(  \prod_{i=1}^m \theta^{p_{i-1} \to p_i} \right)_{p \in \cP^{\nu \to}} \in \bbR^{\cP^{\nu\to}}\ . \label{eq:DefPhiOut}
\end{align}
In contrast with the definitions of $\Phi$ and $\Phi^{\to \nu}$, $\nu$ plays here a role analog to that of an \emph{input} neuron in $G^{\nu \to}$, and we {\em do not include its bias} $\theta_\nu$ in the path-lifting.
The activation of $r =r_{0} \to \ldots \to r_{m} \in  \cP^{\nu \to}$ is defined with {\em a variant} of~\eqref{eq:DefActPath}
\begin{align}\label{eq:DefActPathOutEntry}
    a^{\nu \to}_{r}(x,\theta) & := \prod_{i=1}^m  a_{r_i} (x,\theta) \ ,
\end{align}
(contrary to $a$ and $a^{\to \nu}$, the input neuron $r_{0}=\nu$ is {\em not} part of the product) and we define, similarly to above, the path-activation vector
\begin{align}
a^{\nu \to}(x,\theta) & := (a^{\nu \to}_{r}(x,\theta))_{r \in \cP^{\nu \to}} \in \{0,1\}^{\cP^{\nu\to}},\label{eq:DefActPathOut}
\end{align}

We insist that the definition of $\cP^{\nu \to}$ and definitions \eqref{eq:DefPhiOut}-\eqref{eq:DefActPathOutEntry} introduced on
$G^{\nu \to}$ \emph{differ from the standard path-lifting} not only because they are restricted to paths that start at $\nu$, but also because the function $a^{\nu \to}(x,\theta)$ depends on $x \in \bbR^d$ which is the input vector feeding the \emph{initial network defined on $G$} but not the scalar entry $z_{\nu}$ feeding the input neuron $\nu$ of $G^{\nu \to}$.

When $x$ and $\theta$ are clear from context, we omit them for brevity.
We will exploit the following useful lemma.

\begin{lemma}\label{lemma:kronecker}
With the above notations, consider an arbitrary hidden neuron $\nu$ and any parameter $\theta$.
The restriction of $\Phi = \Phi(\theta) \in \bbR^{\cP}$ (resp. of $a = a(x,\theta) \in \{0,1\}^{\cP}$) to paths starting from an input neuron and ``containing'' $\nu$
 ({\em i.e.} such that $p_{i}=\nu$ for some $i$; this is denoted $p \ni \nu$) satisfies
    \begin{align}
        \label{eqn:kron_path}
        \Phi_{\left[ p: \substack{p \ni \nu \\ p_0 \in \Nin} \right]}
        =
        \Phi^{\to \nu }_{[ q:q_0 \in \Nin]}
        \otimes
        \Phi^{ \nu \to} \ ,
        \\
        \label{eqn:kron_pathactvec}
        a_{\left[ p:\substack{p \ni \nu \\ p_0 \in \Nin} \right]}
        =
        a_{\nu} \cdot \left(a^{\to \nu }_{[q:q_0 \in \Nin]}
        \otimes
        a^{ \nu \to}\right).
    \end{align}Finally,
    \begin{align}\label{eqn:kronmatrix}
        A_{\left[p:\substack{p \ni \nu \\ p_0 \in \Nin}, \Nin  \right]}
        &=
         a_{\nu} \cdot \left( A_{[q: q_0 \in \Nin,\Nin]}^{\to \nu}
         \otimes a^{\nu \to}
\right).
    \end{align}
\end{lemma}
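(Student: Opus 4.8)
The plan is to exploit the fact that, because $G$ is a directed acyclic graph, every path $p$ with $p_0 \in \Nin$ that contains $\nu$ visits $\nu$ \emph{exactly once}, say $p_j = \nu$. Such a $p$ then splits uniquely as the concatenation at $\nu$ of a prefix $q := (p_0 \to \dots \to p_j)$ and a suffix $r := (p_j \to \dots \to p_m)$; conversely, every pair $(q,r)$ with $q \in \cP^{\to\nu}$, $q_0 \in \Nin$, and $r \in \cP^{\nu \to}$ glues back to a unique such $p$. First I would record this bijection, since it is exactly the index correspondence $p \leftrightarrow (q,r)$ under which the Kronecker products on the right-hand sides of \eqref{eqn:kron_path}--\eqref{eqn:kronmatrix} are read. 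All three identities then reduce to checking that the relevant scalar indexed by $p$ equals the product of the two scalars indexed by $q$ and $r$.

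For \eqref{eqn:kron_path}, since $p_0 \in \Nin$ the path-lifting is the pure edge product $\phi_p(\theta) = \prod_{i=1}^m \theta^{p_{i-1}\to p_i}$ with no bias factor. Splitting this product at $i = j$ yields $\prod_{i=1}^j \theta^{p_{i-1}\to p_i}$, which is precisely $\phi^{\to\nu}_q(\theta)$ (the input-type lifting on $G^{\to\nu}$), times $\prod_{i=j+1}^m \theta^{p_{i-1}\to p_i}$, which is precisely the suffix lifting $\phi^{\nu\to}_r(\theta)$ of \eqref{eq:DefPhiOut}. Hence $\phi_p = \phi^{\to\nu}_q\,\phi^{\nu\to}_r$, i.e. the $(q,r)$ entry of $\Phi^{\to\nu}_{[q:q_0\in\Nin]} \otimes \Phi^{\nu\to}$.

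For \eqref{eqn:kron_pathactvec} I would track carefully where the factor $a_\nu$ lives. Writing $a_p = \prod_{i=0}^m a_{p_i}$, the terminal neuron of $q$ is $\nu$, which is the \emph{linear} output of $G^{\to\nu}$ and hence contributes the factor $1$, so $a^{\to\nu}_q = \prod_{i=0}^{j-1} a_{p_i}$; meanwhile the product defining $a^{\nu\to}_r$ in \eqref{eq:DefActPathOutEntry} starts at $i=1$ and thus omits $r_0 = \nu$, giving $a^{\nu\to}_r = \prod_{i=j+1}^m a_{p_i}$. Since neither factor carries $a_\nu = a_{p_j}$, reinserting it explicitly gives $a_\nu\, a^{\to\nu}_q\, a^{\nu\to}_r = \prod_{i=0}^m a_{p_i} = a_p$, which is \eqref{eqn:kron_pathactvec}. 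I expect this bookkeeping of the $a_\nu$ factor to be the only genuinely delicate point: it is dropped on both sub-graphs for two different reasons (the linear-output convention on $G^{\to\nu}$ and the product starting at $i=1$ on $G^{\nu\to}$), so it must be restored by hand, and everything else is a mechanical splitting of products.

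Finally, for the matrix identity \eqref{eqn:kronmatrix} I restrict to columns $\mu \in \Nin$ and use \eqref{eq:defpathactivation}: there $A_{p,\mu} = a_p\,\mathbf 1_{p_0=\mu}$ and $A^{\to\nu}_{q,\mu} = a^{\to\nu}_q\,\mathbf 1_{q_0=\mu}$. The $(q,r),\mu$ entry of $A^{\to\nu}_{[q:q_0\in\Nin],\Nin}\otimes a^{\nu\to}$ is $A^{\to\nu}_{q,\mu}\,a^{\nu\to}_r$, so multiplying by $a_\nu$ and using $q_0 = p_0$ together with the previous step yields $a_\nu\,a^{\to\nu}_q\,a^{\nu\to}_r\,\mathbf 1_{p_0=\mu} = a_p\,\mathbf 1_{p_0=\mu} = A_{p,\mu}$, closing the argument. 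In summary, the whole proof is the prefix/suffix bijection followed by factorizing the defining products; the DAG property (single visit to $\nu$) and the placement of the $a_\nu$ factor are the points that require care.
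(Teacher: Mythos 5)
Your proof is correct and follows essentially the same route as the paper's: the prefix/suffix bijection between $\{p \in \cP : p \ni \nu,\ p_0 \in \Nin\}$ and $\{q \in \cP^{\to\nu} : q_0 \in \Nin\} \times \cP^{\nu\to}$, the splitting of the edge product for $\phi_p$, the restoration of the factor $a_\nu$ (dropped once by the linear-output convention on $G^{\to\nu}$ and once by the product starting at $i=1$ in \eqref{eq:DefActPathOutEntry}), and the entrywise verification of \eqref{eqn:kronmatrix} via \eqref{eq:defpathactivation}. The only cosmetic difference is that the paper writes out the Kronecker structure of the path-activation matrix as an explicit block enumeration, whereas you verify it entry by entry, which is equivalent.
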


The proof in~\Cref{app:proof_dag} uses that $\{ p \in \cP :p \ni \nu, p_0 \in \Nin \} $ is in bijection with $ \{ q \in \cP^{\to \nu} :q_0 \in \Nin \} \times \cP^{\nu \to}$.
While we choose to focus in this paper on {\em scalar} function with a single output neuron as
we study convexity, \Cref{lemma:kronecker} still holds for networks with several output neurons.

\subsection{Necessary convexity condition}
Thanks to the path-lifting and activation formalism, we can express a necessary convexity condition for generic DAG networks.
\begin{box_light}
    \begin{proposition}[\textbf{Necessary condition for convex DAG ReLU networks}]
        \label{prop:nec_relu_cvx}
        Consider a $\relu$ network described by a DAG and parametrized by $\theta$, which implements a \emph{convex} CPWL function $f_\theta : \bbR^d \to \bbR$.
        For every \emph{isolated} hidden neuron $\nu \in H$,
        it holds
        \begin{equation}
            \label{eqn:nec_relu}
            \min_{x \in \fX_\nu} \langle
            a^{\nu \to}(x,\theta), \Phi^{\nu \rightarrow} (\theta)
            \rangle \geq 0 \ .
        \end{equation}
    \end{proposition}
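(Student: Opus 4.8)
The plan is to mirror the two-hidden-layer argument of \Cref{lemma:pos_cond_2_layer}, now phrased in the path-lifting language. Fix an isolated hidden neuron $\nu$ and a point $x \in \fX_\nu$, together with a ball $B(x,\epsilon)$ on which, by \Cref{def:isolated_neurons}, the activation $a_\mu(\cdot,\theta)$ of every $\mu \neq \nu$ is constant while $a_\nu(\cdot,\theta)$ is not. Exactly as before, I would select two points $x^+, x^- \in B(x,\epsilon)$ at which $f_\theta$ is differentiable and $a_\nu(x^+,\theta)=1$, $a_\nu(x^-,\theta)=0$, and set $u^\pm := \nabla f_\theta(x^\pm)$. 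The goal is to show that $\langle u^+-u^-,\, x^+-x^-\rangle$ factorizes as the target quantity $\langle a^{\nu\to}(x,\theta), \Phi^{\nu\to}(\theta)\rangle$ times a strictly positive scalar, so that the monotonicity inequality \eqref{eqn:monotonicity} forces the target quantity to be non-negative.

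The key computation relies on \eqref{eqn:f_from_path}. Since $A(\cdot,\theta)$ is locally constant, differentiating that scalar-product expression with respect to $x$ shows that, for each input neuron $\nu' \in \Nin$, the component $(\nabla f_\theta(x))_{\nu'}$ equals $\sum_{p : p_0 = \nu'} \phi_p(\theta)\, a_p(x,\theta)$, i.e. it only sees paths starting at $\nu'$. Forming $u^+ - u^-$, the contribution of a path cancels unless its activation changes between $x^+$ and $x^-$; because only $\nu$ switches, the surviving paths are exactly those containing $\nu$. On these paths I would invoke \Cref{lemma:kronecker}: equations \eqref{eqn:kron_path} and \eqref{eqn:kron_pathactvec} factor both $\phi_p$ and $a_p$ along $\nu$ into a $G^{\to\nu}$ part (starting at $\nu'$) and a $G^{\nu\to}$ part, while the prefactor $a_\nu(x^+,\theta)-a_\nu(x^-,\theta)=1$ captures the switch. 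Since all activations except $a_\nu$ are constant on $B(x,\epsilon)$, both $a^{\to\nu}(\cdot,\theta)$ and $a^{\nu\to}(\cdot,\theta)$ are constant there, and, recognizing the $G^{\to\nu}$ part as $\nabla z_\nu(x)$ via \eqref{eqn:preact_from_path}, the sum factorizes as
\[
u^+ - u^- = \langle a^{\nu\to}(x,\theta), \Phi^{\nu\to}(\theta)\rangle \cdot \nabla z_\nu(x).
\]

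It remains to control the sign of the directional derivative. Because $z_\nu$ depends only on neurons upstream of $\nu$, whose activations are constant on $B(x,\epsilon)$, the map $z_\nu(\cdot,\theta)$ is affine there with gradient $\nabla z_\nu(x)$, so $\langle \nabla z_\nu(x),\, x^+-x^-\rangle = z_\nu(x^+,\theta)-z_\nu(x^-,\theta)$. The activation constraints give $z_\nu(x^+,\theta) > 0 \geq z_\nu(x^-,\theta)$, hence this difference is strictly positive. Taking the inner product with $x^+ - x^-$ in the displayed factorization yields
\[
\langle u^+-u^-,\, x^+-x^-\rangle = \langle a^{\nu\to}(x,\theta), \Phi^{\nu\to}(\theta)\rangle \cdot \underbrace{\langle \nabla z_\nu(x),\, x^+-x^-\rangle}_{>\,0},
\]
and convexity of $f_\theta$ via \eqref{eqn:monotonicity} makes the left-hand side non-negative, forcing $\langle a^{\nu\to}(x,\theta), \Phi^{\nu\to}(\theta)\rangle \geq 0$. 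As $x$ ranges over $\fX_\nu$ this is precisely \eqref{eqn:nec_relu} after taking the minimum.

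The main obstacle I anticipate is the bookkeeping in the factorization step: one must carefully justify that the bijection underlying \Cref{lemma:kronecker} (between input-to-output paths through $\nu$ and pairs of $G^{\to\nu}$, $G^{\nu\to}$ paths) is compatible with grouping by the starting input coordinate $\nu'$, and that the conventions distinguishing $a^{\nu\to}$ (which excludes $\nu$ from its product and still depends on the original input $x$, not on the scalar $z_\nu$) are respected throughout. Everything else is a direct transcription of the one- and two-hidden-layer proofs.
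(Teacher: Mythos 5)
Your proposal is correct and follows essentially the same route as the paper: the paper simply packages your core computation into \Cref{lemma:convexity_criterion} (proved in \Cref{app:maxpool} via the same factorization of \Cref{lemma:kronecker}), where your factor $\nabla z_\nu(x)$ appears as $[\Phi^{\to \nu}]^\top A^{\to \nu}$ -- the same object by \eqref{eqn:preact_from_path} -- and the same strict inequality $z_\nu(x^+,\theta) > 0 \geq z_\nu(x^-,\theta)$ supplies the positive scalar. The conclusion by gradient monotonicity and taking the minimum over $x \in \fX_\nu$ is identical to the paper's proof of \Cref{prop:nec_relu_cvx}.
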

   \end{box_light}
\begin{remark}
To simplify the exposition we only introduced in the main text the path-lifting and path-activations for DAG ReLU networks \emph{in the special case where all hidden neurons are equipped with the $\relu$ activation}. As discussed in \Cref{app:maxpool}, these definitions can be extended to also cover max-pooling neurons, again by leveraging the framework of \citet{gonon2023path}. With a proper adaptation of the notion of isolated neuron and of the corresponding set $\fX_{\nu}$ (\Cref{def:variant_IsolatedMaxPool}),
\Cref{prop:nec_relu_cvx} remains valid.
 \end{remark}
    Introducing the {\em finite} set of possible activations of the paths $\cP^{\nu\to}$ from $\nu$ to the output neuron,
    \[
        \bfa^{\nu \to}(\fX_\nu):= \{a^{\nu \to}(x,\theta): x \in \fX_\nu\} \subseteq \{0,1\}^{\cP^{\nu\to}},
    \]
    we get an equivalent expression of~\eqref{eqn:nec_relu} which emphasizes its analogy with the special case of~\eqref{eq:pos_cond_2_layer}:
    \[
    \min_{\bfa \in \bfa^{\nu \to}(\fX_\nu)} \langle \bfa,\Phi^{\nu\to}(\theta)\rangle\ ,
    \]
where for two-hidden-layer feedforward networks, $\Phi^{\nu \to}(\theta)$ is exactly $(w_3 \odot W_2[:,\nu])$.

Before proceeding to the proof of~\Cref{prop:nec_relu_cvx}, let us highlight a particular consequence: a generalization of \Cref{lemma:last_layer} on the non-negativity of the last layer. The equivalent of the last layer in a DAG $G = (N,E)$ is the set of hidden neurons $\nu \in H$ that are {\em antecedents} of the output neurons. Here, since we focus on convex functions, there is a single  output neuron ($\Nout= \{\mu_\mathrm{out}\}$) and $\ant(\mu_\mathrm{out}) := \{\nu \in N: (\nu, \mu_\mathrm{out}) \in E \}$.

\begin{box_light}
    \begin{corollary}[\textbf{Necessary condition: non-negativity of the ``last layer''}]
        \label{lemma:last_layer_general}
    Isolated hidden neurons in the ``last layer'', i.e. $\nu \in H \cap \ant(\mu_\mathrm{out})$ such that $\fX_\nu \neq \emptyset$, must have non-negative outgoing weight $ \theta^{\nu \to \mu_\mathrm{out}} \geq 0$.
    \end{corollary}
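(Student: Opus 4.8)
The plan is to derive this statement directly from \Cref{prop:nec_relu_cvx} by specializing the general necessary condition \eqref{eqn:nec_relu} to a neuron sitting just before the output. First I would fix an isolated hidden neuron $\nu \in H \cap \ant(\mu_{\mathrm{out}})$ (so that $\fX_\nu \neq \emptyset$) and apply \Cref{prop:nec_relu_cvx}, which gives
\[
\min_{x \in \fX_\nu} \langle a^{\nu \to}(x,\theta), \Phi^{\nu \to}(\theta)\rangle \geq 0 .
\]
It then remains to read off what the objects $\cP^{\nu \to}$, $\Phi^{\nu \to}$ and $a^{\nu \to}$ become for such a ``last-layer'' neuron.

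The key observation is that the edge $\nu \to \mu_{\mathrm{out}}$ is itself a (length-one) path in $\cP^{\nu \to}$. For this path $p^\star = \nu \to \mu_{\mathrm{out}}$, the path-lifting \eqref{eq:DefPhiOut} reduces to the single factor $\Phi^{\nu \to}_{p^\star}(\theta) = \theta^{\nu \to \mu_{\mathrm{out}}}$, while the path-activation \eqref{eq:DefActPathOutEntry} reduces to $a^{\nu \to}_{p^\star}(x,\theta) = a_{\mu_{\mathrm{out}}}(x,\theta)$, which equals $1$ identically by the convention that the (linear) output neuron carries activation $a_{\mu_{\mathrm{out}}} \equiv 1$. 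Hence the direct edge always contributes exactly $\theta^{\nu \to \mu_{\mathrm{out}}}$ to the inner product in \eqref{eqn:nec_relu}, regardless of $x$. In the genuine last-layer situation where the output is the only successor of $\nu$, this is the only path in $\cP^{\nu \to}$, so the inner product equals $\theta^{\nu \to \mu_{\mathrm{out}}}$ for every $x \in \fX_\nu$ and the condition collapses to $\theta^{\nu \to \mu_{\mathrm{out}}} \geq 0$, which is exactly the claim.

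The step I expect to require the most care is ruling out (or handling) additional paths from $\nu$ to $\mu_{\mathrm{out}}$ through intermediate hidden neurons, which in a general DAG may coexist with the direct edge. When they are present, the inner product in \eqref{eqn:nec_relu} becomes $\theta^{\nu \to \mu_{\mathrm{out}}} + \sum_{r \neq p^\star} a^{\nu \to}_r(x,\theta)\,\Phi^{\nu \to}_r(\theta)$, and to isolate the sign of the direct weight I would exhibit a point $x \in \fX_\nu$ at which every such longer path is deactivated, i.e. at which at least one intermediate $\relu$ neuron on each alternative path is inactive so that $a^{\nu \to}_r(x,\theta) = 0$; evaluating the (nonnegative) inner product at that point then leaves only $\theta^{\nu \to \mu_{\mathrm{out}}} \geq 0$. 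The existence of such a deactivating point is the crux, and I would establish it from the definition of $\fX_\nu$ together with a continuity/perturbation argument on the pre-activations of the intermediate neurons; in the feedforward layered setting this difficulty is vacuous, since a last-layer neuron has no successor other than $\mu_{\mathrm{out}}$.
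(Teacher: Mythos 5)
Your core argument is exactly the paper's proof: specialize \Cref{prop:nec_relu_cvx} to $\nu$, observe that $\cP^{\nu \to}$ reduces to the single path $\nu \to \mu_\mathrm{out}$, whose lifting is $\theta^{\nu \to \mu_\mathrm{out}}$ and whose activation is identically $1$, so the inner product in \eqref{eqn:nec_relu} is constant and the condition collapses to $\theta^{\nu \to \mu_\mathrm{out}} \geq 0$. In what you call the ``genuine last-layer situation'' (the output is the only successor of $\nu$), your proof and the paper's coincide and are complete.

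The issue is the step you flag as the crux. Your plan to handle a neuron $\nu \in H \cap \ant(\mu_\mathrm{out})$ that also feeds other hidden neurons --- exhibiting a point of $\fX_\nu$ at which every parallel path is deactivated --- cannot be carried out, because in that generality the conclusion itself is false. Concretely, take scalar input $x$, a hidden neuron $\nu$ with $z_\nu(x)=x$, a second hidden neuron $\mu$ with $z_\mu(x)=\relu(x)+1$ (edge weight $\theta^{\nu\to\mu}=1$, bias $1$), and an output receiving both, with $\theta^{\nu\to\mu_\mathrm{out}}=-\tfrac12$ and $\theta^{\mu\to\mu_\mathrm{out}}=1$. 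Then $f_\theta(x)=-\tfrac12\relu(x)+\relu(\relu(x)+1)=\tfrac12\relu(x)+1$, which is convex; $\nu$ is isolated, since $z_\mu\geq 1>0$ everywhere so only $a_\nu$ switches at $x=0$ and $\fX_\nu=\{0\}$; yet the direct weight $\theta^{\nu\to\mu_\mathrm{out}}=-\tfrac12$ is negative. The parallel path $\nu\to\mu\to\mu_\mathrm{out}$ is active at every point of $\fX_\nu$, so no deactivating point exists, and \Cref{prop:nec_relu_cvx} only yields the aggregated bound $-\tfrac12+1\geq 0$: negativity of an individual outgoing weight can always be compensated by active parallel paths, and only $\langle a^{\nu\to},\Phi^{\nu\to}(\theta)\rangle$ is sign-constrained. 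The paper sidesteps this by implicitly reading ``last layer'' as meaning that $G^{\nu\to}$ is the two-node graph with the single edge $\nu\to\mu_\mathrm{out}$ --- i.e.\ exactly your genuine last-layer case --- and under that reading your argument needs no further work.
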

\end{box_light}
\begin{remark}
The reader may notice that the above is only expressed for neurons in $\ant(\mu_{\mathrm{out}})$ that are {\em hidden} neurons: in the presence of input skip connections, there can also be {\em input neurons} in $\ant(\mu_{\mathrm{out}})$, but there is no non-negativity constraint on the corresponding weight.
\end{remark}

\begin{proof}
    When $\nu$ is in the ``last layer'' $G^{\nu \rightarrow}$ is reduced to a simple graph with two nodes: $\nu$, $\mu_\mathrm{out}$ connected via a single edge $\nu \to \mu_\mathrm{out}$ with weight $\theta^{\nu \to \mu_\mathrm{out}}$.
    The only path $p \in \cP^{\nu \to}$ starting from $\nu$  is $p=\nu \to \mu_\mathrm{out}$, hence $\Phi^{\nu \rightarrow} (\theta) =
    \phi_{p}(\theta) = \theta^{\nu \to \mu_\mathrm{out}}$ and $\bfa^{\nu \to}(x,\theta) = a_{p}(x,\theta) = 1$.
\end{proof}

\begin{remark}
    Informally, the non-negativity constraint~\eqref{eqn:nec_relu} reads as follows.
    For every isolated hidden neuron $\nu$ of the network, one has to check sign of scalar products between
    \begin{itemize}[label=--]
        \item the path-lifting for paths which begin at $\nu$ and end at the output neuron, and
        \item the activations of these paths at input points $x \in \bbR^{\Nin}$ at which $a_\nu$ is the only activation which switches, i.e. input points $x$ such that $z_\nu(x,\theta) = 0$ and $z_\mu(x,\theta) \neq 0$ for $\mu \neq \nu$.
    \end{itemize}
\end{remark}

The proof of \Cref{prop:nec_relu_cvx} relies on the following lemma.

\begin{lemma}[Local convexity criterion for $\relu$ networks]
    \label{lemma:convexity_criterion}
    Consider a $\relu$ network described by a DAG and parametrized by $\theta$ which implements a CPWL function $f_\theta: \bbR^d \to \bbR$.
    Consider an isolated neuron $\nu \in H$ and $x \in \fX_\nu \neq \emptyset$.
    There exists $\epsilon > 0$ such that
    \begin{enumerate}
    \item there are $x^+, x^- \in B(x, \epsilon)$ such that $f_\theta$
    is differentiable at $x^{+},x^{-}$ and $a_{\nu}(x^+,\theta)=1$,  $a_{\nu}(x^-,\theta)=0$;
    \item for any such pair $x^{+},x^{-}$ it holds
    \begin{align}
        \label{eq:monotoneconditiondag}
        & \langle \nabla f_{\theta}(x^+) - \nabla f_{\theta}(x^-),x^+ - x^-\rangle \geq 0  \nonumber \\
        \iff & \langle a^{\nu\to},\Phi^{\nu \to}(\theta)\rangle \geq 0 \ .
    \end{align}
    \end{enumerate}
\end{lemma}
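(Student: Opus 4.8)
The plan is to make everything explicit through the path-lifting formula \eqref{eqn:f_from_path} and the Kronecker factorization of \Cref{lemma:kronecker}, so as to reduce the convexity increment $\langle \nabla f_\theta(x^+) - \nabla f_\theta(x^-),\, x^+ - x^-\rangle$ to the target quantity $\langle a^{\nu\to},\Phi^{\nu\to}(\theta)\rangle$ multiplied by a strictly positive scalar.

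First I would settle item~1 and record the local geometry. Since $\nu$ is isolated, \Cref{def:isolated_neurons} gives $\epsilon_0>0$ such that for all $\epsilon\le\epsilon_0$ each $a_\mu(\cdot,\theta)$ with $\mu\neq\nu$ is constant on $B(x,\epsilon)$ while $a_\nu(\cdot,\theta)$ is not. Via \eqref{eqn:preact_from_path}, $z_\nu(\cdot,\theta)$ is governed only by activations of neurons of $G^{\to\nu}$ (all $\neq\nu$), hence $z_\nu$ is \emph{affine} on $B(x,\epsilon)$ with a constant slope $s_\nu:=\nabla z_\nu$; being non-constant in sign, this affine map satisfies $s_\nu\neq 0$ and both open sets $\{z_\nu>0\}$ and $\{z_\nu<0\}$ meet $B(x,\epsilon)$. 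On each of these sets all activations (including $a_\nu$) are constant, so $f_\theta$ is affine, hence differentiable, there; choosing $x^+\in\{z_\nu>0\}\cap B(x,\epsilon)$ and $x^-\in\{z_\nu<0\}\cap B(x,\epsilon)$ gives item~1. I also note that downstream activations of $\nu$ are constant on $B(x,\epsilon)$, so $a^{\nu\to}(\cdot,\theta)$ is constant there, which justifies dropping the point argument and writing simply $a^{\nu\to}$.

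Next I would compute the slope difference. At a differentiable point, \eqref{eqn:f_from_path} gives, for $\mu\in\Nin$, the slope component $\nabla f_\theta(\cdot)_\mu=\sum_{p:\,p_0=\mu}a_p(\cdot,\theta)\,\Phi_p(\theta)$ (only the $\Nin$-indexed columns of $A$ enter). Moving from $x^-$ to $x^+$ flips only $a_\nu$ from $0$ to $1$, so only paths $p$ with $p_0\in\Nin$ and $p\ni\nu$ change, and for these $a_p(x^+)-a_p(x^-)=a_p(x^+)$. Using the bijection $p\leftrightarrow(q,r)\in\{q\in\cP^{\to\nu}:q_0\in\Nin\}\times\cP^{\nu\to}$ together with the factorizations \eqref{eqn:kron_path}--\eqref{eqn:kron_pathactvec}, the sum splits as a product; recognizing $\sum_{q:\,q_0=\mu}a^{\to\nu}_q(x^+)\Phi^{\to\nu}_q=\nabla z_\nu(x^+)_\mu=(s_\nu)_\mu$ from \eqref{eqn:preact_from_path}, I obtain the clean identity $\nabla f_\theta(x^+)-\nabla f_\theta(x^-)=\langle a^{\nu\to},\Phi^{\nu\to}(\theta)\rangle\,s_\nu$.

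Finally, pairing with $x^+-x^-$ and using that $s_\nu$ is the slope of the affine map $z_\nu$ on $B(x,\epsilon)$ yields $\langle s_\nu,\,x^+-x^-\rangle=z_\nu(x^+)-z_\nu(x^-)>0$, since $z_\nu(x^+)>0\geq z_\nu(x^-)$; this computation holds for \emph{any} admissible pair because only $a_\nu$ differs between the two points and $z_\nu$ is affine across the whole ball. Hence $\langle\nabla f_\theta(x^+)-\nabla f_\theta(x^-),\,x^+-x^-\rangle=(z_\nu(x^+)-z_\nu(x^-))\,\langle a^{\nu\to},\Phi^{\nu\to}(\theta)\rangle$ is a strictly positive multiple of the target, proving the equivalence of item~2. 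The main obstacle I anticipate is purely bookkeeping: restricting the path-lifting formula to the $\Nin$-indexed columns of $A$ so that only slopes (not biases) contribute, checking that the ``rest of the path'' activations coincide at $x^+$ and $x^-$ so that \Cref{lemma:kronecker} applies verbatim, and confirming that downstream activations are genuinely constant on $B(x,\epsilon)$ so that $a^{\nu\to}$ is well defined independently of the chosen point.
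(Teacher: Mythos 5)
Your proof is correct and follows essentially the same route as the paper's: both use the isolated-neuron ball to freeze all activations except $a_\nu$, the path-lifting identities \eqref{eqn:f_from_path} and \eqref{eqn:preact_from_path}, the Kronecker factorization of \Cref{lemma:kronecker} applied to paths through $\nu$ (whose activations vanish at $x^-$), and the strict inequality $z_\nu(x^+)-z_\nu(x^-)>0$ to exhibit the slope increment as a positive multiple of $\langle a^{\nu\to},\Phi^{\nu\to}(\theta)\rangle$. Your phrasing of the key identity via the constant slope $s_\nu=\nabla z_\nu$ is only a notational repackaging of the paper's matrix computation.
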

The proof of this lemma in \Cref{app:maxpool}
should sound familiar as it is reminiscent of the proof of \Cref{lemma:last_layer,lemma:pos_cond_2_layer}.
Let us highlight that  \Cref{app:maxpool} actually proves an \emph{extension} of the lemma that  covers the case of DAG $\relu$ networks that may include max-pooling neurons.

\begin{proof}[Proof of \Cref{prop:nec_relu_cvx}]
By convexity of $f_\theta$, it holds for any points $x^+, x^-$ where $f_\theta$ is differentiable that $\langle u^+ - u^-, x^+ - x^- \rangle \geq 0 $ where $u^+ =\nabla f_\theta(x^+),u^-=\nabla f_\theta(x^-)$ denote the slopes of $f_\theta$ at $x^+, x^-$.
Combined with \Cref{lemma:convexity_criterion} this implies that for every isolated neuron we have $\langle a^{\nu\to},\Phi^{\nu \to}(\theta)\rangle \geq 0$.
    As this non-negativity condition must hold for every $x \in \fX_\nu$, we end up with the claimed
     result.
\end{proof}

 \section{Convex $\relu$ Networks: sufficient conditions}
\label{section:cvx_relu_sufficient}

As in the case of simple feedforward $\relu$ architectures with few layers, it is natural to wonder whether the necessary convexity condition of \Cref{prop:nec_relu_cvx} is in some sense also sufficient under mild ``non-degeneracy'' conditions.
This is the object of this section.

\subsection{Non-degeneracy assumption}

Consider a CPWL function $f_\theta : \bbR^d \to \bbR $ implemented by a DAG network, parametrized by $\theta$ and fix $(R_k)_{k=1}^K$ any compatible partition.
Recall \Cref{prop:cvx_cpwl}\ref{prop:cvx_cpwl_item5} on convexity of CPWL functions: it gives sufficient conditions that need to be checked along the set $\cF := \bigcup_{k \sim \ell} F_{k,\ell}$ of all frontiers of the CPWL function -- where the slope of the CPWL function \emph{may} change.
To study convexity, it is in fact sufficient to restrict ourselves to frontiers where the slope of the function $f_\theta$ {\em actually} changes, so we introduce
\begin{multline}
    \label{eq:NoDiffFront}
    \cF_{\text{no-diff}} :=  \left\{ x \in \cF: f_\theta \text{ is not affine on }  B(x,\epsilon), \right. \\ \left.  \forall \epsilon>0 \right\}
\end{multline}

On the other hand, the necessary conditions of \Cref{prop:nec_relu_cvx} involve the set $\bigcup_{\nu \in H} \fX_\nu$, the set of input points where exactly one hidden neuron  switches.
Note that if the slope of $f_\theta$ changes at input point $x$, then at least one neuron must switch at $x$.
The connection we need to easily establish sufficient conditions is to assume that for any input point on a frontier where the slope changes, \emph{one and only one neuron ``switches''}.

\begin{assumption}
    \label{assumption:relu_nn}
    Consider a $\relu$ network described by a DAG with parameters $\theta$ which implements a CPWL function $f_\theta : \bbR^d \to \bbR$ and consider a compatible polyhedral partition $(R_k)_{k=1}^K$ associated with this CPWL function.
    The assumption states
    \begin{equation}
        \label{eqn:assumption_1}
        \fX_\nu \neq \emptyset  \quad \forall \nu \in H
     \end{equation}
    and
    \begin{equation}
        \label{eqn:assumption_2}
       \cF_{\text{no-diff}} \subset \bigcup_{\nu \in H} \fX_\nu.
    \end{equation}
\end{assumption}

\begin{remark}
In words, the assumption requires that every hidden neuron in the network is isolated, and that every change of slope of $f_\theta$ at points belonging to a $d-1$ dimensional face between regions corresponds to a change of activation of a {\em single} neuron. \end{remark}

For one-hidden-layer networks \Cref{assumption:relu_nn} simply coincides with \Cref{assumption:1_layer_colin}.

\begin{lemma}
    \label{lemma:equiv_assumptions}
    For a one-hidden-layer $\relu$ network architecture, \Cref{assumption:relu_nn,assumption:1_layer_colin} are equivalent.
\end{lemma}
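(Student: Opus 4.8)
The plan is to prove the two implications separately, relying on the already-established \Cref{lem:nosisolatedcaract} to dispatch condition~\eqref{eqn:assumption_1} of \Cref{assumption:relu_nn} and reserving the actual work for condition~\eqref{eqn:assumption_2}. The reverse implication is immediate: condition~\eqref{eqn:assumption_1} states exactly that every hidden neuron $\nu \in N_1$ is isolated (i.e. $\fX_\nu \neq \emptyset$), and by \Cref{lem:nosisolatedcaract} this is equivalent to \Cref{assumption:1_layer_colin}; condition~\eqref{eqn:assumption_2} is not even needed for this direction.

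For the forward implication \Cref{assumption:1_layer_colin} $\Rightarrow$ \Cref{assumption:relu_nn}, \Cref{lem:nosisolatedcaract} again yields~\eqref{eqn:assumption_1} for free, so the crux is to establish~\eqref{eqn:assumption_2}, namely that every point $x \in \cF_{\text{no-diff}}$ lies in some $\fX_\nu$. I would reuse the level sets $H_\nu := \{x : W_1[\nu,:]\, x + b_1[\nu] = 0\}$ introduced in the proof of \Cref{lem:nosisolatedcaract}; under \Cref{assumption:1_layer_colin} these are genuine, pairwise distinct hyperplanes. Take $x \in \cF_{\text{no-diff}}$, which lies in the relative interior of a frontier $F_{k,\ell}$ between neighboring regions $R_k \sim R_\ell$, whose affine hull is a hyperplane $P$ (by \Cref{def:neighbors}). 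Choosing $\epsilon>0$ small enough that $B(x,\epsilon)$ meets only $R_k$, $R_\ell$ and their shared facet, the layerwise activation $\bfa_1$ is constant on each region, so $f_\theta$ has a constant slope $u_k$ (resp. $u_\ell$) there. Since $x \in \cF_{\text{no-diff}}$ the function is non-affine on $B(x,\epsilon)$, which forces $u_k \neq u_\ell$; because the slope is a function of the activation pattern, the patterns on $R_k$ and $R_\ell$ must differ, so at least one neuron switches across the facet.

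The key step, and the only genuinely geometric one, is to upgrade ``at least one'' to ``exactly one''. If a neuron $\nu$ has different activations on $R_k$ and $R_\ell$, then $z_\nu(\cdot,\theta)$ is $\geq 0$ on $\overline{R_k}$ and $\leq 0$ on $\overline{R_\ell}$ (or vice versa), so by continuity $z_\nu \equiv 0$ on $F_{k,\ell}$, giving $P \subseteq H_\nu$ and hence $H_\nu = P$. Were two distinct neurons $\nu_1 \neq \nu_2$ to switch, we would obtain $H_{\nu_1} = P = H_{\nu_2}$, contradicting the pairwise distinctness of the hyperplanes guaranteed by the non-colinearity part of \Cref{assumption:1_layer_colin}. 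Thus a single neuron $\nu_0$ switches; for $\epsilon$ small, $a_\mu$ is constant on $B(x,\epsilon)$ for every $\mu \neq \nu_0$ while $a_{\nu_0}$ is not, which is precisely the definition of $x \in \fX_{\nu_0}$ (\Cref{def:isolated_neurons}). This proves~\eqref{eqn:assumption_2} and completes the forward implication.

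I expect this hyperplane-containment/dimension argument to be the main obstacle, as it is exactly where the non-colinearity half of \Cref{assumption:1_layer_colin} enters (the $W_1[\nu,:]\neq 0$ half being used only to make each $H_\nu$ a bona fide hyperplane via \Cref{lem:nosisolatedcaract}); everything else is bookkeeping. I would also note that the whole argument is insensitive to a weighted input skip connection $x \mapsto v_2^\top x$, since it merely shifts all slopes $u_k$ by the same vector $v_2$ and leaves the switching structure unchanged.
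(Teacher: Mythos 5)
Your proposal is correct and follows essentially the same route as the paper's proof: the reverse implication and condition~\eqref{eqn:assumption_1} are handled via \Cref{lem:nosisolatedcaract} exactly as in the paper, and the forward implication rests on the same geometric core, namely that near a frontier point the non-colinearity of \Cref{assumption:1_layer_colin} forbids two neurons from switching simultaneously. The only differences are presentational: you argue directly, showing every switching neuron must satisfy $H_\nu = \aff(R_k \cap R_\ell)$ and invoking pairwise distinctness, where the paper argues by contradiction (two distinct hyperplanes through $x$ would place $x$ in at least four regions, contradicting $x \in \cF$); note also that your unproved claim that $\bfa_1$ is constant on each region is exactly the implicit refinement hypothesis on the compatible partition that the paper's own four-regions step relies on as well.
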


\begin{proof}
    Consider a function $f : \bbR^d \to \bbR$ in $\SMLPclass_d(\bfn)$ with $\bfn = (n)$ for some $n \in \bbN^*$.
    In this case, the set of hidden neurons is $H = N_1$.
    By \Cref{lem:nosisolatedcaract}, \Cref{assumption:1_layer_colin} is equivalent to the fact that each neuron $\nu \in N_{1}$ is isolated, which by \Cref{def:isolated_neurons} is equivalent to \eqref{eqn:assumption_1}.
    Thus,  \Cref{assumption:relu_nn} implies \Cref{assumption:1_layer_colin}.
    Conversely, assume that
    \Cref{assumption:1_layer_colin} holds. To recover  \Cref{assumption:relu_nn}, it suffices to show \eqref{eqn:assumption_2}.
    First, define the $0$-level sets of each neuron $\nu \in N_1$ as $H_{\nu} := \{x \in \bbR^d: \ z_\nu(x) = W_1[\nu,:]x + b_1[\nu] = 0 \}$.
    By the non-degeneracy assumption ($W_{1}[\nu,:] \neq 0$ for each $\nu \in N_{1}$) of \Cref{assumption:1_layer_colin}, these sets $H_\nu$ are affine hyperplanes, and by the non-colinearity assumption of \Cref{assumption:1_layer_colin}, they cannot coincide nor be parallel.
    Considering $x \in \cF_{\text{no-diff}}$ we wish to prove that  $x \in \cup_{\nu \in H} \fX_\nu$.
    By contradiction, assume that $x \not \in \cup_{\nu \in H} \fX_\nu$.
    Because of the non-differentiability of $f$ at $x$, there exists $\nu \in N_1$ such that $x \in H_{\nu}$.
    Because $x \not \in \fX_\nu$, another neuron switches: there exists $\mu \neq \nu \in N_1$ such that $z_\mu(x) = 0$.
    Hence $x \in H_\nu \cap H_\mu$, so $x$ belongs at least to four different regions which contradicts $x \in \cF$.
\end{proof}

\subsection{Characterization of convex $\relu$ DAG networks}

\begin{theorem}[NSC for convex $\relu$ networks]
    \label{thm:NCS_convex_DAG}
    Consider a $\relu$ network described by a DAG and parametrized by $\theta$ which implements a CPWL function $f_\theta : \bbR^d \to \bbR$.

    Under \Cref{assumption:relu_nn}, the function $f_\theta$ is convex if and only if for every hidden neuron $\nu$ of the network, it holds
    \begin{equation}
        \label{eqn:NCS_RELU}
        \min_{x \in \fX_\nu}
                    \langle a^{\nu \to}(x,\theta), \Phi^{\nu \to} (\theta)\rangle \geq 0.
    \end{equation}
This holds even with max-pooling neurons, with $\fX_{\nu}$, $a^{\nu \to}$, $\Phi^{\nu \to}$ as defined in \Cref{app:maxpool}.
\end{theorem}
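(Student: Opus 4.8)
The plan is to prove the two implications separately, dispatching the forward (necessity) direction in one line and concentrating the real work on the reverse (sufficiency) direction. For necessity, I would suppose $f_\theta$ is convex; under \Cref{assumption:relu_nn}, condition \eqref{eqn:assumption_1} makes every hidden neuron isolated ($\fX_\nu \neq \emptyset$ for all $\nu \in H$), so \Cref{prop:nec_relu_cvx} applies verbatim and yields exactly \eqref{eqn:NCS_RELU} for every hidden neuron. No new argument is needed here beyond invoking the already-established necessary condition.

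For sufficiency I would assume \eqref{eqn:NCS_RELU} holds for every hidden neuron and verify the local characterization \Cref{prop:cvx_cpwl}\ref{prop:cvx_cpwl_item5}, which via \Cref{prop:cvx_cpwl}\ref{prop:cvx_cpwl_item1} gives convexity of $f_\theta$. Fixing the compatible partition from \Cref{assumption:relu_nn}, I would take an arbitrary frontier point $x \in \cF$ and split into two cases. If $x \notin \cF_{\text{no-diff}}$, then by \eqref{eq:NoDiffFront} there is a ball $B(x,\epsilon)$ on which $f_\theta$ is affine; choosing $\cN = B(x,\epsilon)$, the gradient is constant and \eqref{eqn:local_cvx_ineq} holds with equality for every pair. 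If instead $x \in \cF_{\text{no-diff}}$, then \eqref{eqn:assumption_2} provides a hidden neuron $\nu$ with $x \in \fX_\nu$, and \Cref{def:isolated_neurons} provides a ball $B(x,\epsilon) =: \cN$ on which only $a_\nu$ switches. On this ball every other neuron has constant activation, so $a^{\nu\to}(\cdot,\theta)$ is constant, the only frontier crossing $\cN$ is the zero level set of $z_\nu$, and $f_\theta$ is differentiable at every point of $\cN \setminus \cF$, taking only two slopes according to the value of $a_\nu$.

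It then remains to check \eqref{eqn:local_cvx_ineq} for arbitrary $x_1, x_2 \in \cN \setminus \cF$. When $a_\nu(x_1,\theta) = a_\nu(x_2,\theta)$ the two gradients coincide and the inequality is trivial; otherwise, relabelling so that $a_\nu(x_1,\theta) = 1$ and $a_\nu(x_2,\theta) = 0$, the pair $(x_1,x_2)$ is admissible in the sense of \Cref{lemma:convexity_criterion}, whose second item gives the equivalence between \eqref{eqn:local_cvx_ineq} and $\langle a^{\nu\to}(x,\theta), \Phi^{\nu\to}(\theta)\rangle \geq 0$. The latter holds because \eqref{eqn:NCS_RELU} forces the minimum over $\fX_\nu$, hence the value at $x$, to be non-negative. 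Thus \eqref{eqn:local_cvx_ineq} is satisfied at every frontier point, and \Cref{prop:cvx_cpwl} delivers convexity.

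The hard part will be the bookkeeping in this last step: one must confirm that the ``for any such pair'' clause of \Cref{lemma:convexity_criterion} genuinely covers every pair arising in \Cref{prop:cvx_cpwl}\ref{prop:cvx_cpwl_item5} (handling equal-activation pairs by the trivial equality and reducing mixed pairs to admissible ones), and that the constancy of $a^{\nu\to}$ on $B(x,\epsilon)$ lets one read the scalar $\langle a^{\nu\to}(x,\theta), \Phi^{\nu\to}(\theta)\rangle$ off the single isolated point $x$. Finally, the max-pooling case follows by the identical argument once $\fX_\nu$, $a^{\nu\to}$, $\Phi^{\nu\to}$ and \Cref{lemma:convexity_criterion} are replaced by the versions established in \Cref{app:maxpool}.
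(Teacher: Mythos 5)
Your proposal is correct and follows essentially the same route as the paper's proof: necessity is dispatched by \Cref{prop:nec_relu_cvx} once \eqref{eqn:assumption_1} makes every hidden neuron isolated, and sufficiency is obtained by verifying the local characterization \Cref{prop:cvx_cpwl}\ref{prop:cvx_cpwl_item5} at each frontier point, splitting on whether $x \in \cF_{\text{no-diff}}$ and invoking \eqref{eqn:assumption_2} together with \Cref{lemma:convexity_criterion} in the non-affine case. Your explicit case analysis in the last step (equal-activation pairs handled by coinciding gradients, mixed pairs relabelled to fit item 2 of \Cref{lemma:convexity_criterion}) merely spells out bookkeeping that the paper's proof leaves implicit when it asserts the monotonicity inequality for all pairs in $B(x,\epsilon) \setminus \cF_{\text{no-diff}}$.
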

\begin{proof}
    \textbf{Necessity.} Assume $f_\theta$ convex. By \Cref{assumption:relu_nn}, every hidden neuron $\nu$ is isolated (as $\fX_\nu \neq \emptyset$). We conclude with \Cref{prop:nec_relu_cvx}. \\
    \textbf{Sufficiency.} Assume that \eqref{eqn:NCS_RELU} holds for every hidden neuron (notice that, by  \Cref{assumption:relu_nn}, $\fX_\nu \neq \emptyset$).
    We will exploit the characterization of \Cref{prop:cvx_cpwl} \ref{prop:cvx_cpwl_item5}.
    For this, consider an arbitrary $x \in \cF$.
    If $f_\theta$ is affine for every small enough ball centered at $x$ then we directly get the expected inequality \eqref{eqn:local_cvx_ineq}.
    Otherwise we must have $x \in \cF_{\text{no-diff}}$.
    By \Cref{assumption:relu_nn}, there exists a hidden neuron $\nu$ such that $x \in \fX_{\nu}$. By \eqref{eqn:NCS_RELU}  we have $\langle a^{\nu \rightarrow} (x,\theta), \Phi^{\nu \rightarrow} (\theta) \rangle \geq 0$.
    By \Cref{lemma:convexity_criterion}, there exists $\epsilon > 0$ such that for all $x^+, x^- \in B(x,\epsilon) \setminus \cF_{\text{no-diff}}$, $f_\theta$ is differentiable, and with $u^{+}= \nabla f_{\theta}(x^{+})$,  $u^{-}= \nabla f_{\theta}(x^{-})$ we have
     $\langle u^+ - u^-, x^+ - x^- \rangle \geq 0 $.
  This yields again \eqref{eqn:local_cvx_ineq}, which therefore holds for any $x \in \cF$. We conclude using \Cref{prop:cvx_cpwl} \ref{prop:cvx_cpwl_item5}.
\end{proof}

\subsection{Genericity of \Cref{assumption:relu_nn}}

In this section, we comment on why \Cref{assumption:relu_nn} is not too restrictive.

Back to the one-hidden-layer case, \Cref{lemma:equiv_assumptions} shows that \Cref{assumption:1_layer_colin,assumption:relu_nn} are equivalent.
As already discussed in \Cref{sec:onehiddenlayer}, \Cref{assumption:1_layer_colin} is not restrictive in the sense that every function implemented by a network that does not satisfy it can also be  implemented by a network of the same depth and with fewer neurons which \emph{does} satisfy it (resulting in \Cref{prop:ICNNallyouneed}).

Still in the one-hidden-layer case, a stronger assumption, which implies \Cref{assumption:1_layer_colin} and yet is generic, is to assume that for each $\nu \in N_{1}$ the $0$-level set of $z_\nu$ is a hyperplane, and that the resulting collection of hyperplanes is in {\em general position}\footnote{Let $\nu_1, \dots, \nu_m \in N_1$ and $\cH := \{ H_{\nu_1}, \dots, H_{\nu_m} \}$ where $H_{\nu_i} := \{x \in \bbR^d : W_1[\nu_i, :] x + b_1[\nu_i] = 0\}$.
If $W_1[\nu_i,:]$ is a non-zero vector for all neurons $\nu_i$, then $\cH$ is a \emph{finite (affine) hyperplane arrangement}.
Such an arrangement is said to be in \emph{general position} \cite{stanley2004introduction} if:
\begin{enumerate}[label=(\roman*)]
    \item for $k \leq d$, every $k$-fold  intersection of hyperplanes has dimension $d-k$,
    \item for $k > d$, every $k$-fold  intersection of hyperplanes is empty.
\end{enumerate}}.
This is more restrictive than \Cref{assumption:1_layer_colin}: in $\bbR^2$, it prevents $3$ hyperplanes to intersect at the same point.
In particular, it does not allow null biases.

When looking at deeper \emph{standard MLPs}\footnote{{\em i.e.,} with no skip connection} $\relu$ networks, there exists  an extension of this general position property that applies to so-called \emph{bent} hyperplanes, i.e. to the $0$-level set of pre-activation of any hidden neuron $\nu \in H$.

\citet{grigsby2022transversality} introduce the {\em transversality} property, corresponding to every $0$-level set of the pre-activation of any neuron $\nu \in N$ either having dimension $d-1$ or being empty.
 \Citet{masden2022algorithmic} then introduced the \textit{supertransversality property} ensuring that intersection of $k$ bent hyperplanes has dimension $d-k$. These properties are shown to hold on almost every standard $\relu$ MLPs.
Because of the possibility of having hidden neurons whose activation never changes, these assumptions do no exactly match \Cref{assumption:relu_nn}.
Yet, since such neurons do not introduce new regions, we believe that the characterization of \Cref{thm:NCS_convex_DAG} would still hold under a slight modification of \Cref{assumption:relu_nn}
that would be a direct consequence of transversality and supertransversality.
Such a variant of \Cref{assumption:relu_nn} would hence be likely to hold for almost every $\relu$ network.

\subsection{Back to the 2D example}
We can leverage the sufficient conditions given by \Cref{thm:NCS_convex_DAG} to prove convexity of the 2D function $f_\ex$ of~\Cref{prop:ICNN_counter_ex}.
We have already mentioned (see proof in \Cref{app:counter_ex}) that all its hidden neurons are isolated and that at
each frontier point where $f_\ex$ is non-differentiable, there is a single neuron whose activation changes.
Hence, \Cref{assumption:relu_nn} holds.
To prove convexity of $f_\ex$, it is sufficient to check that for every neuron $\nu$ in $N_1 \cup N_2$, \Cref{eqn:NCS_RELU} holds.
Denoting $\nu_{\mathrm{out}}$ the output neuron,
for each neuron $\nu \in N_2$ of the second hidden layer the set of paths $\cP^{\nu \to} = \{ \nu \to \nu_{\mathrm{out}} \}$ is a singleton and for all $x \in \fX_{\nu}$,  $a^{\nu \to} (x,\theta):= (a_{\nu_{\mathrm{out}} }(x,\theta)) = (1)$ and $\phi^{\nu \to} = (\theta^{\nu \to \nu_{\mathrm{out}}}) = w_3[\nu]$.
As $w_3 = (1,1)^\top$, \Cref{eqn:NCS_RELU} (which simply reads $w_{3}[\nu] \geq 0$) is satisfied for all $\nu \in N_2$.
Now, consider $\mu \in N_1$ and denote $N_2 := \{ \nu_1 , \nu_2 \}$.
We have $\cP^{\mu \to} = \{ \mu \to \nu_1 \to \nu_{\mathrm{out}}, \mu \to \nu_2 \to \nu_{\mathrm{out}}  \}$, and for all $x \in \fX_{\mu}$,
\begin{equation*}
    a^{\mu \to}(x, \theta) = \begin{pmatrix}
        a_{\nu_1}(x,\theta) \underbrace{a_{\nu_{\mathrm{out}} }(x,\theta)}_{:=1} \\ a_{\nu_2}(x,\theta) \underbrace{a_{\nu_{\mathrm{out}} }(x,\theta)}_{:=1}
    \end{pmatrix} = \bfa_2 (x, \theta),
\end{equation*}
\begin{equation*}
    \phi^{\mu \to}(\theta) = \begin{pmatrix}
       \theta^{\mu \to \nu_1} \theta^{\nu_1 \to \nu_{\mathrm{out}}}  \\
       \theta^{\mu \to \nu_2} \theta^{\nu_2 \to \nu_{\mathrm{out}}}
    \end{pmatrix} = (w_3 \odot W_2[:,\mu]),
\end{equation*}
so \Cref{eqn:NCS_RELU} boils
down --as expected-- to the same necessary condition given in the two-hidden-layer case
\begin{equation}
    \min_{x \in \fX_\mu} \langle \bfa_2(x,\theta), w_3 \odot W_2[:,\mu] \rangle \geq 0 ,
\end{equation}
which we have already verified in \Cref{subsec:2HL_ICNN}.

 \section{Numerical check of convexity}
\label{sec:numerics}

Given an architecture $G = (N,E)$, the two main computational bottlenecks to check the convexity conditions of \Cref{thm:NCS_convex_DAG} for a $\relu$ network with architecture $G$ and parameters $\theta$ are:
\begin{enumerate}
    \item \textbf{Handling the dimension of the vectors $\phi^{\nu \to}$ and $a^{\nu \to}$} involved in the scalar product. Considering {\em feedforward} networks, the number of paths grows exponentially with the number of layers, and in general with the depth of the network, so this is {\em a priori} a daunting task.
    \item \textbf{Computing the set of activation patterns $a^{\nu \to} (\fX_\nu)$} for every hidden neuron $\nu \in H$.
    This requires identifying all input points $x \in \fX_\nu$ where only the activation of $\nu$ changes.
    In practice, it amounts to identify all frontiers and regions of the CPWL function, whose number grows exponentially with the number of neurons.
\end{enumerate}

The first bottleneck can be addressed as follows.
Given one hidden neuron $\nu \in H$ and one path-activation vector $\bfa \in \bfa^{\nu \to}(\fX_\nu)$, the constraint to be checked $\langle \bfa, \Phi^{\nu \to}(\theta) \rangle \geq 0$ is reminiscent of \Cref{eqn:f_from_path} which gives the network output as a scalar product between the path-activation matrix and the path-lifting.
More precisely, $\langle \bfa, \Phi^{\nu \to}(\theta) \rangle \geq 0$ is \emph{exactly} the output of a modified network with architecture $G^{\nu \to}$ ($\nu$ is its input neuron), {\em with a scalar input} equal to $1$,  {\em with the biases of all its hidden neurons set to zero}, and with activation values given by $\bfa$ ({i.e.}, unlike in standard $\relu$ networks, the activation here is {\em not} dependent on the modified network's input; this is further detailed below).
Given a hidden neuron $\nu \in H$, suppose we have already identified a finite subset $ \fX_\nu^{\mathrm{finite}} \subset \fX_\nu$ such that $\bfa^{\nu \to}(\fX_\nu^{\mathrm{finite}} )=\bfa^{\nu \to}(\fX_\nu)$
(remember that there is a finite number of activation patterns in $\bfa^{\nu \to}(\fX_\nu)$).
Then, one can circumvent the explicit computation of the vector $\Phi^{\nu \to}$ (of combinatorially high dimension) and the enumeration of all paths by proceeding as follows:
\begin{itemize}
    \item extract the architecture corresponding to the sub-graph $G^{\nu \to} := (N^{\nu \to}, E^{\nu \to})$;
    \item set all biases of the sub-network to zero, i.e. parametrize the architecture $G^{\nu \to}$ with $\tilde \theta$ such that
    \begin{align}
        \tilde \theta_{\mu} &:= 0 & \forall \mu \in N^{\nu \to} \\
        \tilde \theta^{\mu \to \mu'} & := \theta^{\mu \to \mu'} &  \forall (\mu, \mu') \in E^{\nu \to}
    \end{align}
    \item for each identified input point $x \in  \fX_\nu^{\mathrm{finite}}$, perform a forward pass on the {\em initial network} and store the activations neuron-wise, i.e. store $a_{\mu}:= a_{\mu}(\theta,x)$ for every $\mu \in N$;
    \item replace the $\relu$ activation\footnote{The adaptation to max-pooling neurons in the spirit of \Cref{app:maxpool} is straighforward.} of each hidden and output neuron $\mu \in N^{\nu \to}$ by a function that computes a simple product $z  \mapsto \mu(x):= a_\mu(x) z$,
    \item do a forward pass with scalar input equal to $1$ to obtain $\langle a^{\nu \to}(x, \theta), \Phi^{\nu \to}(\theta)\rangle$.
\end{itemize}

Regarding the second bottleneck, it requires identifying points in $\fX_\nu$: notice that any input point $x \in \fX_\nu$  is such that $z_\nu(x) = 0$ and $z_\mu(x) \neq 0 $ for $\mu \neq \nu$.
The problem of identifying bent hyperplanes (i.e. $0$-level sets of $z_\nu$) has already been previously studied \citep{rolnick2020reverse,humayun2023splinecam,berzins23relu}.
Notably, \citet{berzins23relu} provides an efficient algorithm which --given a {\em compact} input domain--  {\em exactly} extracts all intersections between bent hyperplanes.
This algorithm is able to extract $0-$faces and $1$-faces of the polyhedral partition of a \addrebut{feedforward fully-connected} $\relu$ network in a few seconds for architectures with up to $10$ layers and hundreds of neurons per layer, defining millions of regions.

\begin{remark}
While all the convexity conditions expressed in this paper are for convex functions $f: \bbR^{d} \to \bbR$, the framework can easily be adapted to study convexity on any non-empty convex domain $\Omega \subseteq \bbR^{d}$. This essentially requires replacing $\fX_{\nu}$ from \Cref{def:isolated_neurons} (and its variant for max-pooling from \Cref{app:maxpool}) and $\cF$ by their intersection with $\Omega$ everywhere.
\end{remark}

We build upon the framework of \cite{berzins23relu} to provide an algorithm which numerically checks convexity of a given $\relu$ network on a compact convex domain $\Omega \subseteq \bbR^{d}$: it checks if the necessary conditions from \Cref{thm:NCS_convex_DAG} are satisfied with $\fX_{\nu}$ replaced by its intersection with $\Omega$.
We detail in \Cref{app:algo} how to proceed to obtain the sets $a^{\nu \to}(\fX_\nu)$ for each hidden neuron $\nu \in H$.
Then, we apply the method described above to compute the scalar product $ \langle \bfa, \Phi^{\nu \to} \rangle$ and finally we test whether all these inner products are non-negative.

\subsection{Experiments}

In this section, we look at the probability of randomly sampling a convex $\relu$ network, for a given architecture, when drawing parameters $\theta$ from a Gaussian distribution. Convexity of the implemented function $f_\theta$ is numerically checked with the method described previously.
We consider standard MLPs with two-hidden-layer networks and input dimension equal to 2.
The width of each hidden layer varies from $2$ to $7$.

For each  of these architectures $\bfn = (n_1, n_2)$ with $n_1,n_2 \in \{2,\dots, 7\}$, we draw $10^4$ parameters $\theta$ (weights and biases) at random according to a standard Gaussian.
We compare the number of sampled convex functions  to the number of sampled ICNNs.
Note, the probability of drawing a random ICNN can be easily analytically computed: each weight in $W_2$ and $w_3$ having probability $1/2$ to be non-negative under standard Gaussian initialization, it follows
\begin{equation}
        \bbP \left(f_\theta \in \ICNNclass_2(\bfn) \right) = \frac{1}{2^{n_2(n_1 +1)}} .
\end{equation}
For a small architecture $\bfn = (2,2)$, $\bbP(f_\theta \in \ICNNclass_2(\bfn)) \approx 0.016$ which is to be compared with the experimental frequency of convex $\relu$ networks (including the ICNNs) which is $4$ times greater (\Cref{fig:nums_cvx}).
Increasing the number of neurons in the architecture further increase the ratio between the number of obtained convex $\relu$ networks and the number of obtained ICNNs: adding a single neuron more than double this ratio.
This suggests that for larger architectures, ICNNs are a very small fraction of all the convex $\relu$ networks implementable.
\addrebut{However, note that among these convex $\relu$ networks, some implemented functions may be also implementable by ICNNs with the same architecture.}

While our condition for convexity could allow one to analytically derive the probability of having a convex $\relu$ network (not necessarily ICNN), the combinatorial aspects of the exploitation of these conditions remain currently somewhat challenging. Yet, we anticipate that this should be possible in the $\bfn = (2,2)$ case for instance.

\begin{figure*}[tpb]
    \centering
    \includegraphics[width=\textwidth]{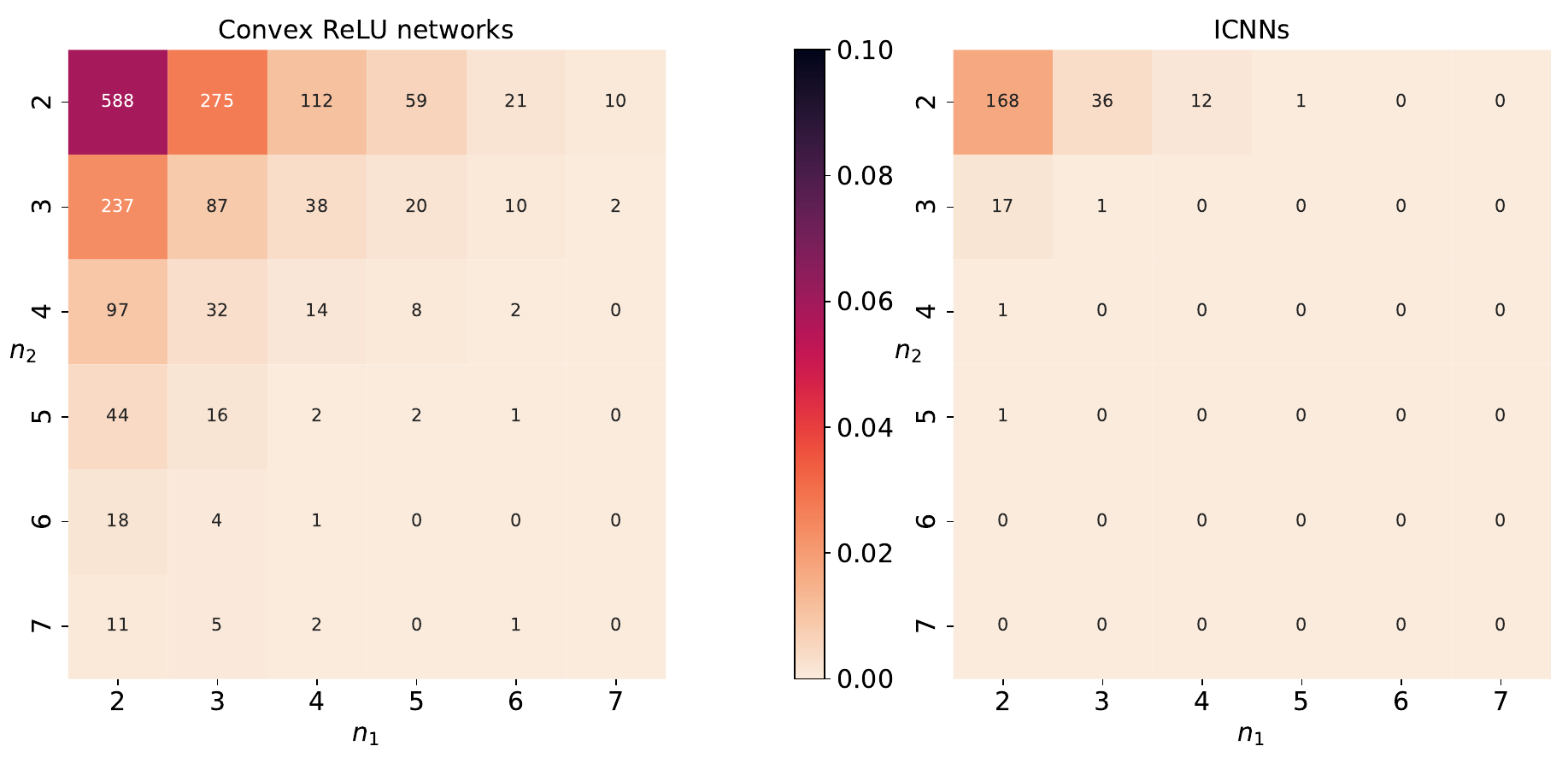}
    \caption{Number of convex $\relu$ networks among
    $10^{4}$ draws. (Left): convex $\relu$ networks. (Right): ICNNs.
     Architecture $\bfn = (n_1, n_2)$. \addrebut{The probability of sampling a convex function is higher than that of obtaining an ICNN, with a gap which becomes larger as the width of the layers increases.  It confirms that ICNN constraints do restrict the set of implementable convex functions at a given architecture.}}
    \label{fig:nums_cvx}
\end{figure*}

\section{Conclusion}

Through our exploration of convexity conditions for ReLU networks, we confirmed the intuition that there is life beyond ICNNs. Despite being the current standard to implement convex functions, ICNNs have intrinsic expressivity limitations starting from simple two-hidden-layer networks.
Thanks to the analysis of necessary and sufficient convexity conditions for the general class of DAG $\relu$ networks, we have highlighted that there are many parameterizations of such neural networks that yield convex functions, far beyond the restricted scope of ICNNs.

The expressed conditions allow to concretely {\em test} the convexity of a given network of moderate size, and allowed to concretely observe on small random two-hidden-layer networks the increased probability of drawing a convex function when using Gaussian parameters.
Being based on the non-negativity of a finite number of computable criteria, these conditions also naturally give rise to possible regularizers to promote convexity while training a network. One of the main challenges lying ahead is computational, with a particular focus on reducing as much as possible the number of convexity conditions to be tested. A possible avenue is to explore possible redundancies between these constraints as well as the graph structure of the network.

\section{Acknowledgements}
R. Gribonval is grateful to Antonin Chambolle for enlightening discussions on related topics during the SMAI-MODE 2024 conference. This project was supported in part by the AllegroAssai ANR project ANR-19-CHIA0009 and by the SHARP ANR project ANR-23-PEIA-0008 funded in the context of the France 2030 program.
A. Gagneux thanks the RT IASIS for supporting her work through the PROSSIMO grant.

\newpage
\bibliography{references.bib}

\onecolumn
\begin{appendices}
\crefalias{section}{appendix}
\section{\addrebut{Notations}}

\addrebut{We list below recurrent symbols in this paper along with their description.
\begin{description}[align=right,labelwidth=2cm]
    \item[$\mathbf n$]  \quad  tuple of widths of hidden layers
    \item[$\ICNNclass_d(\mathbf n)$]  \quad  set of convex functions $\bbR^d \to \bbR$ implemented by an ICNN architecture $\mathbf n$
    \item[$\SMLPclass_d(\mathbf n)$]  \quad set of functions $\bbR^d \to \bbR$ implemented by an MLP architecture $\mathbf n$ with
    \item[] \quad weighted input skip connections allowed
    \item[$\CvxCPwL_d$] \quad  set of convex functions $\bbR^d \to \bbR$
    \item[$N$] \quad set of all neurons of a network
    \item[$\Nin$] \quad  set of \emph{input} neurons of a network
    \item[$\nu \in N $] \quad a neuron
    \item[$\cF$] \quad  set of all frontiers points of the polyhedral partition of a CPWL function
    \item[$\cF_{\text{no-diff}}$] \quad  set of all points $x \in \cF$ such that the function is not differentiable at $x$
    \item[$\fX_\nu$] \quad input points at which only the activation of neuron $\nu$ switches
    \item[$G$] \quad a DAG describing the architecture of a neural network
    \item[$\cP$] \quad set of all paths ending at the output neuron of the DAG
    \item[$\Phi(\theta)$] \quad  path-lifting for DAG with parameters $\theta$, vector of dimension $|{\cP}|$
    \item[$A(x, \theta)$] \quad  path-activations for DAG with parameters $\theta$ at input point $x$, matrix of dimension
    \item[] \quad $|\cP| \times (|\Nin|+1)$
\end{description}
}
 \section{Background on CPWL functions}

\label{app:background_cpwl}
\begin{definition}[Affine function]
    A function $f : \Omega \to \bbR$ defined on a convex set $\Omega \subset \bbR^d$
    is \emph{affine} if and only if for every  $x , y \in \Omega$, $t \in [0,1]$, $f \left( (1- t) x + ty \right) = (1-t) f(x) + t f(y)$.
\end{definition}

\begin{proposition}[{\cite[Section 3.1.1]{boyd2004convex}}]
    A function $f : \Omega \to \bbR$ defined on a convex set $\Omega \subset \bbR^d$ is affine if and only if there exists $u \in \bbR^d$, $b \in \bbR$ such that, for every $x \in \Omega$,
    \begin{equation}
        f(x) = \langle u, x \rangle + b \ .
    \end{equation}
\end{proposition}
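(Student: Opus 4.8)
The plan is to prove the two implications separately. The reverse (``if'') implication is immediate: assuming $f(x)=\langle u,x\rangle + b$ on $\Omega$, for any $x,y\in\Omega$ and $t\in[0,1]$ the combination $(1-t)x+ty$ lies in $\Omega$ by convexity, and a direct substitution gives $f((1-t)x+ty)=(1-t)\langle u,x\rangle+t\langle u,y\rangle+b=(1-t)f(x)+tf(y)$, which is exactly the affineness identity of the preceding definition. This uses nothing about $\Omega$ beyond closure under convex combinations.

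For the forward (``only if'') implication I would first reduce the problem to producing a linear functional. Fixing any $x_0\in\Omega$ (the case $\Omega=\emptyset$ being vacuous), set $D:=\Omega-x_0$, a convex set containing $0$, and define $h:D\to\bbR$ by $h(v):=f(x_0+v)-f(x_0)$, so that $h(0)=0$. Writing $x_0+\bigl(sv_1+(1-s)v_2\bigr)=s(x_0+v_1)+(1-s)(x_0+v_2)$ and applying the affineness identity to the points $x_0+v_1,x_0+v_2\in\Omega$ shows that $h(sv_1+(1-s)v_2)=sh(v_1)+(1-s)h(v_2)$ for all $v_1,v_2\in D$ and $s\in[0,1]$; in particular, taking $v_2=0$ yields the ray-homogeneity $h(tv)=th(v)$ for $t\in[0,1]$.

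The remaining task is to upgrade $h$ into the restriction of a linear functional. I would extend $h$ first to the cone $C:=\{\lambda v:\lambda\geq 0,\ v\in D\}$ by $\tilde h(\lambda v):=\lambda h(v)$; ray-homogeneity makes this well defined, and convexity of $D$ gives additivity, since for $c_i=\lambda_i v_i$ the sum $c_1+c_2=(\lambda_1+\lambda_2)w$ with $w$ a convex combination of $v_1,v_2$ in $D$, whence $\tilde h(c_1+c_2)=\lambda_1 h(v_1)+\lambda_2 h(v_2)$ by the two-point identity above. Then I would extend to the linear span $L:=C-C=\mathrm{span}(D)$ via $\ell(c_1-c_2):=\tilde h(c_1)-\tilde h(c_2)$, which is well defined precisely because $\tilde h$ is additive on $C$. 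The resulting $\ell$ is linear on the subspace $L\subseteq\bbR^d$, hence is represented by some $u\in\bbR^d$ (extend $\ell$ by zero on the orthogonal complement of $L$) as $\ell(z)=\langle u,z\rangle$. Finally, for $x\in\Omega$ one has $x-x_0\in D\subseteq L$, so $f(x)=f(x_0)+h(x-x_0)=\langle u,x\rangle+b$ with $b:=f(x_0)-\langle u,x_0\rangle$, as claimed.

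The main obstacle is entirely in this last construction: because $\Omega$ is an arbitrary convex set it may be lower-dimensional or have empty interior, so $u$ is genuinely non-unique and cannot be read off from partial derivatives; the careful point is to verify that $\tilde h$ and $\ell$ are well defined, which is exactly where convexity of $\Omega$ is used. I note that in the situations where this proposition is applied in the paper the relevant $\Omega$ is a full-dimensional region $R_k$ with $\intt(R_k)\neq\emptyset$, in which case a shortcut is available: one recovers each coordinate $u_i$ as the constant slope of $f$ along the $i$-th axis direction through an interior point, and the affineness identity over triples of points then forces $f(x)=\langle u,x\rangle+b$ throughout $\Omega$.
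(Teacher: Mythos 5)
Your proof is correct, but it is worth noting that the paper does not actually prove this statement at all: it is listed as a background fact with a citation to \cite[Section 3.1.1]{boyd2004convex}, so there is no internal proof to compare against. What you have supplied is a complete, self-contained derivation, and its delicate steps check out: the reverse implication is indeed a one-line substitution; for the forward implication, the two-point identity for $h$ on $D=\Omega-x_0$ follows correctly from the affineness definition, ray-homogeneity makes the conic extension $\tilde h$ well defined (if $\lambda_1 v_1=\lambda_2 v_2\neq 0$ with, say, $\lambda_1\geq\lambda_2>0$, then $v_1=(\lambda_2/\lambda_1)v_2\in D$ and homogeneity gives $\lambda_1 h(v_1)=\lambda_2 h(v_2)$), additivity on the cone follows from convexity of $D$ exactly as you argue, and well-definedness of $\ell$ on $C-C=\mathrm{span}(D)$ is precisely the standard Grothendieck-style extension of an additive positively homogeneous map to the generated subspace, after which Riesz representation on that subspace produces $u$. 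Your approach buys generality — it covers convex sets $\Omega$ with empty interior, where $u$ is non-unique and cannot be recovered from partial derivatives — whereas your closing remark is also accurate: in the paper's actual use cases (regions $R_k$ of a polyhedral partition, which have nonempty interior by \Cref{def:poluhedron_part}), the shorter argument via constant slopes along coordinate directions through an interior point would suffice.
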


\begin{definition}[Polyhedron]
    \label{def:polyhedron}
    A polyhedron (also called a polyhedral set) $\Omega \subset \bbR^d$ is a non-empty intersection of finitely many closed half-spaces.
    A polytope is a polyhedron that is bounded (in $\bbR^d$).
\end{definition}

We recall below some useful terminology for polyhedra, see also \cite{Ziegler1995}.

\begin{definition}
    Consider a polyhedral set $P$.
    \begin{itemize}
        \item A linear inequality $a^\top x \leq b $ is \emph{valid} for $P$ if it is satisfied for all points $x \in P$.
        \item A set $F$ is called a \emph{face} of $P$ if $F = P \cap \{ x \in \bbR^d : a^\top x = b \} $ where $a^\top x \leq b $ is a valid inequality for $P$.
        A face of a polyhedron is also a polyhedron.
        \item The dimension of a polyhedron $P$ is the dimension of its affine hull $\aff(P)$.
        Likewise, the dimension of a face is the dimension of its affine hull: $\dim(F) = \dim(\aff(F))$.
        \item A face whose dimension  is $d-1$ is called a \emph{facet}.
        \item $F = \emptyset$ and $F = P$ are \emph{improper} faces.
        All other faces are \emph{proper} faces.
    \end{itemize}
\end{definition}

 \section{Background on convexity}
\label{appendix:background_cvx}

\begin{definition}[Line segment]
    The line segment between $x \in \bbR^d$ and $y \in \bbR^d$ is
    \begin{equation}\label{eq:DefLineSegment}
        [x,y] := \{ (1-t) x + t y : t \in [0,1] \} \ .
    \end{equation}
\end{definition}

\begin{definition}[Convex set]
    A set $\Omega \subset \bbR^d$ is convex if and only if it contains all its line segments, i.e. for every $x,y \in \Omega$, $[x,y] \subset \Omega$.
\end{definition}

\begin{definition}
    Let $f : \bbR^d \to (-\infty, +\infty]$ be a function.
    The \emph{domain} of $f$ is $\dom f := \{x \in \bbR^d : f(x) <  +\infty \}$.
    The function $f$ is \emph{proper} if $\dom f \neq \emptyset$.
\end{definition}

\begin{definition}[Subdifferential]
The subdifferential of a function $f : \bbR^d \to \bbR$ at $x$ is the (possibly empty) set of slopes of all affine minorants of $f$ that are exact at $x$:
\begin{equation}
    \partial f(x) := \{ u \in \bbR^d :\forall y \in \bbR^d,f(y) \geq f(x)+ \langle u,y- x \rangle \} \ .
\end{equation}
\end{definition}

\begin{fact}\label{fact:non_empti_subdif}
    A proper function $f : \bbR^d \to (- \infty, + \infty]$ is convex if and only if $\partial f(x) \neq \emptyset$ for all $x \in  \mathrm{ri}(\dom(f))$,
    where $\mathrm{ri}$ denotes the relative interior \citep[Def. 6.9]{bauschke2011convex}.
\end{fact}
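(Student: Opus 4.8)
The plan is to prove both implications with the standard separation machinery of convex analysis, working with the epigraph $\mathrm{epi}\,f := \{(x,t)\in\bbR^{d}\times\bbR : t\geq f(x)\}$ and with the affine minorants encoded by the subdifferential. For the forward implication I assume $f$ proper and convex, so that $\mathrm{epi}\,f$ is a nonempty convex subset of $\bbR^{d+1}$. Fixing $x_{0}\in\mathrm{ri}(\dom f)$, the point $(x_{0},f(x_{0}))$ lies on the boundary of $\mathrm{epi}\,f$, so a supporting hyperplane furnishes $(u,\alpha)\neq 0$ with
\[
\langle u, x\rangle + \alpha t \leq \langle u, x_{0}\rangle + \alpha f(x_{0})\qquad\forall\,(x,t)\in\mathrm{epi}\,f.
\]
Sending $t\to+\infty$ at $x=x_{0}$ forces $\alpha\leq 0$; if $\alpha<0$ I rescale to $\alpha=-1$ and set $t=f(x)$ to obtain $f(x)\geq f(x_{0})+\langle u, x-x_{0}\rangle$ for all $x$, i.e. $u\in\partial f(x_{0})$. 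The whole direction therefore reduces to excluding the \emph{vertical} case $\alpha=0$.

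Excluding $\alpha=0$ is the step I expect to be the main obstacle, and it is exactly where the relative-interior hypothesis is indispensable. If $\alpha=0$ then $\langle u, x-x_{0}\rangle\leq 0$ for every $x\in\dom f$, so $u$ separates $x_{0}$ from $\dom f$; but $x_{0}\in\mathrm{ri}(\dom f)$ forces $u\perp(\aff(\dom f)-x_{0})$. I would then pass to the affine hull of $\dom f$, in which $x_{0}$ is a genuine interior point and $f$ is continuous, produce a non-vertical supporting functional there, and lift it back by adding a suitable multiple of the vertical functional $u$. Carrying out this affine-hull reduction cleanly, together with invoking the finite-dimensional continuity of a convex function on the relative interior of its domain, is the delicate part of this direction.

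For the converse I assume $\partial f(x)\neq\emptyset$ for every $x\in\mathrm{ri}(\dom f)$, pick $u_{x_{0}}\in\partial f(x_{0})$ for each such $x_{0}$, and note that by definition of the subdifferential the affine map $a_{x_{0}}(y):=f(x_{0})+\langle u_{x_{0}}, y-x_{0}\rangle$ is a global minorant of $f$ exact at $x_{0}$. Then $g:=\sup_{x_{0}\in\mathrm{ri}(\dom f)} a_{x_{0}}$ is a supremum of affine functions, hence convex, with $g\leq f$ everywhere and $g=f$ on $\mathrm{ri}(\dom f)$. For a segment $[p,q]$ with $p,q\in\dom f$ whose relative interior meets $\mathrm{ri}(\dom f)$, any $r=(1-t)p+tq$ in that relative interior satisfies
\[
f(r)=g(r)\leq (1-t)g(p)+t g(q)\leq (1-t)f(p)+t f(q),
\]
which is precisely the convexity inequality. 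The residual case, segments lying entirely in the relative boundary of $\dom f$, I would treat by induction on dimension, restricting $f$ to the corresponding face and using the line-segment principle and the other structural facts on relative interiors of convex sets recalled in \Cref{appendix:background_cvx}; establishing convexity of $\dom f$ (which the line-segment principle presupposes) is the structural point on which this direction hinges, mirroring the role played by $\mathrm{ri}(\dom f)$ in the forward implication.
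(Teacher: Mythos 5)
Your forward implication is a sound outline of the classical argument (it is essentially how \cite[Thm. 23.4]{rockafellar2015convex} proves it, which is all the paper itself does for this direction -- the paper's ``proof'' is a pair of citations, to Rockafellar for $\Rightarrow$ and to \cite{moreau1963fonctionnelles} for $\Leftarrow$): support the epigraph at $(x_0,f(x_0))$, get $\alpha\le 0$, and use $x_0\in\mathrm{ri}(\dom f)$ to rule out vertical hyperplanes by passing to the affine hull, where a proper convex function is continuous on the interior of its domain. That step is deferred rather than executed, but the route works.

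The converse, however, has a genuine gap, and it sits exactly in the two steps you postpone -- neither can be carried out, because the converse as literally stated is false. First, convexity of $\dom f$ does not follow from the hypotheses: take $d=1$, $f(0)=f(1)=0$ and $f=+\infty$ elsewhere. Then $f$ is proper, $\mathrm{ri}(\dom f)=\emptyset$, so the subdifferentiability hypothesis holds vacuously, yet $f$ is not convex. Second, even when $\dom f$ is convex, the ``residual case'' of segments lying in the relative boundary is precisely where the claim breaks: on $\dom f=[0,1]^2$ let $f=0$ everywhere on $[0,1]^2$ except $f(x,0)=1$ for $0<x\le \tfrac12$, and $f=+\infty$ outside the square. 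Since $f\ge 0$ everywhere and $f=0$ on the open square, $0\in\partial f(z)$ for every $z\in\mathrm{ri}(\dom f)$, so the hypothesis holds; but the restriction of $f$ to the bottom edge violates convexity (compare $(0,0)$, $(\tfrac12,0)$ and their midpoint $(\tfrac14,0)$: $1 > \tfrac12$). Your proposed induction on dimension cannot repair this, because the hypothesis yields no subdifferentiability for the restriction of $f$ to a proper face (indeed in this example that restriction is not subdifferentiable at $(\tfrac12,0)$). A correct converse needs an extra assumption, e.g. $f$ lower semicontinuous \emph{and} $\dom f$ convex: under these, your construction $g:=\sup$ of the exact affine minorants does close the argument, since $g\le f$, $g=f$ on $\mathrm{ri}(\dom f)$, and lower semicontinuity plus the line-segment principle force $f=g$ on all of $\dom f$, so that $f$ equals a convex function on its convex domain and $+\infty$ outside, hence is convex. (Alternatively, if one assumes $\partial f\neq\emptyset$ on \emph{all} of $\dom f$ with $\dom f$ convex, convexity follows in two lines from the subgradient inequality at $(1-t)x+ty$.) This corrected form is all the paper actually needs: in its only application, \Cref{lemma:piecewise_convexity_1D}, the functions are continuous on a compact convex domain.
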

\begin{proof}
	The implication $\Rightarrow$ can be found in~\cite[Thm. 23.4]{rockafellar2015convex}.
    The converse property $\Leftarrow$ can be found in \cite[Sec. 1]{moreau1963fonctionnelles}.
\end{proof}

\begin{fact}
    Let $f : \cI := [t_1, t_2] \to \bbR$ be a convex function where $t_1, t_2 \in \bbR,\ t_1< t_2$.
    Then, $f'_+(t_1)$ and $f'_-(t_2)$ exist (but may be equal to $- \infty$ or $+\infty$, respectively).
\end{fact}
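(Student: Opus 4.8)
The plan is to reduce the existence of the one-sided derivatives to the monotonicity of difference quotients, which is the defining feature of one-dimensional convexity. First I would record the standard ``three-slopes'' inequality: for a convex $f$ and any $a < b < c$ in $[t_1,t_2]$,
\[
\frac{f(b)-f(a)}{b-a} \;\le\; \frac{f(c)-f(a)}{c-a} \;\le\; \frac{f(c)-f(b)}{c-b}.
\]
This is obtained by writing $b = (1-\lambda)a + \lambda c$ with $\lambda = (b-a)/(c-a) \in (0,1)$, applying the convexity inequality $f(b) \le (1-\lambda)f(a) + \lambda f(c)$, and rearranging each side.

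For the right derivative at $t_1$, I would introduce the slope function $\phi(h) := (f(t_1+h)-f(t_1))/h$ for $h \in (0,\,t_2-t_1]$. The left inequality above (taken with $a=t_1$) shows that $\phi$ is non-decreasing in $h$. Hence as $h \downarrow 0$ the values $\phi(h)$ are non-increasing, so the limit $f'_+(t_1) = \lim_{h\to 0^+}\phi(h)$ exists in $[-\infty,+\infty]$ and equals $\inf_{h>0}\phi(h)$. Since $\phi(t_2-t_1) = (f(t_2)-f(t_1))/(t_2-t_1)$ is a finite upper bound for this infimum, we conclude $f'_+(t_1) \in [-\infty,+\infty)$; in particular the value $+\infty$ is excluded, as claimed.

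The left derivative at $t_2$ is handled symmetrically. Setting $\psi(h) := (f(t_2)-f(t_2-h))/h$ for $h \in (0,\,t_2-t_1]$, the right inequality of the three-slopes lemma (taken with $c=t_2$) shows that $\psi$ is non-increasing in $h$, so $f'_-(t_2) = \lim_{h\to 0^+}\psi(h) = \sup_{h>0}\psi(h)$ exists in $(-\infty,+\infty]$, and is bounded below by the finite chord slope $\psi(t_2-t_1)$. This excludes the value $-\infty$ and yields the stated existence (possibly $+\infty$).

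There is no genuine obstacle here: everything reduces to the monotonicity of a one-variable difference quotient, combined with the elementary fact that a monotone function always admits one-sided limits in the extended reals. The only points requiring care are the bookkeeping of each inequality's direction (which endpoint is held fixed and in which direction $h$ tends to $0$) and keeping track of which infinite value is ruled out by the finite slope of the chord spanning the whole interval.
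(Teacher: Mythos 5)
Your proof is correct and follows essentially the same route as the paper: both arguments rest on the monotonicity of the difference quotient (which the paper asserts directly from convexity and you derive via the three-slopes inequality), from which the one-sided limit exists in the extended reals. Your additional observation that the chord slope over the whole interval rules out $+\infty$ for $f'_+(t_1)$ and $-\infty$ for $f'_-(t_2)$ is a nice refinement consistent with the statement's parenthetical, but not a different method.
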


\begin{proof}
    Recall that the  right derivative $f'_+(t_1)$ is defined as:
    \begin{equation}
        \lim_{\substack{t \to t_1^+ \\ t \in \cI}} \frac{f(t)- f(t_1)}{t-t_1} \ .
    \end{equation}
    From convexity of $f$, the function $t \mapsto \frac{f(t)- f(t_1)}{t-t_1}$ is non-decreasing, so it admits a limit from the right at $t_1$ (potentially equal to $-\infty$).
    The proof for $t_2$ is similar.
\end{proof}

\begin{fact}[{\cite[Section 2.C]{rockafellar2009variational}}]
    \label{fact:convex_line}
     $f : \bbR^d \to \bbR $ is convex if and only if it is convex along every line segment: for all $x, y \in \bbR^d$, $t \mapsto f(x + t(y - x))$ is convex on $[0,1]$.
\end{fact}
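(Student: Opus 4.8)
The plan is to prove both implications of the equivalence directly from the definition of convexity, exploiting that the parametrization $t \mapsto x + t(y-x)$ of a line segment is \emph{affine}, so that a convex combination of parameters maps to the corresponding convex combination of points.

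First I would treat the forward implication ($\Rightarrow$). Assuming $f$ is convex on $\bbR^d$, fix $x,y \in \bbR^d$ and set $g(t) := f(x + t(y-x))$ for $t \in [0,1]$. For $t_1,t_2 \in [0,1]$ and $\lambda \in [0,1]$, the affinity of $s \mapsto x + s(y-x)$ gives
\begin{equation*}
x + (\lambda t_1 + (1-\lambda)t_2)(y-x) = \lambda\bigl(x + t_1(y-x)\bigr) + (1-\lambda)\bigl(x + t_2(y-x)\bigr),
\end{equation*}
so applying convexity of $f$ to the right-hand side yields $g(\lambda t_1 + (1-\lambda)t_2) \le \lambda g(t_1) + (1-\lambda)g(t_2)$, i.e. $g$ is convex on $[0,1]$. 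Since $x,y$ were arbitrary, $f$ is convex along every line segment.

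Then I would do the reverse implication ($\Leftarrow$). Suppose $f$ is convex along every segment. Fix $x,y \in \bbR^d$ and $t \in [0,1]$, and consider $g(s) := f(x+s(y-x)) = f((1-s)x + sy)$, which is convex on $[0,1]$ by hypothesis. Writing $t = (1-t)\cdot 0 + t\cdot 1$ and using convexity of $g$ between the endpoints $0$ and $1$,
\begin{equation*}
f((1-t)x + ty) = g(t) \le (1-t)g(0) + t\,g(1) = (1-t)f(x) + t f(y),
\end{equation*}
which is exactly the defining inequality of convexity of $f$.

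I expect no real obstacle here: the only substantive observation is the affine reparametrization identity above, which makes global convexity and one-dimensional convexity along segments tautologically equivalent. Should one prefer to lean on prior results rather than compute, the forward direction is a special case of ``the composition of a convex function with an affine map is convex,'' while the reverse direction only requires testing the definition of $f$ at the two endpoints of each segment; but the direct argument is self-contained and shorter.
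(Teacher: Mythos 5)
Your proof is correct, but there is nothing in the paper to compare it against: the paper states this as a Fact with a citation to Rockafellar and Wets (Section 2.C of \cite{rockafellar2009variational}) and gives no proof of its own, unlike the neighboring facts in that appendix which do carry short proofs. Your argument is the standard self-contained one. The forward direction is exactly the "composition of a convex function with an affine map is convex" computation, carried out explicitly via the identity $x + (\lambda t_1 + (1-\lambda)t_2)(y-x) = \lambda\bigl(x + t_1(y-x)\bigr) + (1-\lambda)\bigl(x + t_2(y-x)\bigr)$; the reverse direction evaluates the segment-convexity hypothesis only at the endpoints $s=0$ and $s=1$, which already yields the defining inequality of convexity. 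Both steps are valid and complete. The one hypothesis doing quiet work is that $f$ is finite-valued on all of $\bbR^d$: every segment $[x,y]$ lies in the domain, so no issues about effective domains or $+\infty$ values arise, and the equivalence is, as you say, essentially tautological. For extended-real-valued functions (which the paper does use elsewhere, e.g. in Lemma~\ref{lemma:piecewise_convexity_1D}) the statement requires slightly more care, but that is outside the scope of this Fact as stated.
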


\begin{lemma}
    \label{lemma:piecewise_convexity_1D}
    Let $f : \cI = [t_{\min},t_{\max}] \to \bbR$ be continuous and piecewise convex, i.e. there exist an integer $K$ and $- \infty \leq t_{\min} = t_0 < t_1 < \ldots < t_K = t_{\max} \leq + \infty$ such that
      $f_{\mid [t_{i-1},t_{i}]}$
       is convex, for every $i \in \{1 , \dots, K
        \}$.
    The function $f$ is convex on $\cI$ if and only if, for each breakpoint $t_i, i \in \{1 , \dots, K-1 \}$, one has
    \begin{equation}
        f'_-(t_{i}) \leq f'_+(t_{i}) \ .
    \end{equation}
\end{lemma}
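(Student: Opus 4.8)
The plan is to treat the two implications separately, with the forward implication being routine and the converse carrying all the difficulty.

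For the forward direction, suppose $f$ is convex on $\cI$. At any interior breakpoint $t_i$ the one-sided derivatives $f'_-(t_i)$ and $f'_+(t_i)$ exist by the one-sided-derivative fact recalled above, and they coincide with the corresponding one-sided derivatives of the convex pieces $f_{\mid[t_{i-1},t_i]}$ and $f_{\mid[t_i,t_{i+1}]}$, since $f$ agrees with each piece on a one-sided neighborhood of $t_i$. The inequality $f'_-(t_i) \leq f'_+(t_i)$ is then the standard monotonicity of difference quotients for the globally convex $f$: for $s < t_i < s'$ the quotient $\tfrac{f(t_i)-f(s)}{t_i - s}$ is dominated by $\tfrac{f(s')-f(t_i)}{s' - t_i}$, and letting $s \uparrow t_i$ and $s' \downarrow t_i$ yields the claim.

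For the converse, the key step I would isolate is a two-piece gluing lemma: if $g$ is convex on $[a,b]$, $h$ is convex on $[b,c]$, $g(b)=h(b)$, and $g'_-(b) \leq h'_+(b)$, then the function equal to $g$ on $[a,b]$ and to $h$ on $[b,c]$ is convex on $[a,c]$. I would prove this via the subdifferential characterization (\Cref{fact:non_empti_subdif}): it suffices to exhibit at every $x \in (a,c)$ a finite slope $u$ whose affine minorant is global (the inequality being vacuous for $y \notin \cI$, where $f=+\infty$). For $x \in (a,b)$ I would take any $u \in [g'_-(x), g'_+(x)]$; it already supports $g$ on $[a,b]$, and on $[b,c]$ one extends the minorant by combining $g(b) \geq g(x)+u(b-x)$ with $h(y) \geq h(b)+h'_+(b)(y-b) \geq h(b)+u(y-b)$, where the last step uses $u \leq g'_+(x) \leq g'_-(b) \leq h'_+(b)$ and $y - b \geq 0$. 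The boundary case $x=b$ is handled with $u = h'_+(b)$, using $g'_-(b) \leq u$ and $y-b \leq 0$ to extend the supporting line leftward; the case $x \in (b,c)$ is symmetric. Note the hypothesis forces $g'_-(b)$ and $h'_+(b)$ to be finite (a left derivative at a right endpoint of a convex piece is $> -\infty$, a right derivative at a left endpoint is $< +\infty$), so $u$ is indeed finite.

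Granting the gluing lemma, I would finish the converse by induction on the number of pieces $K$, the base case $K=1$ being immediate. For the inductive step, the breakpoint conditions at $t_1,\dots,t_{K-2}$ let me apply the induction hypothesis to $f_{\mid[t_0,t_{K-1}]}$, obtaining a convex function whose left derivative at $t_{K-1}$ still equals $f'_-(t_{K-1})$ (it agrees with $f_{\mid[t_{K-2},t_{K-1}]}$ near $t_{K-1}^-$); the condition $f'_-(t_{K-1}) \leq f'_+(t_{K-1})$ then lets me glue this with the last convex piece. I expect the main obstacle to be exactly the converse, and within it the careful bookkeeping of the supporting-line extension across the breakpoint together with the finiteness of the one-sided derivatives near the global endpoints $t_{\min},t_{\max}$, which may be $\pm\infty$.
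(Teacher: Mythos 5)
Your proposal is correct, and while it shares the paper's overall skeleton — reduce to two consecutive pieces, induct on $K$, and invoke the subdifferential characterization of convexity (\Cref{fact:non_empti_subdif}) — the mechanics are genuinely different. The paper runs a single equivalence chain localized at the breakpoint $t$: it extends each piece by $+\infty$ to get proper convex functions $f_1,f_2$, claims $\partial f(s)=\partial f_1(s)\neq\emptyset$ (resp.\ $\partial f_2$) at every point $s$ interior to a piece, so that convexity of $f$ is equivalent to $\partial f(t)\neq\emptyset$, and then computes $\partial f(t)=\partial f_1(t)\cap\partial f_2(t)=[f'_-(t),+\infty)\cap(-\infty,f'_+(t)]$, which is nonempty exactly when $f'_-(t)\leq f'_+(t)$; both directions fall out of this one computation. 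You instead prove the two implications separately: the forward one by monotonicity of difference quotients (no subdifferentials needed), and the converse by exhibiting at \emph{every} interior point an explicit finite subgradient whose affine minorant is pushed across the breakpoint via $u\leq g'_+(x)\leq g'_-(b)\leq h'_+(b)$. What the paper's route buys is brevity and symmetry. What your route buys is rigor at the one delicate spot: the identity $\partial f(s)=\partial f_1(s)$ at piece-interior points is \emph{not} automatic for the glued, a priori non-convex function — a subgradient of one piece need not minorize $f$ on the other piece (think of a concave kink) — and it is exactly your supporting-line-extension argument, which consumes the breakpoint inequality, that substantiates this step where the paper merely asserts it. Your remarks on the forced finiteness of $g'_-(b)$ and $h'_+(b)$, and on the harmlessness of infinite endpoints $t_{\min},t_{\max}$, are also correct and welcome.
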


\begin{proof}
We prove the result when $K=2$. If $f$ has more than two pieces, we can iteratively gather groups of 2 consecutive pieces of $2$ and conclude by induction.

  Considering $f$ withonly two pieces, we denote $I_{1} = [t_{\min},t]$, $I_{2}=[t,t_{\max}]$ those pieces, where $t_{\min}<t<t_{\max}$.
    Define $f_{1},f_{2}: \bbR \to \left]-\infty, + \infty\right]$ as: \begin{align}
        f_1  : s \mapsto \begin{cases}
            f(s) & s \in I_1 \\
            + \infty & \text{otherwise }
        \end{cases} \qquad
        f_2 : s \mapsto \begin{cases}
            f(s) &  s \in I_2 \\
            + \infty & \text{otherwise }
        \end{cases}
    \end{align}
    The functions $f_1$ and $f_2$ are proper and convex on $\bbR$  because their epigraphs are convex sets.
    Hence, from \Cref{fact:non_empti_subdif}, we get that $\partial f(s) = \partial f_1(s)  \neq  \emptyset$ for all
    $s\in ]t_{\min},t[$ and $\partial f(s) = \partial f_2(s)  \neq  \emptyset$ for all $s \in ]t,t_{\max}[$.
    Then, using again \Cref{fact:non_empti_subdif}, we get that $f$ is convex on $[t_{\min},t_{\max}]$ if and only if
    $\partial f(s) \neq \emptyset$ for all $s\in ]t_{\min},t_{\max}[$. Since $\partial f(s) \neq \emptyset$ for every $s \in ]t_{\min},t[ \cup ]t,t_{\max} [$,
    the convexity condition is equivalent to $\partial f(t) \neq \emptyset$.
   Moreover we have that
    \begin{align}
        \partial f(t) & =  \bigcap_{t' \in I_1 \cup I_2} \{ u : f(t') \geq f(t) + u(t' -t) \} \\
        & = \left( \bigcap_{t' \in I_1} \{ u : f(t') \geq f(t) + u(t' -t) \} \right) \cap \left( \bigcap_{t' \in I_2} \{ u : f(t') \geq f(t) + u(t' -t) \}  \right) \\
        & = \partial f_1(t) \cap \partial f_2(t) \\
        & = \left[f'_-(t) , +\infty \right[ \cap \left]- \infty , f'_+(t)\right]
    \end{align}
    This is non-empty
 if and only if $f'_-(t) \leq f'_+(t)$.

\end{proof}

\begin{fact}[\cite{bauschke2011convex}, Proposition 8.16]
    \label{fact:pointwise_limit_cvx}
    Let $(f_n)_{n \in \bbN}$ be a sequence of convex functions from $\bbR^d$ to $\bbR$ such that $(f_n)_{n \in \bbN}$ is pointwise convergent. Then $\lim_n f_n$ is convex.
\end{fact}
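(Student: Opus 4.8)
The plan is to verify the defining convexity inequality for the limit function directly, passing to the limit in the corresponding inequalities satisfied by each $f_n$. Set $f := \lim_n f_n$, which is a well-defined function $\bbR^d \to \bbR$ by the pointwise convergence hypothesis. To show that $f$ is convex it suffices to check that for every $x, y \in \bbR^d$ and every $t \in [0,1]$ one has $f((1-t)x + ty) \leq (1-t)f(x) + t f(y)$.

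First I would fix arbitrary $x, y \in \bbR^d$ and $t \in [0,1]$, and write down the convexity inequality satisfied by each $f_n$:
\[
    f_n((1-t)x + ty) \leq (1-t) f_n(x) + t f_n(y), \qquad \forall n \in \bbN.
\]
By pointwise convergence, the left-hand side converges to $f((1-t)x + ty)$ while the right-hand side converges to $(1-t)f(x) + t f(y)$. Since non-strict inequalities are preserved under passage to the limit, I obtain
\[
    f((1-t)x + ty) \leq (1-t)f(x) + t f(y).
\]
As $x$, $y$, and $t$ were arbitrary, this establishes the convexity of $f$.

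There is essentially no obstacle here: the argument relies only on the elementary fact that $\leq$ is preserved under limits, and the hypothesis of pointwise (rather than uniform) convergence is exactly what is needed, since the inequality is evaluated at the three fixed points $x$, $y$, and $(1-t)x+ty$ independently for each $n$. The only subtlety worth noting is that pointwise convergence to a real-valued function guarantees that $f$ takes no infinite values, so the inequality makes sense as stated and no degenerate cases arise. Should one prefer a more structural route, the statement also follows from \Cref{fact:convex_line} by reducing to the one-dimensional case along each line segment, but the direct verification above is the most economical.
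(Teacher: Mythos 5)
Your proof is correct. The paper does not actually prove this fact itself: it is stated with a citation to \cite{bauschke2011convex} (Proposition 8.16) and no proof is given in the appendix, so there is no internal argument to compare against. Your direct verification --- passing to the limit in the inequality $f_n((1-t)x+ty) \leq (1-t)f_n(x) + t f_n(y)$ for fixed $x$, $y$, $t$, using that non-strict inequalities are preserved under limits --- is exactly the standard textbook argument behind the cited result, and your remark that pointwise convergence of real-valued functions rules out infinite values in the limit correctly disposes of the only potential degeneracy.
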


 \section{Proof of \Cref{prop:cvx_cpwl}}
\label{appendix:proof_cvx_cpwl}

\begin{lemma}
    \label{lemma:ball_frontier}
    \addrebut{Consider a CPWL function $f :\bbR^d \to \bbR$, and any compatible partition $(R_k)_{k=1}^K$. Denote $\cF := \bigcup_{k \sim l} F_{k,l}$ the set of all frontiers points. Then, for every point on a frontier $x \in F_{k,l}$, there exists a ball $B(x,\epsilon)$ with $\epsilon > 0$ such that $B(x,\epsilon) \subset R_k \cup R_l$.}
\end{lemma}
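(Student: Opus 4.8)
The plan is to reduce the statement to a purely \emph{local} description of $R_k$ and $R_l$ near $x$. Write $H := \aff(R_k \cap R_l)$, which is an affine hyperplane since $R_k \sim R_l$ (\Cref{def:neighbors}), and let $H^-, H^+$ denote the two closed half-spaces it bounds. I claim it suffices to prove that there is $\epsilon > 0$ with $R_k \cap B(x,\epsilon) = H^- \cap B(x,\epsilon)$ and $R_l \cap B(x,\epsilon) = H^+ \cap B(x,\epsilon)$ (up to swapping $H^\pm$): indeed every point of $B(x,\epsilon)$ lies in $H^-$ or in $H^+$, so $B(x,\epsilon) = (R_k \cap B(x,\epsilon)) \cup (R_l \cap B(x,\epsilon)) \subseteq R_k \cup R_l$, which is exactly the conclusion, and notably no other region of the partition needs to be examined.

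Establishing this local half-space description is the heart of the proof. First I would rule out $x \in \intt(R_k)$: if a ball $B(x,\delta)$ were contained in $R_k$, then by property (iii) of \Cref{def:poluhedron_part} it would be disjoint from $\intt(R_l)$, contradicting $x \in R_l = \overline{\intt(R_l)}$ (a convex set with nonempty interior equals the closure of its interior). Hence $x \in \partial R_k$, and likewise $x \in \partial R_l$. Next, writing $R_k$ as a finite intersection of closed half-spaces, let $G$ be the minimal face of $R_k$ containing $x$, so that $x \in \mathrm{ri}(G)$ and $G$ is a \emph{proper} face, giving $\dim G \le d-1$. Using $x \in \mathrm{ri}(R_k \cap R_l)$, for any $y \in R_k \cap R_l$ the point $y' := x + t(x-y)$ stays in $R_k \cap R_l \subseteq R_k$ for small $t > 0$, so $x$ lies in the relative interior of the segment $[y,y'] \subseteq R_k$; the defining property of a face then forces $y \in G$. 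Thus $R_k \cap R_l \subseteq G$ and $\dim G \ge \dim(R_k \cap R_l) = d-1$, whence $\dim G = d-1$: $G$ is a facet and its affine hull is exactly $H$. Since all constraints of $R_k$ that are \emph{not} active at $x$ are strict there, they remain strict on a small ball, so on that ball $R_k$ reduces to the single facet inequality, i.e. $R_k \cap B(x,\epsilon_k) = H^- \cap B(x,\epsilon_k)$ for the appropriate orientation.

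Running the same argument for $R_l$ gives $R_l \cap B(x,\epsilon_l) = H^{\pm} \cap B(x,\epsilon_l)$; property (iii) (disjoint interiors) forbids $R_l$ from lying on the same side as $R_k$, so $R_l$ occupies the opposite half-space $H^+$. Taking $\epsilon := \min(\epsilon_k, \epsilon_l)$ yields the two local identities above and hence $B(x,\epsilon) \subseteq R_k \cup R_l$. I expect the main obstacle to be the face-theoretic step: carefully justifying, from polyhedral geometry, that the minimal face $G$ through $x$ has dimension exactly $d-1$ and that its affine hull coincides with $H$ (equivalently, that only one family of parallel active constraints survives at $x$). The orientation bookkeeping for $H^\pm$ and the reduction ``strict constraints stay strict on a ball'' are routine by comparison.
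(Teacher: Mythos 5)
Your proof is correct, and it takes a genuinely different route from the paper's. The paper fixes \emph{irredundant} half-space descriptions of $R_k$ and $R_l$, identifies the shared facet inequality $a^\top z \le b$ (appearing with opposite orientation in $R_l$), invokes \cite[Lemma 2.9]{Ziegler1995} to argue that every other inequality is strict at $x \in \mathrm{relint}(R_k\cap R_l)$, and then exhibits an explicit full-dimensional double cone $C=\mathrm{conv}\bigl(B_{k,l}\cup\{x+\epsilon'' v\}\cup\{x-\epsilon''' v\}\bigr)$ (base $B_{k,l}$ a relatively open disc inside the common facet, $v$ a normal direction) satisfying $C\subset R_k\cup R_l$ and $x\in\intt(C)$. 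You instead establish a stronger local normal form: near $x$, the two regions are \emph{exactly} the two closed half-balls cut out by $H=\aff(R_k\cap R_l)$. The work is done by your minimal-face argument --- $R_k\cap R_l\subseteq G$ via the prolongation property of relative interiors plus the defining property of a face, forcing $\dim G=d-1$ and $\aff(G)=H$ --- which substitutes for the paper's facet/irredundant-inequality correspondence and its saturation lemma, and notably requires no irredundancy of the chosen representation. Your version buys more than the statement asks: the local half-space description immediately yields facts the paper re-derives separately (e.g.\ that for any $v\notin\Span((R_k\cap R_l)-x)$ the points $x\pm\epsilon v$ land one in $\intt(R_k)$ and the other in $\intt(R_l)$, as used in the proof of \Cref{prop:cvx_cpwl}); the paper's construction is more elementary and bypasses face theory entirely. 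One step you flag as routine should still be written out to close the argument: every constraint of $R_k$ that is active at $x$ has its bounding hyperplane containing $G$, hence containing $\aff(G)=H$, hence \emph{equal} to $H$; and all such constraints must be oriented toward the same side, since otherwise $R_k$ would coincide with $H$ on a neighborhood of $x$, contradicting $x\in\overline{\intt(R_k)}$ (which holds because $R_k$ is closed and convex with nonempty interior by \Cref{def:poluhedron_part}). With that paragraph added, your proof is complete.
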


\begin{proof}
    \addrebut{First, we describe these two polyhedra $R_k$ and $R_l$ (\Cref{def:polyhedron}) as the intersection of  closed halfspaces (with irredundant inequalities):
    \begin{align}
        R_k &:= \{ z \in \bbR^d : A^k z \leq b^k \}, \\
        R_l &:= \{ z \in \bbR^d : A^l z \leq b^l \}.
    \end{align}
    From the one-to-one correspondence between the inequalities representing the polyhedra and their facets, it implies that $\aff(R_k \cap R_l)$ can be described by $\{ z  \in \bbR^d : a^\top z = b \}$ where $a^\top z \leq b$ is both in $A^k z \leq b^k$ and $A^l z \leq b^l$.
    As $\intt(R_k) \cap \intt(R_l) = \emptyset$ from the definition of a polyhedral partition (\Cref{def:poluhedron_part}), we can rewrite $R_k$ and $R_l$ as
    \begin{align}
        R_k &= \{ z \in \bbR^d : \tilde A^k z \leq \tilde b^k, a^\top z \leq b \}, \\
        R_l &= \{ z \in \bbR^d : \tilde A^l z \leq \tilde b^l, a^\top z \geq b \}.
    \end{align}

    Now let $x \in F_{k,l}$ and recall that $F_{k,l} = \mathrm{relint}  (R_k \cap R_l)$ which is of dimension $d-1$. By definition of the relative interior, there exists $\epsilon' > 0$ such that $B_{k,l}:=B(x, \epsilon') \cap \aff(R_k \cap R_l) \subset R_k \cap R_l$.  It implies that for every direction $v \in \aff((R_k \cap R_l)-x)$, the segment $[x + \epsilon v, x - \epsilon v] \subset R_k \cup R_l$.

    We then consider an orthogonal direction $v \perp \aff((R_k \cap R_l)-x)$ that points towards $R_l$, i.e., $v = \alpha  a $ with $\alpha > 0$. Our objective is to show that there exists $\epsilon >0$ such that $x + \epsilon v \in \intt(R_l)$, i.e., $a^\top (x+ \epsilon v) > b$ and $\tilde A^l (x+ \epsilon v) < \tilde b^l$.
    The first inequality is direct as for all $\epsilon > 0$, $a^\top (x+ \epsilon v) = b + \epsilon \alpha \Vert a\Vert^2 > b$. It remains to check the existence of an $\epsilon''$ for which the second inequality  can be made strict. First, we have that  $\tilde A^k x < \tilde b^k$ and $\tilde A^l x < \tilde b^l$.  Indeed, from \cite[Lemma 2.9]{Ziegler1995}, if $x$ saturates another inequality, then all the points of $R_k \cap R_l$ saturate this inequality, which is either contradicting the irredundancy assumption or the dimension of $\aff (R_k \cap R_l)$. As such, we get that $\tilde A^l (x + \epsilon'' v) < \tilde b^l$ is true for any $\epsilon'' > 0$ such that
    \begin{equation}
        \epsilon'' < \min_{i : (\tilde a^l_i )^\top v > 0 } \frac{\tilde b^l_i -  (\tilde a^l_i )^\top x}{(\tilde a^l_i )^\top v },
    \end{equation}

    It follows that  $\mathrm{conv} (B_{k,l} \cup \{ x + \epsilon'' v \} )$ (convex hull) is a subset of $R_l$.
    With the same arguments, there exists $\epsilon''' > 0$ such that $\mathrm{conv} (B_{k,l} \cup \{ x - \epsilon''' v \} )$ is a subset of $R_k$. Combining these two results we get
    \begin{equation}
        C := \mathrm{conv} ( B_{k,l} ,\{ x + \epsilon'' v \}, \{ x - \epsilon''' v \} ) \subset R_k \cup R_l
    \end{equation}

    Finally, given that $\mathrm{dim}(B_{k,l})=d-1$ and that $v \perp \aff ((R_k \cap R_l)-x)$, we deduce that $\mathrm{dim}(C)=d$ and $\intt(C) \neq \emptyset$.
    Moreover,  since $x \in \intt(C)$, there exists $\epsilon > 0$ such that
    \begin{equation}
        B(x, \epsilon) \subset C \subset R_k \cup R_l.
    \end{equation}
    which concludes the proof.}
\end{proof}

\textbf{Scheme of the proof}
\begin{itemize}[label=-]
    \item \ref{prop:cvx_cpwl_item1} $\implies$ \ref{prop:cvx_cpwl_item2} $\implies$ \ref{prop:cvx_cpwl_item3}
     $\implies$ \ref{prop:cvx_cpwl_item4}
     $\implies$ \ref{prop:cvx_cpwl_item2}
     $\implies$ \ref{prop:cvx_cpwl_item1}
     \item \ref{prop:cvx_cpwl_item1} $\implies$ \ref{prop:cvx_cpwl_item5}
     \item \ref{prop:cvx_cpwl_item5} $\implies$ \ref{prop:cvx_cpwl_item3}
\end{itemize}

\begin{proof}
    \noindent \ref{prop:cvx_cpwl_item1} $\implies$ \ref{prop:cvx_cpwl_item2} Assume $f$ convex on $\bbR^d$.
    From \Cref{fact:convex_line}, $f$ is convex on every line, so \ref{prop:cvx_cpwl_item2} follows.

    \noindent \ref{prop:cvx_cpwl_item2} $\implies$ \ref{prop:cvx_cpwl_item3} is trivial.

    \noindent \ref{prop:cvx_cpwl_item3} $\implies$ \ref{prop:cvx_cpwl_item4}
    Consider two neighboring regions $R_k \sim R_\ell$.
    Choose $x \in F_{k,\ell}$, $v \in \bbR^d \setminus \Span(R_k \cap R_\ell - x)$ and $\epsilon >0$ from \ref{prop:cvx_cpwl_item3}. \addrebut{As it has been done in the proof of}  \Cref{lemma:ball_frontier}\addrebut{, and} since $v \notin \Span(R_k \cap R_\ell - x)$,
    we have that $[x - \epsilon v, x+ \epsilon v] \cap \intt(R_k) \neq \emptyset$ and $[x - \epsilon v, x+ \epsilon v] \cap \intt(R_\ell) \neq \emptyset$ so we can pick $x_k \in [x - \epsilon v, x+ \epsilon v] \cap \intt(R_k)$ and $x_\ell \in [x - \epsilon v, x+ \epsilon v] \cap \intt(R_\ell)$.
    Then, we rewrite $f(x_k)$ and $f(x_\ell)$ with their respective affine expression
    \begin{align}\label{eq:AffineExpressionsRegionsProof}
        f(x_k) & = \langle \nabla f(x_k), x_k \rangle + b_k \\
        f(x_\ell) & = \langle \nabla f(x_\ell), x_\ell \rangle + b_\ell
    \end{align}
    We define $\bar t \in (0,1)$ the scalar for which $x = (1-\bar t) x_k + \bar t x_\ell$.
    Then, by convexity of $f_{|[x_k, x_\ell]}$, we have
    \begin{itemize}
        \item for $t \in (0, \bar t]$,
        \begin{equation}
            \langle \nabla f(x_k), (1-t)  x_k + t x_\ell \rangle + b_k \leq (1-t) \left( \langle \nabla f(x_k), x_k \rangle + b_k \right) + t  \left( \langle \nabla f(x_\ell), x_\ell \rangle + b_\ell \right) \ ,
        \end{equation}
        hence,
        \begin{equation}
         b_k - b_\ell  \leq   \langle \nabla f(x_\ell)-\nabla f(x_k), x_\ell \rangle \ . \label{proof:cvx_statements_ineq_1}
        \end{equation}
        \item for $t \in [ \bar t, 1)$,
        \begin{equation}
            \langle \nabla f(x_\ell), (1-t)  x_k + t x_\ell \rangle + b_\ell \leq (1-t) \left( \langle \nabla f(x_k), x_k \rangle + b_k \right) + t  \left( \langle \nabla f(x_\ell), x_\ell \rangle + b_\ell \right)  \ .
        \end{equation}
        hence,
        \begin{equation}
            b_\ell - b_k \leq  - \langle \nabla f(x_\ell)-\nabla f(x_k), x_k \rangle  \ . \label{proof:cvx_statements_ineq_2}
        \end{equation}
    \end{itemize}
    By summing \eqref{proof:cvx_statements_ineq_1} and \eqref{proof:cvx_statements_ineq_2}, we recover $\langle \nabla f(x_\ell) - \nabla f(x_k), x_\ell-x_k\rangle \geq 0 $.

    \noindent \ref{prop:cvx_cpwl_item4} $\implies$ \ref{prop:cvx_cpwl_item2}:
    Take $x \in \mathcal{F}$ and $v \in \bbR^{d}$.
    By the definition of $\mathcal{F}$ there exist neighborhing regions $R_{k} \sim R_{\ell}$ such that $x \in F_{k,\ell}$.
     Moreoever, \addrebut{from} \Cref{lemma:ball_frontier}, there is $\epsilon > 0 $ such that $B(x, \epsilon) \subset R_k \cup R_\ell$. The existence of such a ball \rmrebut{comes from the definition of a frontier and} is illustrated in \Cref{fig:proof_cvx_cpwl_case_1}.
We now distinguish two cases.
In the case where $v \in \Span(R_k \cap R_\ell -x )$, then, for $\epsilon$ small enough,  $[x-\epsilon v, x+\epsilon  v] \subset F_{k,\ell}$ so $f_{|[x-\epsilon v, x+\epsilon  v] }$ is affine, thus convex, as claimed. We now treat the case where
     $v \in \bbR^d \setminus \Span(R_k \cap R_\ell - x)$.
    This implies that the tuple $(y_{k},y_{\ell}) := (x-\epsilon v, x+ \epsilon v)$ either belongs to $\intt(R_k) \times \intt(R_\ell)$ or to $\intt(R_\ell) \times \intt(R_k)$.
    Assume, without loss of generality, that $y_k \in \intt(R_k)$ and $y_\ell  \in \intt(R_\ell)$.
    From \ref{prop:cvx_cpwl_item4}, there exists $x_k \in \intt(R_k)$ and $x_\ell \in \intt(R_\ell)$ such that $ \langle \nabla f(x_k) - \nabla f(x_\ell), x_k - x_\ell \rangle \geq 0$. See \Cref{fig:proof_cvx_cpwl_case_1}.
    Let $\eta$ be a normal vector to $R_k \cap R_\ell$ which points from $R_k$ to $R_\ell$.
    As $\dim(\Span(R_{k} \cap R_{\ell}-x)) =  \dim(\aff(R_k \cap R_\ell)) = d-1$, we can complete $\eta$ with $\zeta_1, \dots, \zeta_{d-1} \in \Span(R_{k} \cap R_{\ell}-x)) $ to form a basis of $\bbR^d$.
    \begin{SCfigure}[1.0]
        \centering
        \includegraphics[width=0.4\textwidth]{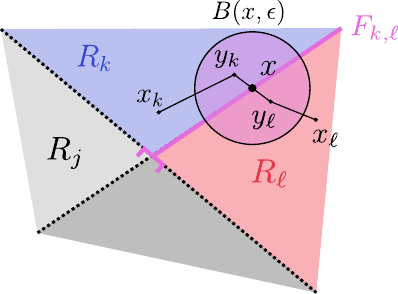}
        \caption{The regions $R_k$ and $R_\ell$ are neighbouring, but not $R_j$ and $R_\ell$.
        A frontier $F_{k,\ell}$ is the interior of the facet of dimension $d-1$ which separates two distinct neighbouring regions.
        Convexity is a local property that has to be studied on a ball around $x$.
       }
        \label{fig:proof_cvx_cpwl_case_1}
    \end{SCfigure}
    We write $x_k - x_\ell$ and $y_k - y_\ell$ in this basis:
    \begin{align}
        x_k - x_\ell &= |\alpha^x| \eta + \sum_{i=1}^{d-1} \beta^x_i \zeta_i \ , \\
        y_k - y_\ell &= |\alpha^y| \eta + \sum_{i=1}^{d-1} \beta^y_i \zeta_i \ .
    \end{align}
    where $|\alpha^x| > 0$, $|\alpha^y| > 0$ since $\eta$ points from $R_{k}$ to
    $R_{\ell}$ and the considered points are interior to the regions. By the affine expressions~\eqref{eq:AffineExpressionsRegionsProof} of $f$ on the regions $R_{k}, R_{\ell}$ and the continuity of $f$, we have
    \[
    x \in R_{k} \cap R_{\ell} \subseteq
    \{y : \langle \nabla f(x_k), y \rangle + b_k = \langle \nabla f(x_\ell), y \rangle + b_\ell \}
    =
    \{y : \langle \nabla f(x_k)-\nabla f(x_{\ell}), y \rangle = b_\ell-b_{k} \} \]
    hence $\Span(R_k \cap R_\ell-x) \subseteq \{y : \langle \nabla f(x_k)-\nabla f(x_{\ell}),
    y-x \rangle = 0 \} $. As a result
    $\langle \nabla f(x_k) - \nabla f(x_\ell), \zeta_i \rangle = 0$ for every $i \in \{1, \dots, d-1\}$ (the same holds for $\nabla f(y_k) - \nabla f(y_\ell)$ since $y_{k} \in \intt(R_{k})$ and $y_{\ell} \in \intt(R_{\ell})$).
    Then,
    \begin{equation}
        \langle \nabla f(x_k) - \nabla f(x_\ell) , y_k - y_\ell \rangle = \frac{|\alpha^y|}{|\alpha^x|} \langle \nabla f(x_k) - \nabla f(x_\ell), x_k - x_\ell \rangle \stackrel{ \ref{prop:cvx_cpwl_item4}}{\geq} 0 \ .
    \end{equation}
    Consider $t \in [0,1] \mapsto g(t) := f((1-t)y_{k}+ty_{\ell})$. Showing convexity of $f_{|[y_k, y_\ell]}$
    amounts to showing that $g$ is convex. We conclude using straightforward calculus and
    \Cref{lemma:piecewise_convexity_1D} (with $K=2$ pieces).

    \noindent \ref{prop:cvx_cpwl_item2} $\implies$ \ref{prop:cvx_cpwl_item1}:
    Consider any line segment in $\bbR^d$ parametrized as $[x, y]$ for some $x, y \in \bbR^d$.
    To show \ref{prop:cvx_cpwl_item1}, we need to prove that $f_{|[x,y]}$ is convex (\Cref{fact:convex_line}).
    We distinguish the case where this line segment $[x,y]$ only contains non-pathological points, i.e. which are either in the interior of a region (where $f$ is affine hence convex) or on a frontier (where \ref{prop:cvx_cpwl_item2} ensures
    convexity of $f_{\mid [z-\epsilon (y-x),z+\epsilon(y-x)]}$), from
    the case where it contains pathological points that belong to  more than two neighboring regions (see \Cref{fig:proof_cvx_cpwl_pathological}). These pathological points belong to affine subspaces which are faces of the convex regions with dimension at most $d-2$ (otherwise, $d-1$ faces of the convex regions are the so-called frontiers).
    \begin{SCfigure}[0.6]
        \centering
        \includegraphics[width=0.3\textwidth]{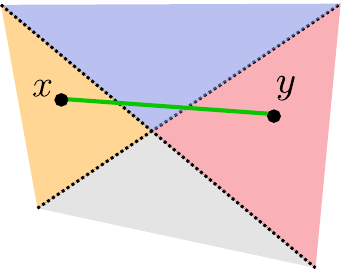}
        \hfill
        \includegraphics[width=0.3\textwidth]{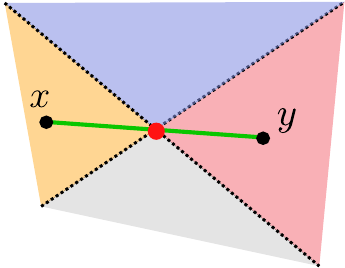}
        \caption{Left: the segment does not cross any pathological points. Right: the red point is pathological: it belongs to more than two neighboring regions.}
        \label{fig:proof_cvx_cpwl_pathological}
    \end{SCfigure}
    \begin{itemize}
        \item \textit{Case 1: $[x,y]$ only has non-pathological points.}
        More formally, we assume here that $[x,y]$ is such that, for every $z \in [x,y]$, either $z \in \intt(R)$ for
        some region $R$ from the considered compatible partition, either $z \in F$ for some frontier $F$ between two neighboring regions.
        According to \Cref{lemma:piecewise_convexity_1D}, it suffices to study convexity at breakpoints $(z_i)_{i \in I}$ belonging to $(F_i)_{i \in I}$,  frontiers between neighboring regions, using left and right derivatives.
        Then, \ref{prop:cvx_cpwl_item2}  gives convexity of $f_{|[z_i \pm \epsilon (y-x)]}$ for $\epsilon > 0 $ small enough.
        So, $f_{|[x,y]}$ is locally convex around each breakpoint implying that it is convex overall.

        \item \textit{Case 2: $[x,y]$ has pathological points.}
        First, pathological points belong to faces of the convex regions with dimension at most $d-2$.
        The number of such subspaces is finite due to the finite number of convex regions.
        We assume there exist pathological points $(z_i)_{i \in I}$ in $[x,y]$ which belong to affine subspaces $(A_i)_{i \in I}$ of dimension at most $d-2$.
        Note that either $[x,y]$ belongs to some face, making $I = [x,y]$ (with a slight abuse of notation), either it crosses a finite number of faces, with $I$ being finite.
        We rewrite each affine subspace $A_i = z_i + V_i$ where $V_i$ is the corresponding vectorial subspace and $z_i \in [x,y]$.
        The strategy is to find a direction $\delta$ so as to perturb the segment $[x,y]$ along $\delta$ and avoid pathological subspaces $A_i$.
        To do so, we consider $\tilde V_i = V_i + \mathrm{Vect}(y-x)$ which is of dimension at most $d-1$.
        Then, we take $\delta \in \bbR^d \setminus \cup_{i \in I} \tilde V_i$ ($\delta \neq 0$).
        Now, we built a new segment $s_t := [x + t\delta, y + t\delta]$ for $t>0$.
        We show that $s_t$ does not contain any of the initial pathological points of $[x,y]$.
        Consider $z \in s_t$.
        The following equivalences hold: $z \in A_i \iff \exists v_i \in V_i: z = z_i + v_i \iff \exists \epsilon, \epsilon' > 0 :  (1-\epsilon)x + \epsilon y + t \delta = (1-\epsilon')x + \epsilon' y + v_i \iff t\delta = v_i + (\epsilon' - \epsilon)(y-x)$.
        As $t\delta \not \in \tilde V_i$ for $t > 0$ and $ v_i + (\epsilon' - \epsilon) (y-x) \in \tilde V_i$, if $z \in s_t$ with $t >0$, then by contradiction $z \not \in A_i$.
        So, for every $t >0$, for every $i \in I$, $s_t \cap A_i = \emptyset$.
        Since the number of pathological regions is finite, there exists $\tau >0$ such that for every $0 <t < \tau$, $f_{|[x + t \delta, y + t \delta]}$ does not contain any pathological points: according to Case 1, $f_{|[x + t \delta, y + t \delta]}$  is convex.
        Then $f_{|[x,y]}$ is the pointwise limit, when $t \to 0$ of the convex functions $f_{|[x + t\delta, y+ t\delta]}$ so, by \Cref{fact:pointwise_limit_cvx} it is convex.
    \end{itemize}

    \noindent \ref{prop:cvx_cpwl_item1} $\implies$ \ref{prop:cvx_cpwl_item5}.
    Assume the function $f$ is convex and consider $x \in \mathcal{F}$, that is $x \in F_{k,\ell}$ for some neighboring regions $R_k \sim R_\ell$.
    By definition of a frontier, there exists
    a neighborhood $\mathcal{N} \subset R_k \cup R_\ell$. Take $x_1, x_2 \in \mathcal{N} \setminus \cF$.
    Then, $x_1 \in \intt(R_k) \cup \intt(R_\ell)$ (resp. $x_2$) so $f$ is differentiable at $x_1$ (resp. $x_2$): the inequality follows from the characterization of convexity~\citep[Example 20.3]{bauschke2011convex} (see also \Cref{eqn:monotonicity}).

    \noindent \ref{prop:cvx_cpwl_item5} $\implies$ \ref{prop:cvx_cpwl_item3}  Let $R_k \sim R_\ell$, $x \in F_{k,\ell}$, and  $v \in \bbR^d \setminus \mathrm{span} (R_k \cap R_\ell -x)$. Consider the neighborhood $\mathcal{N}$ given by \ref{prop:cvx_cpwl_item5}.
    Then there exists $\epsilon>0$ such that points $x \pm \epsilon v$ are in $\mathcal{N}$.
    Besides, $F_{k,\ell} \cap \mathcal{N} \subset x + \mathrm{span}(R_k \cap R_\ell - x )$ so $x \pm \epsilon v \notin F_{k,\ell}$. Then, the inequality in \ref{prop:cvx_cpwl_item5} implies increasing partial derivatives of $f_{|[x-\epsilon v, x+\epsilon  v] }$ which proves its convexity (\Cref{lemma:piecewise_convexity_1D}).
\end{proof}
 \section{Proof of path-lifting factorization}
\label{app:proof_dag}

\begin{proof}[Proof of \Cref{lemma:kronecker}]
    Consider a path $p = p_0 \to \cdots \to p_{m} \in \cP$  {\em on the initial graph $G$} such that
    $p_{0} \in \Nin$ and $p_{i} = \nu$ for some $i$.
    Considering the paths $q:= p_{0} \to \cdots \to p_{i}$ and
    $r := p_{i} \to \cdots \to p_{m}$, one can easily check that $q \in \cP^{\to \nu}$, $r  \in \cP^{\nu \to}$. Moreover,
     by definition of the path-lifting $\phi_p(\theta)$, we have
    \[
    \Phi_{p}
    = \phi_p(\theta)
    = \phi_{q}(\theta) \phi_{r}(\theta)
    = \Phi^{\to \nu}_{q} \Phi^{\nu \to}_{r}.
    \]
    To prove a similar property for the activations, first observe  that since $\nu$ is an output neuron of $G^{\to \nu}$, its activation {\em on this DAG $\relu$ network} (but not necessarily on the original one with $G$) is $a^{\to \nu}_{\nu}(x,\theta)=1$ for every $x$, $\theta$, by convention, while for every other neuron $\mu$ in $G^{\to \nu}$, $a^{\to \nu}_{\mu}(x,\theta) = a_{\mu}(x,\theta)$.
    As a consequence
      \begin{align}
         a^{\to \nu}_{q} &
         \stackrel{\eqref{eq:DefActPath}}{=} \prod_{j=0}^{i} a_{q_j}^{\to \nu}
          = \prod_{j=0}^{i-1} a_{q_j}\label{eq:PropActPathIn}\\
        \text{so that}\quad   a_p &\stackrel{\eqref{eq:DefActPath}}{=}\prod_{j=0}^{m} a_{p_j} \notag\\
            & =
             \left(\prod_{j=0}^{i-1} a_{p_j} \right)
             a_{\nu}
             \left(\prod_{j=i+1}^{m} a_{p_j} \right) \notag\\
             &  \stackrel{\eqref{eq:DefActPathOutEntry} \& \eqref{eq:PropActPathIn}}{=}  a_q^{\to \nu} a_{\nu}a^{\nu\to}_r
    \end{align}
    Since $\{p \in \cP: p \ni \nu, p_{0} \in \Nin\}$ is in bijection with $\{q \in \cP^{\to \nu}: q_{0} \in \Nin\} \times  \cP^{\nu \to}$,  \eqref{eqn:kron_path}--\eqref{eqn:kron_pathactvec} follow using the definition of a Kronecker product (adapted to arbitrary cartesian product of index sets).

    Finally, to establish~\eqref{eqn:kronmatrix}, explicit each entry of $A_{\left[p:\substack{p \ni \nu \\ p_0 \in \Nin}, \Nin \right]}$ from \eqref{eq:defpathactivation}
    \begin{align}
        A_{p, \mu} & = \begin{cases} 0 &  \text{ if } p_0 \neq \mu \\ a_p = a_q^{\to \nu} a_{\nu} a_r^{\nu \to}  &  \text{ otherwise } \end{cases} \label{eq:DefActMatFullEntry}
    \end{align}
    and similarly for $A_{ \{q: q_0 \in \Nin \},\Nin}^{\to \nu}$, again from \eqref{eq:defpathactivation}
    \begin{align}
        A^{\to \nu}_{q, \mu} = \begin{cases} 0 &  \text{ if } q_0 \neq \mu \\ a_q^{\to \nu} &  \text{ otherwise } \end{cases} \label{eq:DefActMatPathInEntry}
    \end{align}
    Then, denoting the input neurons $\Nin = \{ \mu_1 , \dots, \mu_d \}$ and enumerating $\{ q : q_0 \in \Nin \} = \{q^1, \dots, q^Q \}$,$\{ r :r \in \cP^{\nu \to} \} = \{r^1, \dots, r^R \}$ and $\{p: p \ni \nu, p_0 \in \Nin \} = \{p^1, \dots, p^{QR} \}$:
    \begin{align}
         A_{\left[p:\substack{p \ni \nu \\ p_0 \in \Nin}, \Nin \right]} & \stackrel{\eqref{eq:DefActMatFullEntry}}{=} \begin{pmatrix}
            \delta_{p^1_0 = \mu_1} a_{q^1}^{\to \nu} a_\nu a^{\nu \to}_{r^1} & \dots & \delta_{p^1_0 = \mu_d} a_{q^1}^{\to \nu} a_\nu a^{\nu \to}_{r^1} \\
            \vdots & \ddots & \vdots \\
            \delta_{p^R_0 = \mu_1} a_{q^1}^{\to \nu} a_\nu a^{\nu \to}_{r^R} & \dots & \delta_{p^R_0 = \mu_d} a_{q^1}^{\to \nu} a_\nu a^{\nu \to}_{r^R} \\
            \delta_{p^{R+1}_0 = \mu_1} a_{q^2}^{\to \nu} a_\nu a^{\nu \to}_{r^1} & \dots & \delta_{p^{R+1}_0 = \mu_d} a_{q^2}^{\to \nu} a_\nu a^{\nu \to}_{r^1} \\
            \vdots & \ddots & \vdots \\
            \delta_{p^{QR}_0 = \mu_1} a_{q^Q}^{\to \nu} a_\nu a^{\nu \to}_{r^R} & \dots & \delta_{p^{QR}_0 = \mu_d} a_{q^Q}^{\to \nu} a_\nu a^{\nu \to}_{r^R}
        \end{pmatrix} \\
        & =   \begin{pmatrix}
            \delta_{q^1_0 = \mu_1} a_{q^1}^{\to \nu} a_\nu a^{\nu \to}_{r^1} & \dots & \delta_{q^1_0 = \mu_d} a_{q^1}^{\to \nu} a_\nu a^{\nu \to}_{r^1} \\
            \vdots & \ddots & \vdots \\
            \delta_{q^1_0 = \mu_1} a_{q^1}^{\to \nu} a_\nu a^{\nu \to}_{r^R} & \dots & \delta_{q^1_0 = \mu_d} a_{q^1}^{\to \nu} a_\nu a^{\nu \to}_{r^R} \\
            \delta_{q^2_0 = \mu_1} a_{q^2}^{\to \nu} a_\nu a^{\nu \to}_{r^1} & \dots & \delta_{q^2_0 = \mu_d} a_{q^2}^{\to \nu} a_\nu a^{\nu \to}_{r^1} \\
            \vdots & \ddots & \vdots \\
            \delta_{q^Q_0 = \mu_1} a_{q^Q}^{\to \nu} a_\nu a^{\nu \to}_{r^R} & \dots & \delta_{q^{Q}_0 = \mu_d} a_{q^Q}^{\to \nu} a_\nu a^{\nu \to}_{r^R}
        \end{pmatrix} \\
        & =    \begin{pmatrix}
            \delta_{q^1_0 = \mu_1} a_{q^1}^{\to \nu} a_\nu a^{\nu \to} & \dots &  \delta_{q^1_0 = \mu_d} a_{q^1}^{\to \nu} a_\nu  a^{\nu \to} \\
            \vdots & \ddots & \vdots \\
            \delta_{q^Q_0 = \mu_1}a_{q^Q}^{\to \nu} a_\nu   a^{\nu \to} & \dots & \delta_{q^Q_0 = \mu_d} a_{q^Q}^{\to \nu} a_\nu  a^{\nu \to}
        \end{pmatrix} \\
        & \stackrel{\eqref{eq:DefActMatPathInEntry}}{=} a_\nu  \begin{pmatrix}
            A^{\to \nu}_{q^1, \mu_1} a^{\nu \to} & \dots &  A^{\to \nu}_{q^1, \mu_d} a^{\nu \to} \\
            \vdots & \ddots & \vdots \\
            A^{\to \nu}_{q^Q, \mu_1} a^{\nu \to} & \dots &  A^{\to \nu}_{q^Q, \mu_d} a^{\nu \to}
        \end{pmatrix} \\
        &= a_\nu \left( A^{\to \nu}_{[q : q_0 \in \Nin, \Nin]} \otimes a^{\nu \to} \right)
    \end{align}

\end{proof}

\section{Proof of  \Cref{lemma:convexity_criterion}}
\label{app:maxpool}

We begin with the proof for DAG $\relu$ networks \emph{where all hidden neurons are equipped with the $\relu$ activation function}. Then we  prove that \Cref{lemma:convexity_criterion} (and therefore \Cref{prop:nec_relu_cvx}, \Cref{thm:NCS_convex_DAG} and their consequences) extend to the full DAG framework of \citet{gonon2023path} which also contains pooling neurons in addition to $\relu$ neurons.
We however restrict such pooling to \emph{max-pooling} neurons, instead of the general ``$k$-max-pooling'' considered by \citet{gonon2023path}, since those also include \emph{min-pooling} neurons, whose relation to convexity is somewhat opposite to max-pooling neurons.

\subsection{The case of $\relu$ hidden neurons}
    Since $\nu \in H$ is isolated we can pick $x \in \fX_{\nu} \neq \emptyset$ and
    a ball $B(x, \epsilon)$ as in \Cref{def:isolated_neurons}.
   By definition of   $B(x, \epsilon)$, the functions $A^{\to \nu}(\cdot,\theta)$ and $a^{\nu\to}(\cdot,\theta)$ are constant on this ball (they only involve products with neuron activations $a_{\mu}(x,\theta)$, $\mu \neq \nu$), and we denote $A^{\to \nu}$, $a^{\nu\to}$ their respective values.
   For the same reason, $z_{\nu}(\cdot,\theta)$ is affine on $B(x,\epsilon)$ and $a_{\nu}(\cdot,\theta)$ takes two
   values.
   As a consequence, on this ball, the path-activation matrix $A(\cdot,\theta)$ takes exactly two values.
   This implies the existence of $x^{+},x^{-}$ in $B(x,\epsilon)$ such that $a_{\nu}(x^+,\theta)=1$,
   $a_{\nu}(x^-,\theta)=0$, and $A(\cdot,\theta)$ is locally constant around $x^{+}$ and $x^{-}$.
   As a consequence, $f_{\theta}$ is differentiable
   at $x^{+}$ and $x^{-}$ establishing the first point.
   Now consider \emph{any} two such points $x^+,x^- \in B(x, \epsilon)$ and denote
     $A^+ := A(x^+,\theta)$, $A^{-}:=A(x^-, \theta)$.
     Since $a_{\nu}(x^{+},\theta)=1$ and $a_{\nu}(x^{-},\theta)=0$ we have
    $z_\nu(x^+,\theta) > 0 $ and $z_{\nu}(x^-,\theta) \leq 0$.
 By \eqref{eqn:preact_from_path} we have (we omit the dependency of $\Phi^{\to \nu}(\theta)$ in $\theta$ for brevity)
    \begin{align}
        \left\langle \Phi^{\to \nu}, A^{\to \nu} \begin{pmatrix}
            x^+ \\1
        \end{pmatrix} \right\rangle =z_{\nu}(x^{+},\theta) > 0  \, \label{eqn:proof_cvx_relu_nn_pre_act_pos}\\
        \left\langle \Phi^{\to \nu}, A^{\to \nu} \begin{pmatrix}
            x^- \\1
        \end{pmatrix} \right\rangle =z_{\nu}(x^{-},\theta)  \leq
         0 \ . \label{eqn:proof_cvx_relu_nn_pre_act_neg}
    \end{align}

    Denote $u^+ =\nabla f_{\theta}(x^+) $ (resp. $u^- =\nabla f_{\theta}(x^-)$) the two slopes of $f_\theta$ given by $A^+$ (resp. $A^-$).  To get the slope of $f_\theta$ from the expression $\langle \Phi, A \ (x \ 1)^\top \rangle$ in \eqref{eqn:f_from_path}, we need to extract the block matrix corresponding to the paths which start with inputs neurons $N_{\mathrm{in}}$ (recall that the remaining rows/column
     of $A$ collect the biases of the network)
    \begin{equation*}
        (u^+ - u^-)^\top =  \Phi^{\top}_{[p:p_0 \in \Nin]} (A^+ - A^-)_{[p: p_0 \in \Nin,N_{\mathrm{in}}]}.
    \end{equation*}
    Because the only entries that change between $A^{+}$ and $A^{-}$ are the ones corresponding to paths $p \in \cP$ which contain $\nu$, it holds
    \begin{align*}
        (u^+ - u^-)^\top
        = \Phi^{\top}_{ \left[p: \substack{p \ni \nu \\ p_0 \in \Nin} \right]} (A^+ - A^-)_{ \left[ \substack{p \ni \nu \\ p_0 \in \Nin} , \Nin\right]} .
    \end{align*}
    Moreover, $a_p(x',\theta) = 0$ for all paths $p$ which contain $\nu$ and all $x'$ such that
    $z_{\nu}(x',\theta) \leq 0$, so $A^-_{[p:p \ni \nu,N_{\mathrm{in}}]} = \mathbf{0}$, leading to
    \begin{align}\label{eq:DAGslopediff}
        (u^+ - u^-)^\top
        = \Phi^{\top}_{ \left[p: \substack{p \ni \nu \\ p_0 \in \Nin} \right]} A^+_{\left[p: \substack{p \ni \nu \\ p_0 \in \Nin}, \Nin \right]} .
    \end{align}
    Subtracting \eqref{eqn:proof_cvx_relu_nn_pre_act_neg}
    to \eqref{eqn:proof_cvx_relu_nn_pre_act_pos} similarly yields: \begin{equation}\label{eqn:activ_sign}
       [\Phi^{\to \nu}_{[p: p_0 \in \Nin]}]^\top A^{\to \nu}_{[p: p_0 \in \Nin,N_{\mathrm{in}}]}
       (x^+ - x^-) >
       0.
    \end{equation}
By \Cref{lemma:kronecker}, using that $A^{\to \nu}(\cdot,\theta)=A^{\to \nu}$ and $a^{\nu\to}(\cdot,\theta)=a^{\nu\to}$ on $B(x,\epsilon)$, and $a_{\nu}(x^+,\theta)=1$ (cf~\eqref{eqn:proof_cvx_relu_nn_pre_act_pos}) the quantity $\Delta u  := (u^+ - u^-)^\top$
    from~\eqref{eq:DAGslopediff} rewrites as
    \begin{equation*}
        \begin{aligned}
            \Delta u
            & =
            \left( \Phi^{\to \nu}_{[q:q_0 \in \Nin]} \otimes \Phi^{\nu \to} \right)^\top \\
            & \qquad \times  \left( A^{\to \nu}_{[q:q_0 \in \Nin, \Nin]} \otimes
            a^{\nu \to}
            \right) \\
            & = \left( [\Phi^{\to \nu}_{[q:q_0 \in \Nin]}]^\top A^{\to \nu}_{[q:q_0 \in \Nin, \Nin]} \right) \\
            & \qquad \otimes \underbrace{\left( [\Phi^{\nu \to}]^\top
            a^{\nu \to}
            \right)}_{\text{scalar}} \\
            & = \left( [\Phi^{\nu \to}]^\top
            a^{\nu \to}\right) \\
            & \qquad \times  \left( [\Phi^{\to \nu}_{[q:q_0 \in \Nin]}]^\top A^{\to \nu}_{[q:q_0 \in \Nin, \Nin]}\right).
        \end{aligned}
    \end{equation*}
    It follows
    \begin{equation*}
        \begin{aligned}
       \langle u^+-u^-,x^+ - x^-\rangle  =
        \left( [\Phi^{\nu \to}]^{\top}
         a^{\nu \to}
        \right) \\
        \times \underbrace{\left( [\Phi^{\to \nu}_{[q:q_0 \in \Nin]}]^{\top} A^{\to \nu}_{[q:q_0 \in \Nin, \Nin]} \right)
         (x^+ - x^-)}_{> 0 \, \text{by \eqref{eqn:activ_sign}}}.
    \end{aligned}
\end{equation*}
As a result
\begin{align}\label{eq:monotoneconditiondag}
 \langle u^+-u^-,x^+ - x^-\rangle \geq 0 \nonumber \\
 \qquad \Longleftrightarrow \langle a^{\nu\to},\Phi^{\nu \to}(\theta)\rangle \geq 0.
\end{align}

\subsection{DAG $\relu$ networks with max-pooling activation function}

The max-pooling function
\begin{equation}
\maxpool(x) := \max_{i \in \{1, \dots, d\}} x_i
\end{equation}
returns the largest coordinate of $x \in \bbR^d$.
This section explains how to incorporate max-pooling activations in the DAG neural network model considered in this paper and recover the necessary conditions for convexity of the network given in \Cref{prop:nec_relu_cvx}.
Consider a network with architecture $G$ described by the tuple $(N,E)$ and parameters $\theta$.
The key is to judiciously modify the definition of neuron activation and preactivation, and to introduce \emph{edge} activations  \cite{gonon2023path}.
Consider a neuron $\nu$ with max-pooling activation and denote $\ant(\nu) := \{\mu \in N: (\mu, \nu) \in E \}$.
The post-activation of the neuron $\nu$ for a given input point $x \in \bbR^d$ writes as (there is no bias for such a neuron)
\begin{align}
    \nu(x) & :=  \max_{\mu \in \ant(\nu)} \theta^{\mu \to \nu} \mu(x) \\
    & = \langle (\theta^{\mu \to \nu} \mu(x))_{\mu \in \ant(\nu)}, e_{\mu^*} \rangle \qquad \text{with } \mu^* := \argmax_{\mu \in \ant(\nu)} \theta^{\mu \to \nu} \mu(x) \ , \label{eqn:max_pool_dirac}
\end{align}
where $e_{\mu^*}$ in \Cref{eqn:max_pool_dirac} is the binary vector with value $1$ for index
 $\mu = \mu^*$ and $0$ otherwise\footnote{we arbitrarily number the neurons in $\ant(\nu)$ so that
if the argmax contains more than one neuron, we systematically pick the one indexed by the smallest number}
.
It is then natural to define the activation of a max-pooling neuron $\nu$ as the binary
 \emph{vector} $\bfa_\nu(x, \theta) \in \{0,1\}^{\ant(\nu)}$ with
\begin{equation}
    [\bfa_\nu(x, \theta)]_{\mu} :=
    \begin{cases}
        1 &  \text{if } \mu = \argmax_{\mu' \in \ant(\nu)} \theta^{\mu' \to \nu} \mu'(x) \ , \\
        0 &  \text{otherwise,}
    \end{cases}
\end{equation}
and the pre-activation of $\nu$ as the {\em vector} $z_\nu(x) := (\theta^{\mu \to \nu} \mu(x))_{\mu \in \ant(\nu)} \in \bbR^{\ant(\nu)}$.
The post-activation of $\nu$ rewrites
\begin{equation}
    \nu(x) = \langle z_{\nu}(x), \bfa_\nu(x, \theta) \rangle \in \bbR \ .
\end{equation}
The same holds true with $\relu$ neurons with the convention $\bfa_{\nu}(\theta,x) := a_{\nu}(\theta,x) \in \{0,1\}$ for such neurons.

Following \citet{gonon2023path}, we also define the activation {\em of an edge} $(\mu, \nu) \in E$ as
\begin{equation}
    a_{\mu \to \nu}(x,\theta) := \begin{cases}
        a_\nu(x, \theta) & \text{if $\nu$ is a $\relu$ neuron} \\
        [a_\nu(x, \theta)]_\mu & \text{if $\nu$ is a $\maxpool$ neuron}\\
        1 & \text{if $\nu$ is a linear neuron}
    \end{cases} \in \{0,1 \} \ .
\end{equation}
These extended definitions enable us to re-define the activation of a path $p := p_0 \to p_1 \to \dots \to p_m$ as
$a_p(x, \theta) := a_{p_0}(x, \theta)
\prod_{i = 1}^m a_{p_{i-1} \to p_i}(x, \theta)$ (with the convention that $a_{p_{0}}(\theta,x)=1$ when $
p_{0}$ is a max-pooling neuron, an input neuron, or a linear neuron, see  \cite[Definition A.3]{gonon2023path}) : we describe the activation of a path using the edge-wise activations instead of the neuron-wise activations.
On the extracted sub-graph $G^{\nu \to}$, the path-activation of $r := r_0 \to \dots \to r_m \in \cP^{\nu \to}$ (where $r_{0}=\nu$ by definition of $\cP^{\nu \to}$) is defined as $a^{\nu \to}_r(x, \theta) := \prod_{i=1}^m a_{r_{i-1}\to r_i}(x,\theta)$ (\emph{without} multiplication by $a_{r_{0}}
(\theta,x)$).
Note that when only considering $\relu$ neurons, the modified definitions provided here yield exactly the same path-activations as the ones given in the core of the paper.
\begin{remark}
Remember that when $\nu$ is equipped with the $\relu$, considering the subnetwork with architecture
$G^{\to \nu}$ we replace the activation of $\nu$ by a linear activation since it is the output of this subnetwork. When $\nu$ is a $\maxpool$ neuron, we preserve its activation function. Whether the proof for the $\relu$ case can be adapted to fit this framework is left to future work.
\end{remark}
Besides, since we follow the framework from \cite{gonon2023path} which already includes max-pooling activations, all relevant results from their work -- {\em e.g.} \Cref{eqn:f_from_path} -- still hold in the framework of this section.

We provide below a (slightly) modified path-activations factorization which aligns with the edge-wise description of path-activations instead of the neuron-wise one.
\begin{lemma}[Path-activations factorization -- variant of \Cref{lemma:kronecker}]
    \label{lemma:variant_kron}
    Within the framework of this section,
    consider an arbitrary hidden neuron $\nu$ and any parameter $\theta$.
    The restrictions of $\Phi = \Phi(\theta) \in \bbR^\cP$ and $a = a(x,\theta) \in \{0,1\}^{\cP}$ to paths starting from an input neuron and ``containing'' an edge $\mu \to \nu$
     ({\em i.e.} such that $p_{i} = \mu$ and $p_{i+1} = \nu$ for some $i$; this is denoted $p \ni (\mu,\nu)$) satisfy
        \begin{align}
            \label{eqn:variant_kron_path}
            \Phi_{\left[ p: \ \substack{p \ni (\mu, \nu) \\ p_0 \in \Nin} \right]}
            = \theta^{\mu \to \nu} \cdot \left(
            \Phi^{\to \mu }_{[ q:q_0 \in \Nin]}
            \otimes
            \Phi^{ \nu \to} \right) \,
        \end{align}
        \begin{align}
            \label{eqn:variant_kron_pathactvec}
            a_{\left[ p:\substack{p \ni (\mu,\nu) \\ p_0 \in \Nin} \right]}
            = a_{\mu \to \nu} \cdot \left(a^{\to \mu }_{[q:q_0 \in \Nin]}
            \otimes
            a^{ \nu \to} \right) .
        \end{align}
    For the path-activation matrix, one has
        \begin{align}\label{eqn:variant_kronmatrix}
            A_{\left[p:\substack{p \ni (\mu, \nu) \\ p_0 \in \Nin} , \Nin\right] }
            &= a_{\mu \to \nu} \cdot \left(
             A_{[q: q_0 \in \Nin, \Nin]}^{\to \mu}
             \otimes a^{\nu \to} \right).
        \end{align}
\end{lemma}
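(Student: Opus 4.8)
The plan is to mirror the proof of \Cref{lemma:kronecker} given in \Cref{app:proof_dag}, replacing the splitting of a path \emph{at a neuron} by a splitting \emph{at an edge}, and the neuron activation $a_\nu$ by the edge activation $a_{\mu\to\nu}$. The combinatorial backbone is a bijection: every path $p = p_0 \to \cdots \to p_m \in \cP$ with $p_0 \in \Nin$ and containing the edge $(\mu,\nu)$ (say $p_i = \mu$, $p_{i+1} = \nu$) is cut at that edge into $q := p_0 \to \cdots \to \mu$ and $r := \nu \to \cdots \to p_m$. Then $q \in \cP^{\to \mu}$ with $q_0 \in \Nin$, $r \in \cP^{\nu\to}$, and $(q,r)$ determines $p$ uniquely; conversely any such pair reglues along $(\mu,\nu)$ into an admissible $p$. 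Hence $\{p : p \ni (\mu,\nu),\, p_0 \in \Nin\}$ is in bijection with $\{q \in \cP^{\to\mu} : q_0 \in \Nin\} \times \cP^{\nu\to}$, and all three identities follow by checking them entrywise along this bijection and then invoking the definition of the Kronecker product (over arbitrary cartesian products of index sets), exactly as in \Cref{app:proof_dag}.

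For the path-lifting \eqref{eqn:variant_kron_path}: since $p_0 \in \Nin$, the definition \eqref{eq:path_lift} gives $\phi_p(\theta) = \prod_{i=1}^m \theta^{p_{i-1}\to p_i}$, a pure product of edge weights. Splitting this product at the bridge edge isolates the factors of $q$ (which form $\Phi^{\to\mu}_q$), the single factor $\theta^{\mu\to\nu}$, and the factors of $r$ (which form $\Phi^{\nu\to}_r$, recalling from \eqref{eq:DefPhiOut} that $\Phi^{\nu\to}$ collects only the edge weights from $\nu$ onward and omits the bias $\theta_\nu$). Thus $\phi_p = \theta^{\mu\to\nu}\,\Phi^{\to\mu}_q\,\Phi^{\nu\to}_r$, and pulling the common scalar $\theta^{\mu\to\nu}$ out of the bijectively indexed family yields \eqref{eqn:variant_kron_path}.

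For the path-activation \eqref{eqn:variant_kron_pathactvec}, I would use the edge-wise definition $a_p = a_{p_0}\prod_{i=1}^m a_{p_{i-1}\to p_i}$ of this section. Here $a_{p_0} = 1$ because $p_0 \in \Nin$; the sub-product of edge activations up to $\mu$ equals $a^{\to\mu}_q$ (the path activation of $q$ on $G^{\to\mu}$); the bridge edge contributes exactly $a_{\mu\to\nu}$; and the remaining edges from $\nu$ onward contribute $a^{\nu\to}_r = \prod_{i=1}^{|r|} a_{r_{i-1}\to r_i}$, which by construction does \emph{not} include the source factor $a_{r_0}$. Hence $a_p = a_{\mu\to\nu}\,a^{\to\mu}_q\,a^{\nu\to}_r$, giving \eqref{eqn:variant_kron_pathactvec} after reindexing. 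The main obstacle is precisely this activation bookkeeping: one must verify that the gate of $\nu$ is accounted for once and only once. In \Cref{lemma:kronecker} this role is played by the scalar $a_\nu$; here it is played by the edge activation $a_{\mu\to\nu}$, which equals $a_\nu$ when $\nu$ is a $\relu$ neuron but equals $[a_\nu]_\mu$ when $\nu$ is a $\maxpool$ neuron. This substitution is exactly what makes the statement the correct max-pooling analogue, and confirming that the three factors tile the edges of $p$ with neither gap nor overlap (respecting the subgraph conventions on $G^{\to\mu}$ and $G^{\nu\to}$) is where the care lies.

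Finally, \eqref{eqn:variant_kronmatrix} follows from \eqref{eqn:variant_kron_pathactvec} by the same entrywise expansion used for \eqref{eqn:kronmatrix}. Using \eqref{eq:defpathactivation}, for an input column $\mu_j \in \Nin$ one has $A_{p,\mu_j} = a_p\,\mathbf{1}_{p_0 = \mu_j}$, and since cutting at $(\mu,\nu)$ preserves the starting neuron ($p_0 = q_0$) we may replace $\mathbf{1}_{p_0=\mu_j}$ by $\mathbf{1}_{q_0=\mu_j}$. Substituting $a_p = a_{\mu\to\nu}\,a^{\to\mu}_q\,a^{\nu\to}_r$ and recognising $A^{\to\mu}_{q,\mu_j} = a^{\to\mu}_q\,\mathbf{1}_{q_0=\mu_j}$, the block-by-block computation is identical to the one closing the proof of \Cref{lemma:kronecker} in \Cref{app:proof_dag}, with $a_\nu$ replaced by $a_{\mu\to\nu}$ and the $\to\nu$ objects replaced by their $\to\mu$ counterparts; factoring out the scalar $a_{\mu\to\nu}$ and the vector $a^{\nu\to}$ then produces the claimed Kronecker structure.
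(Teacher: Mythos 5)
Your proposal is correct and takes essentially the same approach as the paper's own proof: the paper likewise reduces everything to the argument of \Cref{lemma:kronecker} in \Cref{app:proof_dag}, cutting each path $p$ with $p_0 \in \Nin$ at the bridge edge $(\mu,\nu)$ into $q \in \cP^{\to\mu}$ and $r \in \cP^{\nu\to}$, and establishing the key edge-wise factorization $a_p = a_q^{\to\mu}\, a_{\mu\to\nu}\, a_r^{\nu\to}$ (with the edge activation $a_{\mu\to\nu}$ playing the role that the neuron gate $a_\nu$ played before), the path-lifting and matrix identities then following from the same bijection and Kronecker-product expansion. The only difference is emphasis: the paper's proof is terser, detailing only the activation bookkeeping and referring back to \Cref{app:proof_dag} for the rest, whereas you spell out all three identities.
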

\begin{proof}
    The proof is exactly the same as the one done in \Cref{app:proof_dag}.
    We only detail below the factorization of $a_p$ for a path $p = p_0 \to \dots \to p_m \in \cP$ such that  $p_0 \in \Nin$ and $p_i = \mu$, $p_{i+1} = \nu$ for some $i$.
    We denote $q:= p_0 \to \dots \to p_i \in \cP^{\to \mu}$  and $r:= p_{i+1} \to \dots \to p_m \in \cP^{\nu \to}$.
 \begin{align}
        a_q^{\to \mu} &
        = \underbrace{a^{\to \mu}_{p_{0}}}_{=1 \text{ by convention since } p_{0}\in \Nin}
         \prod_{j=1}^{i} a^{\to \mu}_{p_{j-1}\to a_{p_{j}}}
         =  \prod_{j=1}^{i} a_{p_{j-1}\to a_{p_{j}}}
        \ , \\
        a_r^{\nu \to} & =  a_{p_{i+1} \to p_{i+2}}  \cdots a_{p_{m-1} \to p_m}  \ ,
    \end{align}
    from which we have $a_p = a_{q}^{\to \mu} a_{\mu \to \nu} a_r^{\nu \to}$.
\end{proof}

Now, we are almost equipped to prove a variant of \Cref{prop:nec_relu_cvx} for networks which include max-pooling activations.
First, we need to redefine what an isolated neuron  when considering max-pooling activations.
Recall that for a $\relu$ activation, the set $\fX_\nu \subset \bbR^d$ is defined as input points
for which there exists a neighborhood $\mathcal{N}$ over which only the activation of $\nu$ changes.
In other words, this means that over $\cN$, $a_\nu$ takes {\em exactly two values}: $0$ and $1$, while $a_\mu$ for $\mu \neq \nu$ is constant.

This motivates the following definition of isolated neurons, which handles the maxpool case while being equivalent to \Cref{def:isolated_neurons} in the case of $\relu$ neurons.

\begin{definition}[Isolated neurons -- variant of \Cref{def:isolated_neurons}]
\label{def:variant_IsolatedMaxPool}
Given a parameter $\theta$, for each hidden   neuron $\nu  \in H:=\cup_{\ell=1}^{L-1}N_{\ell}$, we define $\fX_\nu \subset \bbR^d$ as the set of input points for which in every small enough neighborhood, \emph{only} the activation of neuron $\nu$
changes, and takes exactly two distinct values:
    \begin{equation}
        \fX_\nu := \Bigg\{x \in \bbR^d:  \exists \epsilon_0>0\ : \forall \, 0<\epsilon \leq \epsilon_0,
  \begin{cases}
            \bfa_\mu(\cdot,\theta) \text{ is constant on } B(x, \epsilon) & \forall \mu \neq \nu \\
            \bfa_\nu(\cdot,\theta) \text{ takes exactly $2$ values on } B(x, \epsilon)  &
        \end{cases} \Bigg\} \ .
    \end{equation}
    A hidden neuron $\nu$ is said to be \emph{isolated} if $\fX_\nu \neq \emptyset$.
\end{definition}

\subsection{Extension \Cref{lemma:convexity_criterion} to cover maxpool activations.}

We now prove an extension of \Cref{lemma:convexity_criterion} that also covers DAG $\relu$ networks including $\maxpool$ activations.
We highlight in blue the subtle difference with the statement \Cref{lemma:convexity_criterion}.
\begin{lemma}[Local convexity criterion for $\relu$ networks]
    \label{lemma:convexity_criterionmaxpool}
    Consider a $\relu$ network described by a DAG and parametrized by $\theta$ which implements a CPWL function $f_\theta: \bbR^d \to \bbR$.
    Consider an isolated neuron $\nu \in H$ and $x \in \fX_\nu \neq \emptyset$.
    Denote $\bfa^+,\bfa^-$ the two distinct values of $\bfa_{\nu}(\cdot,\theta)$ from \Cref{def:variant_IsolatedMaxPool}.
    There exists $\epsilon > 0$ such that
    \begin{enumerate}
    \item there are $x^+, x^- \in B(x, \epsilon)$ such that $f_\theta$
    is differentiable at $x^{+},x^{-}$ and
    \textcolor{blue}{$\bfa_{\nu}(x^+,\theta)=\bfa^{+}$,  $\bfa_{\nu}(x^-,\theta)=\bfa^-$};
    \item for any such pair $x^{+},x^{-}$ it holds
    \begin{align}
        \label{eq:monotoneconditiondagmaxpool}
        & \langle \nabla f_{\theta}(x^+) - \nabla f_{\theta}(x^-),x^+ - x^-\rangle \geq 0  \nonumber \\
        \iff & \langle a^{\nu\to},\Phi^{\nu \to}(\theta)\rangle \geq 0 \ .
    \end{align}
    \end{enumerate}
\end{lemma}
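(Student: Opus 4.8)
The plan is to replay the proof of the $\relu$ case (first subsection above) almost verbatim, with $a_\nu \in \{0,1\}$ replaced by the vector activation $\bfa_\nu \in \{0,1\}^{\ant(\nu)}$ of \Cref{def:variant_IsolatedMaxPool}, and the neuron-wise factorization \Cref{lemma:kronecker} replaced by the edge-wise variant \Cref{lemma:variant_kron}. First I would fix $x \in \fX_\nu$ and the ball $B(x,\epsilon)$ from \Cref{def:variant_IsolatedMaxPool}: on it every $\bfa_\mu(\cdot,\theta)$ with $\mu\neq\nu$ is constant, hence every edge activation not incident to $\nu$ is constant and $a^{\nu\to}(\cdot,\theta)$ (which only uses edges strictly after $\nu$) equals a constant vector $a^{\nu\to}$; moreover $A(\cdot,\theta)$ takes exactly two values, one for each value $\bfa^+,\bfa^-$ of $\bfa_\nu$, each on a set of nonempty interior. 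Choosing $x^\pm$ in these interiors proves point~1, with $f_\theta$ affine (hence differentiable) near $x^\pm$ and $\bfa_\nu(x^\pm,\theta)=\bfa^\pm$.

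For point~2 I would split on the nature of $\nu$. If $\nu$ carries a $\relu$, then $\bfa^+=1$, $\bfa^-=0$, all paths through $\nu$ vanish at $x^-$, and the computation is \emph{identical} to that of \Cref{lemma:convexity_criterion} via \Cref{lemma:kronecker}. If $\nu$ is a $\maxpool$ neuron, then $\bfa^+=e_{\mu^+}$ and $\bfa^-=e_{\mu^-}$ are one-hot indicators of two distinct antecedents $\mu^+,\mu^-\in\ant(\nu)$, so the only edge activations differing between $x^+$ and $x^-$ are $a_{\mu^+\to\nu}$ (equal to $1$ at $x^+$, $0$ at $x^-$) and $a_{\mu^-\to\nu}$ (conversely). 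In the slope $u^\top=\Phi^\top_{[p:p_0\in\Nin]}A_{[p:p_0\in\Nin,\Nin]}$ read off from \eqref{eqn:f_from_path}, only input-started paths through these two edges contribute to $u^+-u^-$ (paths starting at hidden neurons affect the intercept, not the slope). Applying \Cref{lemma:variant_kron} to each edge and using that $a^{\nu\to}$ and $\Phi^{\nu\to}$ are common to both, the mixed-product rule for Kronecker products gives
\begin{equation*}
(u^+-u^-)^\top = \langle a^{\nu\to},\Phi^{\nu\to}(\theta)\rangle\left(\theta^{\mu^+\to\nu}S_{\mu^+}-\theta^{\mu^-\to\nu}S_{\mu^-}\right),
\end{equation*}
where $S_\mu:=(\Phi^{\to\mu}_{[q:q_0\in\Nin]})^\top A^{\to\mu}_{[q:q_0\in\Nin,\Nin]}$; here $\langle a^{\nu\to},\Phi^{\nu\to}\rangle$ plays exactly the role that $[\Phi^{\nu\to}]^\top a^{\nu\to}$ plays in the $\relu$ proof.

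It remains to prove that the second factor, paired with $x^+-x^-$, is strictly positive --- the analogue of \eqref{eqn:activ_sign}, and the step I expect to be the main obstacle. I would introduce the \emph{selection gap} $g(y):=\theta^{\mu^+\to\nu}\mu^+(y)-\theta^{\mu^-\to\nu}\mu^-(y)$, whose sign governs the $\maxpool$ choice at $\nu$: it is affine on $B(x,\epsilon)$ (the activations of $\mu^\pm$ and of their ancestors are frozen there), with $g(x^+)>0\geq g(x^-)$ by definition of $\bfa^\pm$, so $\langle\nabla g,x^+-x^-\rangle=g(x^+)-g(x^-)>0$. The delicate point is to identify $\nabla g$ with $\theta^{\mu^+\to\nu}S_{\mu^+}-\theta^{\mu^-\to\nu}S_{\mu^-}$, i.e. to read $S_\mu$ as the slope of the \emph{post-activation} $\mu(\cdot)$ rather than of the pre-activation $z_\mu$. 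This is precisely the convention, flagged in the remark preceding \Cref{lemma:variant_kron}, under which $G^{\to\mu}$ \emph{keeps} $\mu$'s own activation --- the only convention making \Cref{lemma:variant_kron} consistent with the full-graph path activations, since a path through $(\mu,\nu)$ carries the factor $a_\mu$ inside $a^{\to\mu}$. Granting this identification, $\langle u^+-u^-,x^+-x^-\rangle=\langle a^{\nu\to},\Phi^{\nu\to}\rangle\,\langle\nabla g,x^+-x^-\rangle$ with a strictly positive second factor, which gives the equivalence \eqref{eq:monotoneconditiondagmaxpool} and completes the proof. A secondary check worth isolating is the corner case of an antecedent selected by the max while being $\relu$-inactive, which this same post-activation convention handles correctly since then $S_\mu=\nabla\mu=0$.
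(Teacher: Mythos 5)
Your proposal is correct and follows essentially the same route as the paper's proof: the same reduction to the $\relu$ case, the same edge-wise factorization via \Cref{lemma:variant_kron} yielding $(u^+-u^-)^\top = \langle a^{\nu\to},\Phi^{\nu\to}\rangle\left(\theta^{\mu^+\to\nu}S_{\mu^+}-\theta^{\mu^-\to\nu}S_{\mu^-}\right)$, and the same strict-positivity step, where your ``selection gap'' $g$ is exactly the quantity the paper bounds via \eqref{eqn:max_pool_compar-0} and then identifies with the slope term by expressing the \emph{post-activations} $\mu^\pm(\cdot)$ through the path-lifting on $G^{\to\mu^\pm}$ with their output nonlinearity preserved. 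The only cosmetic difference is that the paper fixes a tie-breaking convention (ties in the $\arg\max$ go to $\mu^-$) to get $g(x^+)>0\geq g(x^-)$, which you assert implicitly; either tie-breaking choice yields $g(x^+)-g(x^-)>0$, so nothing is lost.
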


\begin{proof}
    Consider $\nu \in H$ an isolated  neuron.
    In case of a $\relu$ neuron, $\bfa_{\nu}(\cdot,\theta) = a_{\nu}(\cdot,\theta)$, and without loss of generality $\bfa^+=1$, $\bfa^{-}$, so the result is a consequence of  \Cref{lemma:convexity_criterion}.
Assume now that $\nu$ an isolated neuron with $\maxpool$ activation.
	We highlight  the parts of the proof identical in spirit to the corresponding proof of  \Cref{lemma:convexity_criterion}.

    \unchanged{Pick $x \in \fX_\nu \neq \emptyset$ and a ball $B(x, \epsilon)$ as in \Cref{def:variant_IsolatedMaxPool}.
    By definition of $B(x, \epsilon)$, $a^{\nu \to}(\cdot, \theta)$ is constant on this ball
    (it only involves products with edge activations $a^{\mu \to \mu'}$, $\mu' \neq \nu
    $).}

    \newtext{By definition of $\fX_\nu$, $\bfa_\nu$ takes exactly two different values on $B(x,\epsilon)$; denote $\mu^+$ and $\mu^-$ the two antecedents of $\nu$ which realize the max-pool on this ball. The only path-activations which change are for paths containing the edges $\mu^+ \to \nu$ and $\mu^- \to \nu$.
        Without loss of generality we assume that the arbitrary numbering of the antecedents of $\nu$ chosen to uniquely
        break ties in the $\arg\max$ is such that $\mu^-$ is associated to the smallest number. This implies that as soon as
        $\bfa_{\nu}(x,\theta) = e_{\mu^{+}}$ the strict inequality
        \begin{equation}\label{eq:strictineqmaxpool}
        [z_{\nu}(x)]_{\mu^{+}}>[z_{\nu}(x)]_{\mu^{-}}
        \end{equation} holds, while when   $\bfa_{\nu}(x,\theta) = e_{\mu^{-}}$ we have
            \begin{equation}\label{eq:largeineqmaxpool}
        [z_{\nu}(x)]_{\mu^{-}}\geq[z_{\nu}(x)]_{\mu^{+}}.
        \end{equation}
    }

    \unchanged{The path-activation matrix $A(\cdot, \theta)$ takes exactly two values on $B(x,\epsilon)$.}

    \newtext{This implies the existence of $x^+$ (resp. $x^-$) in $B(x,\epsilon)$ such that
    $[\bfa_\nu(x^+, \theta)]_{\mu^+} = 1, [\bfa_\nu(x^+, \theta)]_{\mu^-} = 0$
    (resp. $[\bfa_\nu(x^-, \theta)]_{\mu^+} = 0, [\bfa_\nu(x^-, \theta)]_{\mu^-} = 1$).
    }

    \unchanged{As $A(\cdot, \theta)$ is locally constant around $x^+$ and $x^-$, $f_\theta$ is differentiable at $x^+$ and $x^-$ establishing the first point.
    Now, consider {\em any} two such points $x^+, x^- \in B(x,\epsilon)$ and denote $A^+ := A(x^+, \theta)$, $A^- := A(x^-, \theta)$.
    Denote $u^+ = \nabla f_\theta(x^+)$ (resp. $u^- = \nabla f_\theta(x^-)$) the two slopes of $f_\theta$ given by $A^+$ (resp. $A^-$).
    To get the slope of $f_\theta$ from the expression $\langle \Phi, A \begin{pmatrix} x \\ 1\end{pmatrix} \rangle$ in \Cref{eqn:f_from_path}, we need to extract the block matrix corresponding to the paths which start with input neurons $\Nin$ (recall that the remaining rows/columns of $A$ collect the biases of the network)
    \begin{equation}
        (u^+ - u^-)^\top = \Phi^T_{[p:p_0 \in \Nin]}(A^+ - A^-)_{[p: p_0 \in \Nin, \Nin]}.
    \end{equation}}
    \newtext{Because the only entries that change between $A^+$ and $A^-$ are the ones corresponding to paths $p \in \cP$ which contain the edge $\mu^+ \to \nu$ or $\mu^- \to \nu$, it holds
    \begin{equation}
        \Delta u := (u^+ - u^-)^\top = \begin{pmatrix}
            \Phi_{\left[p: \substack{p \ni (\mu^+, \nu) \\ p_0 \in \Nin}\right]}^\top  & \Phi_{\left[p: \substack{p \ni (\mu^-, \nu)  \\ p_0 \in \Nin} \right]}^\top
        \end{pmatrix}  \begin{pmatrix}
            (A^+ - A^-)_{\left[p: \substack{p \ni (\mu^+, \nu) \\ p_0 \in \Nin} , \Nin \right]} \\
            (A^+ - A^-)_{\left[p: \substack{p \ni (\mu^-, \nu) \\ p_0 \in \Nin} , \Nin \right]}
        \end{pmatrix}   .
    \end{equation}
    Using the factorization of \Cref{lemma:variant_kron}
    \begin{align}
        A^+_{p : \substack{p \ni (\mu^+, \nu) \\ p_0 \in \Nin}} & = \underbrace{a_{\mu^+ \to \nu}(x^+, \theta)}_{=1} A^{\to \mu^+}_{[q : q_0 \in \Nin, \Nin]} \otimes a^{\nu \to} , \\
        A^+_{p : \substack{p \ni (\mu^-, \nu) \\ p_0 \in \Nin}} & = \underbrace{a_{\mu^- \to \nu}(x^+, \theta)}_{=0} A^{\to \mu^-}_{[q : q_0 \in \Nin, \Nin]} \otimes a^{\nu \to} = \mathbf 0 , \\
        A^-_{p : \substack{p \ni (\mu^+, \nu) \\ p_0 \in \Nin}} & = \underbrace{a_{\mu^+ \to \nu}(x^-, \theta)}_{=0} A^{\to \mu^+}_{[q : q_0 \in \Nin, \Nin]} \otimes a^{\nu \to}  = \mathbf 0, \\
        A^-_{p : \substack{p \ni (\mu^-, \nu) \\ p_0 \in \Nin}} & = \underbrace{a_{\mu^- \to \nu}(x^-, \theta)}_{=1} A^{\to \mu^-}_{[q : q_0 \in \Nin, \Nin]} \otimes a^{\nu \to}  ,
    \end{align}
    }
    from which $\Delta u$ rewrites as
    \begin{align}
        \Delta u &  = \begin{pmatrix}
            \Phi_{\left[p: \substack{p \ni (\mu^+, \nu) \\ p_0 \in \Nin}\right]}^\top  & \Phi_{\left[p: \substack{p \ni (\mu^-, \nu)  \\ p_0 \in \Nin} \right]}^\top
        \end{pmatrix}  \begin{pmatrix}
            A^{\to \mu^+}_{[q : q_0 \in \Nin, \Nin]} \otimes a^{\nu \to} \\
            -A^{\to \mu^-}_{[q : q_0 \in \Nin, \Nin]} \otimes a^{\nu \to}
        \end{pmatrix}   \\
        & = \left( \Phi_{\left[p: \substack{p \ni (\mu^+, \nu) \\ p_0 \in \Nin}\right]}^\top (A^{\to \mu^+}_{[q : q_0 \in \Nin, \Nin]} \otimes a^{\nu \to}) \right) - \left(\Phi_{\left[p: \substack{p \ni (\mu^-, \nu)  \\ p_0 \in \Nin} \right]}^\top (A^{\to \mu^-}_{[q : q_0 \in \Nin, \Nin]} \otimes a^{\nu \to})\right)
    \end{align}
    The first term factorizes as
    \begin{align}
        \Phi_{\left[p: \substack{p \ni (\mu^+, \nu) \\ p_0 \in \Nin}\right]}^\top (A^{\to \mu^+}_{[q : q_0 \in \Nin, \Nin]} \otimes a^{\nu \to}) & =(\theta^{\mu^+ \to \nu} \Phi^{\to \mu^+}_{[q:q_0 \in \Nin]}\otimes \Phi^{\nu \to} )^\top (A^{\to \mu^+}_{[q : q_0 \in \Nin, \Nin]} \otimes a^{\nu \to}) \\
        & = \theta^{\mu^+ \to \nu} ((\Phi^{\to \mu^+}_{[q:q_0 \in \Nin]})^\top A^{\to \mu^+}_{[q : q_0 \in \Nin, \Nin]} ) \otimes \underbrace{((\Phi^{\nu \to})^\top  a^{\nu \to})}_{\text{scalar}} \\
        & = ((\Phi^{\nu \to})^\top  a^{\nu \to}) \left( \theta^{\mu^+ \to \nu} ((\Phi^{\to \mu^+}_{[q:q_0 \in \Nin]})^\top A^{\to \mu^+}_{[q : q_0 \in \Nin, \Nin]} )  \right) ,
    \end{align}
    while similarly the second term rewrites
    \begin{align}
        \Phi_{\left[p: \substack{p \ni (\mu^-, \nu)  \\ p_0 \in \Nin} \right]}^\top (A^{\to \mu^-}_{[q : q_0 \in \Nin, \Nin]} \otimes a^{\nu \to}) & = (\theta^{\mu^- \to \nu} \Phi^{\to \mu^-}_{[q:q_0 \in \Nin]} \otimes \Phi^{\nu \to} )^\top (A^{\to \mu^-}_{[q : q_0 \in \Nin, \Nin]} \otimes a^{\nu \to}) \\
        & = \theta^{\mu^- \to \nu} ( (\Phi^{\to \mu^-}_{[q:q_0 \in \Nin]})^\top A^{\to \mu^-}_{[q : q_0 \in \Nin, \Nin]} ) \otimes \underbrace{((\Phi^{\nu \to})^\top  a^{\nu \to})}_{\text{scalar}}  \\
        & = ((\Phi^{\nu \to})^\top  a^{\nu \to}) \left( \theta^{\mu^- \to \nu} ( (\Phi^{\to \mu^-}_{[q:q_0 \in \Nin]})^\top A^{\to \mu^-}_{[q : q_0 \in \Nin, \Nin]} )\right) ,
    \end{align}
    leading to
    \begin{equation}
        \Delta u =
        \langle a^{\nu\to},\Phi^{\nu \to}(\theta)\rangle
        \left[ \left( \theta^{\mu^+ \to \nu} (\Phi^{\to \mu^+}_{[q:q_0 \in \Nin]})^\top A^{\to \mu^+}_{[q : q_0 \in \Nin, \Nin]} \right) - \left(\theta^{\mu^- \to \nu} (\Phi^{\to \mu^-}_{[q:q_0 \in \Nin]})^\top A^{\to \mu^-}_{[q : q_0 \in \Nin, \Nin]}  \right) \right]
    \end{equation}
    Then we have \begin{multline}
        \langle u^+ - u^-, x^+-x^- \rangle =
        \langle a^{\nu\to},\Phi^{\nu \to}(\theta)\rangle
        \bigg[ \left( \theta^{\mu^+ \to \nu} (\Phi^{\to \mu^+}_{[q:q_0 \in \Nin]})^\top A^{\to \mu^+}_{[q : q_0 \in \Nin, \Nin]} ( x^+ - x^-) \right) \\- \left( \theta^{\mu^- \to \nu} (\Phi^{\to \mu^-}_{[q:q_0 \in \Nin]})^\top A^{\to \mu^-}_{[q : q_0 \in \Nin, \Nin]} ( x^+ - x^-) \right) \bigg] \label{eqn:variant_scalar_prod_factorized}
    \end{multline}
    To prove the equivalence~\eqref{eq:monotoneconditiondagmaxpool} it is thus enough to show that
    \begin{equation}\label{eqn:variant_scalar_prod_factorizedbis}
    \bigg[ \left( \theta^{\mu^+ \to \nu} (\Phi^{\to \mu^+}_{[q:q_0 \in \Nin]})^\top A^{\to \mu^+}_{[q : q_0 \in \Nin, \Nin]} ( x^+ - x^-) \right) \\- \left( \theta^{\mu^- \to \nu} (\Phi^{\to \mu^-}_{[q:q_0 \in \Nin]})^\top A^{\to \mu^-}_{[q : q_0 \in \Nin, \Nin]} ( x^+ - x^-) \right) \bigg]>0.
    \end{equation}

    By definition of $x^{+}$, $x^{-}$ and \eqref{eq:strictineqmaxpool}-\eqref{eq:largeineqmaxpool}, we have
    \[
    [z_{\nu}(x^{+},\theta)]_{\mu^{+}}>[z_{\nu}(x^{+},\theta)]_{\mu^{-}},\qquad \text{and}\qquad
    [z_{\nu}(x^{-},\theta)]_{\mu^{-}}\geq [z_{\nu}(x^{-},\theta)]_{\mu^{-}}
    \]
	In other words,
   \begin{align}
        \theta^{\mu^+ \to \nu} \mu^+(x^+)  > \theta^{\mu^- \to \nu} \mu^-(x^+)
\qquad \text{and}\qquad
        \theta^{\mu^- \to \nu} \mu^-(x^-)  \geq  \theta^{\mu^+ \to \nu} \mu^+(x^-)
    \end{align}
    which implies
     \begin{equation}\label{eqn:max_pool_compar-0}
      \theta^{\mu^+ \to \nu} ( \mu^+(x^+) - \mu^+(x^-) ) - \theta^{\mu^- \to \nu} ( \mu^-(x^+) - \mu^-(x^-) ) > 0.
    \end{equation}
    By \Cref{eqn:f_from_path}, here used on two neural networks which output neuron $\mu^{+}$ (resp. $\mu^{-}$) is
    {\em not} linear, the post-activation $\mu^+$ and $\mu^-$ can be explicited for any input $x$ using the path-lifting as
    \begin{align}
        \mu^+(x) & = \left\langle \Phi^{\to \mu^+}, A^{\to \mu^+}_{[q : q_0 \in \Nin, \Nin]}(x, \theta) \begin{pmatrix}
            x \\ 1
        \end{pmatrix} \right\rangle ,\\
        \mu^-(x) & = \left\langle \Phi^{\to \mu^-}, A^{\to \mu^-}_{[q : q_0 \in \Nin, \Nin]}(x, \theta) \begin{pmatrix}
            x \\ 1
        \end{pmatrix} \right\rangle ,
    \end{align}
    so that, as $A^{\to \mu^+}_{[q : q_0 \in \Nin, \Nin]}(x^+, \theta) = A^{\to \mu^+}_{[q : q_0 \in \Nin, \Nin]}(x^-, \theta) = A^{\to \mu^+}_{[q : q_0 \in \Nin, \Nin]}$
    \begin{align}
        \mu^+(x^+) - \mu^+(x^-) & = \left\langle \Phi^{\to \mu^+}, A^{\to \mu^+}_{[q : q_0 \in \Nin, \Nin]}\begin{pmatrix}
            x^+ - x^- \\ 0
        \end{pmatrix} \right\rangle \\
        & = (\Phi^{\to \mu^+}_{[q: q_0 \in \Nin]})^\top A^{\to \mu^+}_{[q : q_0 \in \Nin, \Nin]} (x^+ - x^-).
    \end{align}
    Doing the same computation for $\mu^-$, \Cref{eqn:variant_scalar_prod_factorizedbis} is equivalent to
\[
      \theta^{\mu^+ \to \nu} ( \mu^+(x^+) - \mu^+(x^-) ) - \theta^{\mu^- \to \nu} ( \mu^-(x^+) - \mu^-(x^-) ) > 0.
\]
which indeed holds, cf \eqref{eqn:max_pool_compar-0}.
\end{proof}
 \section{Extensive study of the 2D counter example and proof of \Cref{prop:ICNN_counter_ex}}\label{app:counter_ex}

\subsection{Proof of \Cref{prop:ICNN_counter_ex}}
Recall the CPWL function $f_{\ex}: \bbR^2 \to \bbR$ is implemented by a $\relu$ network with $2$ hidden layers and $2$ neurons per layer (i.e. belonging to $\SMLPclass((2,2))$):
\begin{align}
    \label{eq:Ex2DConvex}
    f_{\ex} \begin{pmatrix} x_1 \\ x_2 \end{pmatrix} & = \begin{pmatrix}
        1 & 1
    \end{pmatrix} \relu \left(\begin{pmatrix}
        -1 & 1 \\
        2 & 1
    \end{pmatrix} \relu \begin{pmatrix} x_1 \\ x_2 \end{pmatrix}  + \begin{pmatrix}
        -1 \\
        -0.5
    \end{pmatrix} \right) \ .
\end{align}
The network written above is not an ICNN because of the negative weight $-1$ in $W_2$ of the second hidden layer.
Still, $f_\theta$ is convex: it can be easily checked using the sufficient conditions of \Cref{prop:cvx_cpwl}.
Plots of $f_{\ex}$ are displayed in \Cref{fig:counter_ex_plots}.

\begin{figure}[htp]
    \begin{center}
        \includegraphics[width=0.35\textwidth]{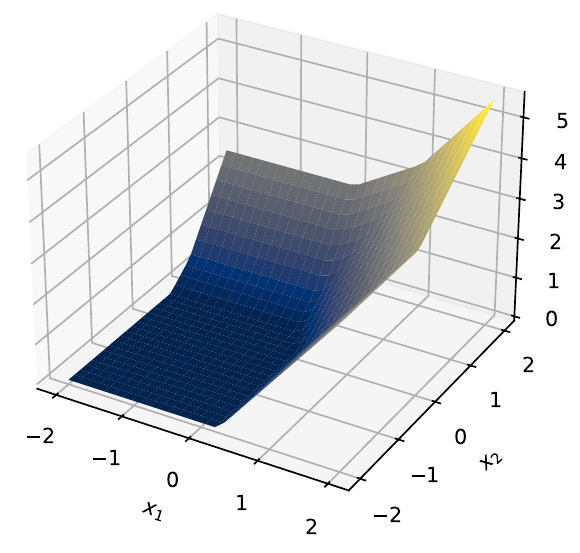}
        \includegraphics[width=0.35\textwidth]{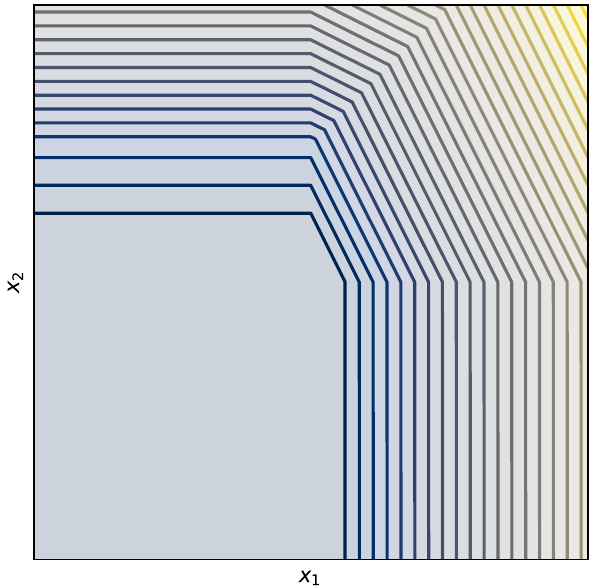}
    \end{center}
    \caption{Visual representations of $f_{\ex}$. Left: 3D plot of $f_{\ex}$. Right: Level lines of $f_{\ex}$.}
    \label{fig:counter_ex_plots}
\end{figure}

\begin{figure}[!htbp]
    \centering
    \includegraphics[width=0.4\textwidth]{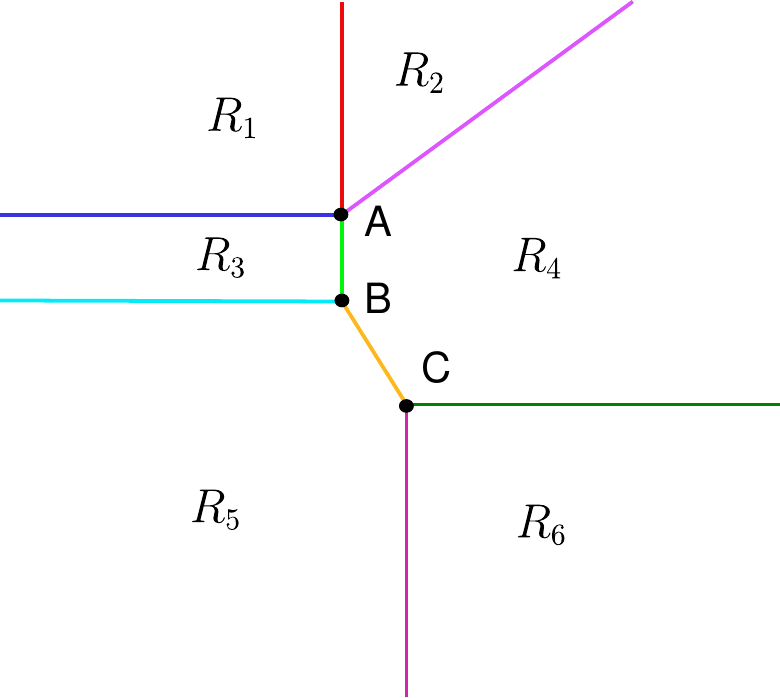}
    \caption{Frontiers and regions of the convex 2D CPWL function}
    \label{fig:ICNN_counter_ex_frontiers}
\end{figure}

To prove \Cref{prop:ICNN_counter_ex}, we adopt a reverse-engineering approach: we only use the knowledge of the landscape of the convex CPWL function $f_{\ex} \in \CvxCPwL$.
The landscape diplays 8 pieces of hyperplanes (represented in color in \Cref{fig:ICNN_counter_ex_frontiers}), which we name frontiers, and 6 corresponding regions $R_k, k \in \{1 \dots, 6 \}$.
Frontiers are denoted $F_{k,l}$ with $k,l \in \{1, \dots, 6 \}$ and correspond to points at which $f_{\ex}$ is non-differentiable.
By contradiction, we assume that $f_\ex$ can be implemented as an ICNN:
\begin{equation}
    f_{\ex}(x) = w_3^\top \relu \left( W_2 \relu (W_1 x + b_1) + V_2 x + b_2 \right) + v_3^\top x + b_3,
\end{equation}
with $w_3, W_2$ having non-negative entries.

Now, we aim to identify the parameters
$(W_1,W_2,w_3,V_2,v_3,b_1,b_2,b_3)$ of this ICNN.
Frontiers, i.e. points of non-differentiability of $f_{\ex}$, arise from the switch of -at least- one neuron (otherwise, the function is locally affine, hence differentiable).
A priori, each frontier (i.e. each colored segment in \Cref{fig:ICNN_counter_ex_frontiers}) could be matched to one or more neurons which switch, so there would be an exponential number of cases to consider.
Fortunately, not all arrangements of pieces of hyperplanes generated by neurons of $\relu$ networks are possible.
The sketch of the proof is as follows: we first state general considerations on $\relu$ networks which enable us to restrict the possible matching between frontiers and neurons to a few cases.
Then, writing down the equations which describe the frontiers, we can identify the parameters of the first layer and second. We end up showing there must be a negative weight in $W_2$, which contradicts the ICNN hypothesis.

\paragraph*{Notation}
We denote $\mu_1, \mu_2 \in N_1$ the neurons of the first layer and $\nu_1, \nu_2 \in N_2$ the neurons of the second layer.
We use the following notation:
$z_{\eta} : \bbR^2 \to \bbR $ is the pre-activation of the neuron $\eta \in \{\mu_1, \mu_2, \nu_1, \nu_2\}$ and $y_\eta := \relu \circ \ z_\eta : \bbR^2 \to \bbR $ is its post-activation.
For a layer $i \in \{1, 2\}$, we denote $z_i : \bbR^2 \to \bbR^2$ its pre-activation and $y_i: \bbR^2 \to \bbR^2$ its post-activation.
We denote  $H_{\eta} := \{ x \in \bbR^2 : \ z_{\eta}(x) = 0 \}$ the $0$-level set of the pre-activation of neuron $\eta$.
For $k, l \in \{1,\dots, 6\}$, we denote $F_{k,l}$ the frontier between neighboring regions $R_k$ and $R_l$ following the numbering given \Cref{fig:ICNN_counter_ex_frontiers}.\\

\noindent \textbf{Frontiers from the first layer}
There are two cases to consider separately regarding $W_1$: the full rank case and degenerated configurations.
If $W_1$ is full rank (i.e. rank 2), then  we get from $H_{\mu_i} = \{x \in \bbR^2: (W_1)_{i:}x + (b_1)_i = 0 \}$ that $\dim(H_{\mu_i}) = 1$ for every $i \in \{1, 2\}$ and $H_{\mu_1} \cap H_{\mu_2} $ is a singleton.
If $W_1$ is not full rank, then different (degenerated) configurations can occur: one $H_{\mu_i}$ is of dimension $0$ or $2$ or $H_{\mu_1}$ and $H_{\mu_2}$ are parallel (or coincident).
In any case, all level lines of $z_{\mu_1}$ and $z_{\mu_2}$ are parallel.

\vspace{0.5cm}
\noindent \textsc{The full rank case.} \\
Next, we assume $W_1$ is full rank. \\

\vspace{0.1cm}
\noindent \textbf{Frontiers from the second layer}

In this case, the two hyperplanes generated by the first layer neurons are crossing lines, it means that the first layer divides the input space $\bbR^2$ into 4 regions, that we denote $A_k$, $k \in \{1, \ldots, 4\}$.
We now describe the possible shapes that the $0$-level set of pre-activations of second layer neurons (so called bent hyperplanes \citep{hanin2019deep}) can take.
This result is illustrated in \Cref{fig:counter_ex_shapes}.

On each of the $A_k$, $z_{\nu_i}$ is an affine function of the input $x$: $y_1$ and $V_2 x$ are both affine in $x$ on each region.
As in the reasoning for the first layer, $H_{\nu_i} \cap \intt(A_k) $ can only be: the empty set, the full region $ A_k$, a segment which joins the two boundaries of $A_k$ or a ray which intersects one boundary of $A_k$.
Besides, if $H_{\nu_i}$ intersects two neighboring regions $A_k$ and $A_l$ with rays/segment, then these rays/segments must have a common point at the frontier $A_k \cap A_l$ (to see it, consider $x \in H_{\nu_i} \cap A_k \cap A_l$ and any point $y \in H_{\nu_i} \cap \intt A_l$, then $[x, y] \subset H_{\nu_i} \cap A_l$).
Last, if $H_{\nu_i} \cap A_k$ is of dimension one (ray or segment) and $H_{\nu_i}$ intersects the frontier $A_k \cap A_l$, then $H_{\nu_i} \cap A_l$ is also of dimension 1.
In other words, if a segment or ray of $H_{\nu_i}$ hits a frontier, it must continue on the other side of the frontier.

Following the previous remarks, we provide in \Cref{fig:counter_ex_shapes} all admissible shapes for the bent hyperplane of a neuron of the second layer.
\begin{figure}[htp]
    \centering
    \includegraphics[width=0.8\textwidth]{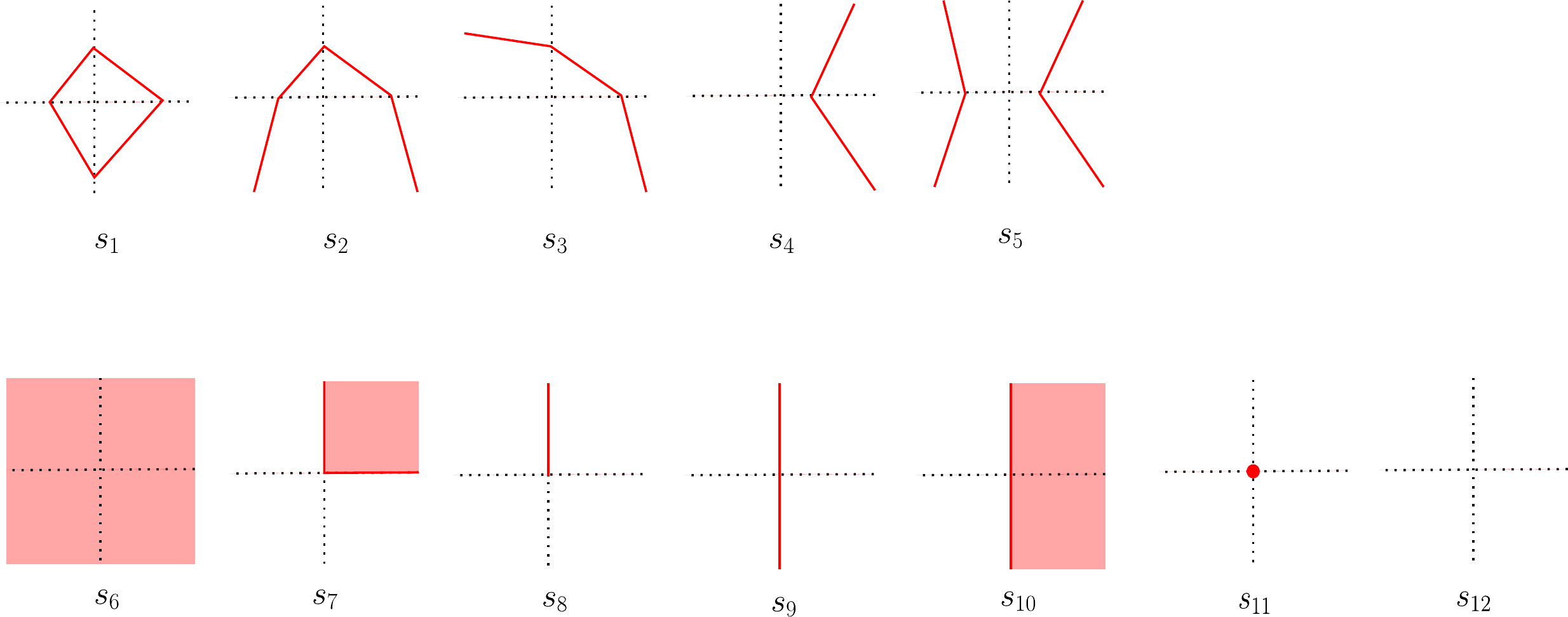}
    \caption{Admissible shapes of the bent hyperplane of the second layer. Dotted lines correspond to crossing hyperplanes given by neurons of the first layer. Top: non degenerated cases. Bottom: degenerated cases. }
    \label{fig:counter_ex_shapes}
\end{figure}

As $f_{\ex}$ has strictly more than $4$ regions, there exists at least one neuron of the second layer, wlog $\nu_1 \in N_2$, such that $H_{\nu_1} \cap \intt A_k$ has dimension $1$ for some region $A_k$ (other cases correspond to degenerated cases where $\nu_i$ does not introduce new regions in $f_\ex$).
Using the notations in \Cref{fig:counter_ex_shapes}, we end-up with $H_{\nu_1} \in \mathcal S :=\{s_1, s_2, s_3, s_4, s_5 \}$ and $H_{\nu_2} \in \mathcal S \cup \{ s_6, s_7, s_8, s_9, s_{10} \}$.

\vspace{0.5cm}
\noindent \textbf{Frontiers of the full ICNN}
The last layer is linear with respect to $y_2$ so it can not create additional frontiers (i.e. can not add points of non differentiability of $f_{\ex}$).
So, the landscape we observe in \Cref{fig:ICNN_counter_ex_frontiers} must correspond to the superposition of the bent hyperplanes of the second layer.
On the other hand, hyperplanes corresponding to the first layer may be partially hidden by the effect of the second layer.

\vspace{0.5cm}
\noindent \textbf{Identification of the neurons}
At points $A, B$ and $C$ in \Cref{fig:ICNN_counter_ex_frontiers}, pieces of hyperplanes cross, and there is some bend.
Hence, they correspond to at least one neuron of the first layer and one neuron of the second layer.
In particular, the points $A,B, C$ all belong to $H_{\mu_1} \cup H_{\mu_2}$.
As they are not aligned, they can not all belong to a single $H_{\mu_i}$, $i \in \{1,2\}$: wlog, either $[A,B] \subset H_{\mu_1}$, either $[B,C] \subset H_{\mu_1}$.

\vspace{0.5cm}
$\bullet~$\textit{Case $[B,C] \subset H_{\mu_1}$} (\Cref{fig:ICNN_rev_eng_case_1}).
In that case $B, C \in H_{\mu_1}$ and $A \in H_{\mu_2}$.
Then,  ${F_{5,6}} \cup {F_{6,4}}$ corresponds to a bent hyperplane associated with a neuron of the second layer, wlog $H_{\nu_1}$ (shapes $s_4$ or $s_5$ in \Cref{fig:counter_ex_shapes}).
The frontiers ${F_{3,5}} \cup {F_{3,4}}$ cannot also belong to $H_{\nu_1}$ (shape $s_5$): indeed, it would imply that $H_{\mu_1}$ and $H_{\mu_2}$ cross on $[B,C]$ and $A$ would not belong to $H_{\mu_2}$.
So, the only admissible pattern for $H_{\nu_1}$ is then $s_4$.
So, ${F_{3,5}} \cup {F_{3,4}} \subset H_{\nu_2}$.
We end up on a contradiction: at $A$, this bent hyperplane should split in two pieces, impossible.

\vspace{0.5cm}
$\bullet~$\textit{Case $[A,B] \subset H_{\mu_1}$} (\Cref{fig:ICNN_rev_eng_case_2}).
In that case $A, B \in H_{\mu_1}$ and $C \in H_{\mu_2}$.
We have already excluded $[B,C] \subset H_{\mu_2}$, so either ${F_{4,6}} \subset H_{\mu_2}$, either ${F_{5,6}} \subset H_{\mu_2}$.
The latter would correspond to a degenerate configuration (parallel hyperplanes).
Since we are focusing on the full-rank case for now, we do not study it at this point.
We thus end-up with the identification of \Cref{fig:ICNN_rev_eng_case_2_final} (up to a permutation of neurons in the same layer).
Note that it corresponds to the initial configuration given by the non-ICNN network implementing $f_{\ex}$.

\begin{figure}[htp]
\begin{subfigure}{0.3\textwidth}
    \centering
    \includegraphics[width=\textwidth]{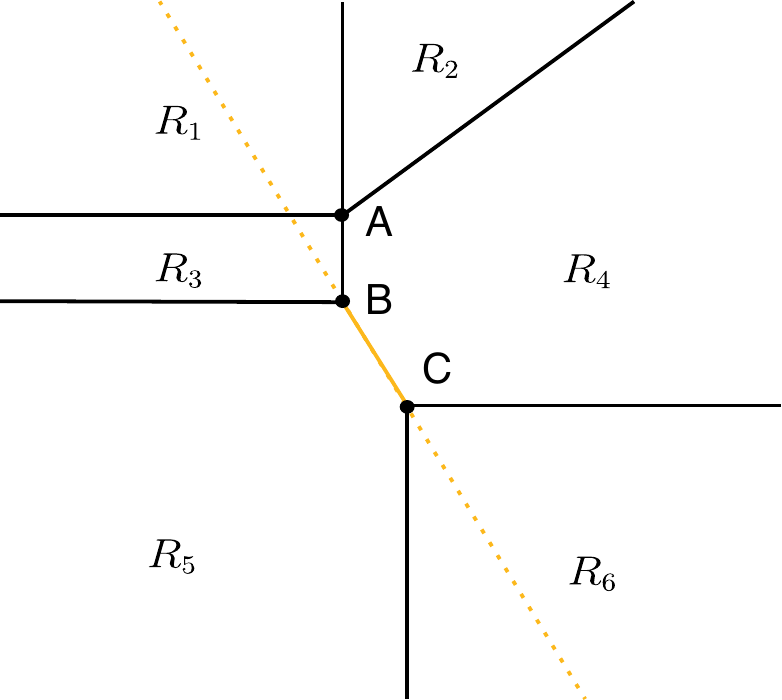}
    \caption{Case $[B,C] \subset H_{\mu_1}$}
    \label{fig:ICNN_rev_eng_case_1}
\end{subfigure}
\hfill
\begin{subfigure}{0.3\textwidth}
    \centering
    \includegraphics[width=\textwidth]{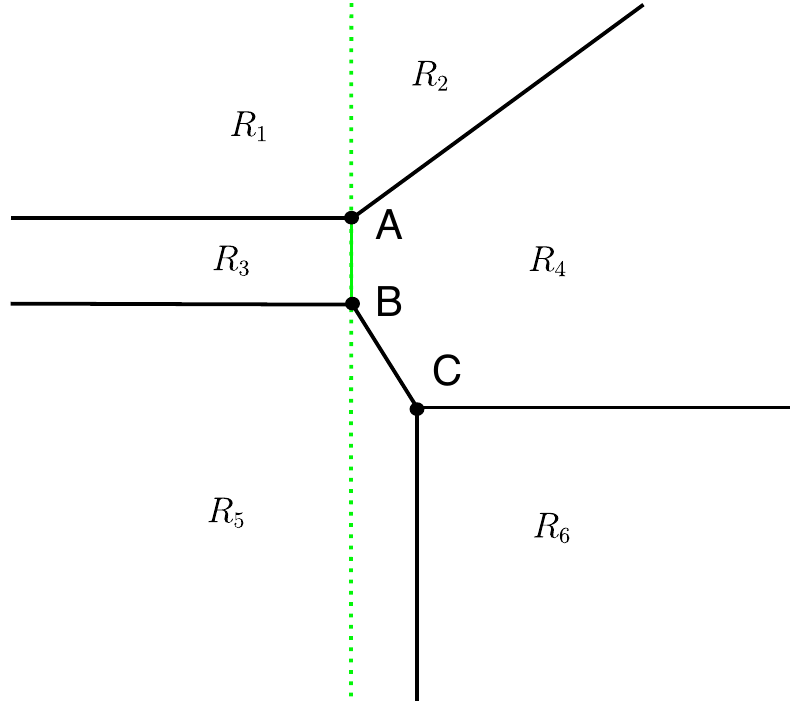}
    \caption{Case $[A,B] \subset H_{\mu_1}$}
    \label{fig:ICNN_rev_eng_case_2}
\end{subfigure}
\hfill
\begin{subfigure}{0.3\textwidth}
    \centering
    \includegraphics[width=\textwidth]{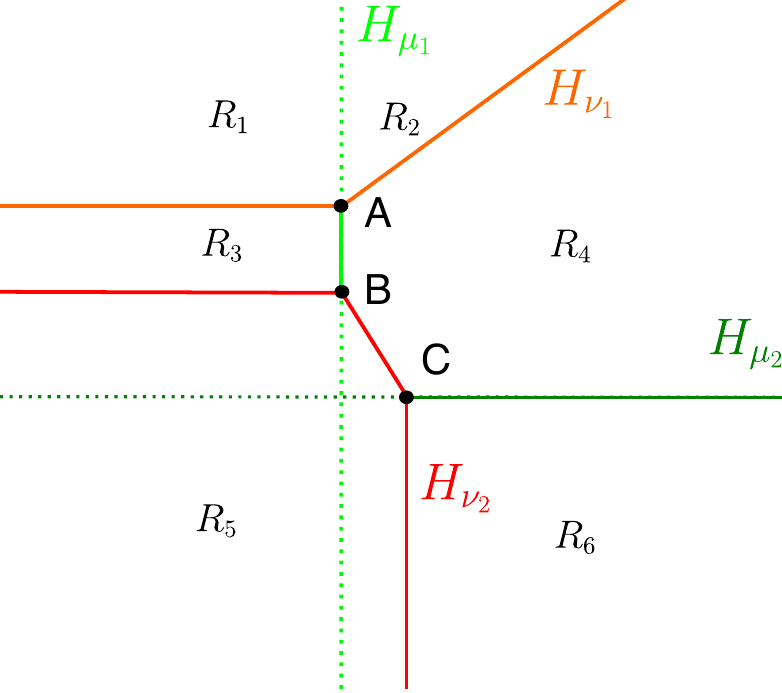}
    \caption{Final identification}
    \label{fig:ICNN_rev_eng_case_2_final}
\end{subfigure}
\caption{Identification of the neurons.}
\label{fig:ICNN_identification_first_layer}
\end{figure}

\vspace{0.5cm}
\noindent \textbf{Identification of the parameters}
Up to a scaling and to the sign, identifying the hyperplanes of the first layer gives the weights and bias of the first layer:
\begin{align}
    W_1 & =
    \begin{pmatrix}
     \pm 1 & 0 \\
        0 & \pm 1
    \end{pmatrix} , \quad \quad
    b_1 =  \begin{pmatrix}
       0 \\
         0
       \end{pmatrix}
\end{align}
We can fix the output layer $w_3 = (1,1)$ as $\relu$ networks are invariant modulo rescaling of parameters \cite{stock_embedding_2022}.
The ICNN we consider is thus parametrized as follows:
\begin{align}
    \begin{pmatrix}
        1 & 1
    \end{pmatrix} \relu \left(\begin{pmatrix}
        a & b \\
        c & d
    \end{pmatrix}
    \relu
   \begin{pmatrix}
    \epsilon_1 x_1 \\ \epsilon_2 x_2
\end{pmatrix}   + \begin{pmatrix}
         e & f \\
         g & h
    \end{pmatrix}  \begin{pmatrix}
        x_1 \\ x_2
    \end{pmatrix}  +\begin{pmatrix}
        i \\ j
    \end{pmatrix}\right) + V_3 \mathbf x +  b_3 \ .
\end{align}
where, for $k=1,2$, $\epsilon_k \in \{-1,1\}$ denotes the weight $W_1[\mu_k, \mu_k]$.
Define $\zeta_k = \frac{1+\epsilon_k}{2} \in \{0,1\}$.
Let us list all the equations the ICNN must satisfy to have the same frontiers as our target function.

\begin{itemize}
    \item $F_{1:3}$. The equation of $F_{1:3}$ is $\alpha (x_2-1)=0$ with $\alpha \neq 0$.
    The zero-level set of $z_{\nu_1}$ on $x_1 \in \bbR_-, x_2 \in \bbR_+$ rewrites
    \begin{equation*}
        - (1-\zeta_1) a x_1 + \zeta_2 b x_2 + e x_1 + f x_2 + i = 0 \, ,
    \end{equation*}
    and we identify the equality constraints
    \begin{align}
        -(1-\zeta_1) a + e & = 0 \label{eqn:ICNNEQ_x1} \\
        \zeta_2 b  + f & = \alpha \label{eqn:ICNNEQ_y1} \\
        i & = -\alpha \label{eqn:ICNNEQ_bias1}
    \end{align}
    \item $F_{2:4}$. The equation of $F_{2:4}$ is $\beta(x_2-x_1-1)=0$ with $\beta \neq 0$.
    The zero level of $z_{\nu_1}$ on $x_1 \in \bbR_+, x_2 \in \bbR_+$ is
    \begin{equation*}
        \zeta_1 a x_1 + \zeta_2 b x_2 + e x_1 + f x_2 + i = 0
    \end{equation*}
    and we identify the equality constraints
    \begin{align}
        \zeta_1 a + e & = -\beta \label{eqn:ICNNEQ_x2}\\
        \zeta_2 b  + f & = \beta \label{eqn:ICNNEQ_y2} \\
        i & = -\beta  \label{eqn:ICNNEQ_bias2}
    \end{align}
\end{itemize}

We get from \Cref{eqn:ICNNEQ_bias1,eqn:ICNNEQ_bias2} that $\alpha=  \beta$.
We study two cases: $\alpha > 0$ and $\alpha < 0$.
In the first case, wlog consider $\alpha = 1$.
Then, a first set of equations the ICNN weights must satisfy is:
\begin{align}
    -(1-\zeta_1) a + e & = 0 \label{eqn:ICNNEQ_1} \\
    \zeta_1 a + e & = -1 \label{eqn:ICNNEQ_2} \\
    \zeta_1, \zeta_2 & \in \{0,1\}  \hspace{1cm} \text{(activations of $\mu_1, \mu_2 \in N_1$)} \\
    a,b,c,d & \geq 0 \hspace{1.6cm} \text{(ICNN architecture)}
\end{align}

By injecting \eqref{eqn:ICNNEQ_2} into \eqref{eqn:ICNNEQ_1}, we get $a = -1$ which contradicts the  non-negativity constraint on $a$.

Now, we study the case where $\alpha < 0$, wlog $\alpha = -1$:
\begin{align}
    -(1-\zeta_1) a + e & = 0 \label{eqn:ICNNEQ_1_case2} \\
    \zeta_1 a + e & = 1 \label{eqn:ICNNEQ_2_case2} \\
    \zeta_1, \zeta_2 & \in \{0,1\}  \hspace{1cm} \text{(activations of $\mu_1, \mu_2 \in N_1$)} \\
    a,b,c,d & \geq 0 \hspace{1.6cm} \text{(ICNN architecture)}
\end{align}
In this case, we find $a = 1$ and there is no immediate contradiction at this stage.
By \eqref{eqn:ICNNEQ_bias1} $i = 1$ and $z_{\nu_1}((0, 0)) = i = 1 > 0$ so $\nu_1$ is active on $R_5$ (it is active at one point in the region and the activations are constant on region).

Let us now identify weights associated to the second neuron $\nu_2 \in N_2$.
\begin{itemize}
    \item $F_{3:5}$. The equation of $F_{3:5}$ is $\gamma(x_2 - 0.5) = 0$ with $\gamma \neq 0$. The $0$-level set of $z_{\nu_2}$ on $x_1 \in \bbR_-$, $x_2 \in \bbR_+$ rewrites
    \begin{equation}
        -(1-\zeta_1)cx_1 + d\zeta_2 x_2 + g x_1 + h x_2 + j = 0
    \end{equation}
    and we identify the equality constraints
    \begin{align}
        - (1-\zeta_1)c + g & = 0 \\
        \zeta_2 d + h & = \gamma \\
        j & = - 0.5 \gamma
    \end{align}
    \item $F_{4:5}$. The equation of $F_{4:5}$ is $\delta (x_2 + 2 x_1  - 0.5) = 0$ with $\delta \neq 0$. The $0$-level set of $z_{\nu_2}$ on $x_1 \in \bbR_+, x_2 \in \bbR_+$ is
    \begin{equation}
        \zeta_1 c x_1 + \zeta_2 d x_2 + g x_1 + h x_2 + j = 0 \label{eq:F45}
    \end{equation}
    and we identify the equality constraints
    \begin{align}
        \zeta_1 c + g & = 2 \delta \\
        \zeta_2 d + h & = \delta \\
        j & =  - 0.5 \delta
    \end{align}
    \item $F_{5:6}$. The equation of $F_{5:6}$ is $\eta (2 x_1  - 0.5) = 0$ with $\eta \neq 0$. The $0$-level set of $z_{\nu_2}$ on $x_1 \in \bbR_+, x_2 \in \bbR_-$ is
    \begin{equation}
        \zeta_1 c x_1 -  (1 - \zeta_2) d x_2 + g x_1 + h x_2 + j = 0
    \end{equation}
    and we identify the equality constraints
    \begin{align}
        \zeta_1 c + g & = 2 \eta \\
        -  (1 - \zeta_2) d + h & = 0 \\
        j &  = -  0.5 \eta
    \end{align}
\end{itemize}
So, it follows $\gamma = \delta = \eta $.
We study two cases $\gamma < 0$ and $\gamma > 0 $.
Assume $\gamma < 0$, wlog consider $\gamma =  - 1 $.
It gives the following set of constraints
\begin{align}
    - (1-\zeta_1)c + g &  = 0  \\
     \zeta_1 c + g & =  - 2  \\
     \zeta_1, \zeta_2 \in \{ 0, 1 \} \\
     c,d \geq 0 ,
\end{align}
which is not feasible as $c=  -2$ contradicts the ICNN constraint $c \geq 0$.
So, $\gamma$ has to be positive.

Rewriting the equation of $F_{4:5}$ (second bullet) as $2\gamma x_1 + \gamma x_2 - 0.5 \gamma = 0$, we see that for $x_1, x_2 < 0$, $z_{\nu_2} <0$ so $\nu_2$ is then inactive on $R_5$. Recall that on this region $\nu_1$ is active and
the pre-activation $z_{\nu_1}$ has 4 different affine pieces (depending on which side of $H_{\mu_1}$ and $H_{\mu_2}$ the input $x$ is).
Yet, we observe on the landscape that $f_{\ex}$ is $0$ on this region.
The skip connection term, $v_3^\top x + b_3$, cannot put at $0$ a piecewise affine function with $4$ distinct linear pieces.

We end up with a non-feasibility argument: the frontiers of $f_{\ex}$ can not be implemented by a $2$ hidden-layers, $2$ neurons per layer ICNN.

\vspace{0.5cm}
\noindent \textsc{The degenerated case.} \\
\vspace{0.1cm}
Now, assume $W_1$ is not full rank. Then, the first layer can either generate two parallel but distinct frontiers, either a single frontier.
But these frontiers must pass through the $3$ points $A,B$ and $C$.
This requirement eliminates all cases, apart from $H_{\mu_1}$ and $H_{\mu_2}$ corresponding to $F_{1:2} \cup F_{3:4}$ and $F_{5:6}$.
Up to a scaling and to the sign, identifying the hyperplanes of the first layer gives the weights and bias of the first layer:
\begin{align}
    W_1 & =
    \begin{pmatrix}
     \pm 1 & 0 \\
     \pm 1 & 0
    \end{pmatrix} , \quad \quad
    b_1 =  \begin{pmatrix}
       0 \\
         \pm \frac 1 4
       \end{pmatrix}
\end{align}
As done previously, we can fix the output layer $w_3 = (1,1)$.
The ICNN we consider is thus parametrized as follows:
\begin{align}
    \begin{pmatrix}
        1 & 1
    \end{pmatrix} \relu \left(\begin{pmatrix}
        a & b \\
        c & d
    \end{pmatrix}
    \relu
   \begin{pmatrix}
    \epsilon_1 x_1 \\ \epsilon_2 (x_1 - \frac 1 4)
\end{pmatrix}   + \begin{pmatrix}
         e & f \\
         g & h
    \end{pmatrix}  \begin{pmatrix}
        x_1 \\ x_2
    \end{pmatrix}  +\begin{pmatrix}
        i \\ j
    \end{pmatrix}\right) + V_3 \mathbf x +  b_3 \ .
\end{align}
where, for $k=1,2$, $\epsilon_k \in \{-1,1\}$ denotes the weight $W_1[\mu_k, \mu_k]$.
In this case, the two hyperplanes generated by the first layer neurons are parallel and divide the input space $\bbR^2$ into $3$ regions.
On each of these regions, $z_{\nu_1}$ and $z_{\nu_2}$ are affine function of the input.
Applying the same reasoning as before,  the $0$-level sets of the pre-activation of neurons $\nu_1, \nu_2 \in N_2$ on neighboring regions must be connected.
We thus identify $H_{\nu_1}$ to $F_{1:3} \cup F_{2:4}$ and $H_{\nu_2}$ to $F_{3:5}\cup F_{4:5}\cup F_{4:6}$.
We now study the constrains associated to $H_{\nu_2}$.
\begin{itemize}
    \item $F_{3:5}$. The equation of $F_{3:5}$ is $\gamma(x_2 - 0.5) = 0$ with $\gamma \neq 0$. The $0$-level set of $z_{\nu_2}$ on $x_1 \in \bbR_-$, $x_2 \in \bbR_+$ rewrites
    \begin{equation}
        -(1-\zeta_1)cx_1 -d(1 - \zeta_2) (x_1 - 0.25) + g x_1 + h x_2 + j = 0
    \end{equation}
    so we identify the equality constraints
    \begin{align}
        -(1-\zeta_1)c -d(1 - \zeta_2) + g & = 0 \label{eqn:ICNNEQ_deg_x1}\\
        h & = \gamma \label{eqn:ICNNEQ_deg_y1} \\
        0.25 d(1 - \zeta_2) + j & = - 0.5 \gamma \label{eqn:ICNNEQ_deg_b1}
    \end{align}
    \item $F_{4:5}$. The equation of $F_{4:5}$ is $\delta (x_2 + 2 x_1  - 0.5) = 0$ with $\delta \neq 0$. The $0$-level set of $z_{\nu_2}$ on $x_1 \in \bbR_+, x_2 \in \bbR_+$ is
    \begin{equation}
        \zeta_1 c x_1 -d(1 - \zeta_2) (x_1 - 0.25) + g x_1 + h x_2 + j = 0
    \end{equation}
    \begin{align}
        \zeta_1 c  -d(1 - \zeta_2) + g & = 2 \delta \label{eqn:ICNNEQ_deg_x2}\\
        h & = \delta \label{eqn:ICNNEQ_deg_y2} \\
        0.25 d(1 - \zeta_2) + j & =  - 0.5 \delta
    \end{align}
    \item $F_{4:6}$. The equation of $F_{4:6}$ is $\eta x_2 = 0$ with $\eta \neq 0$. The $0$-level set of $z_{\nu_2}$ on $x_1 \in \bbR_+, x_2 \in \bbR_-$ is
    \begin{equation}
        \zeta_1 c x_1 + \zeta_2 d (x_1 - 0.25) + g x_1 + h x_2 + j = 0
    \end{equation}
    \begin{align}
        \zeta_1 c + \zeta_2 d & = 0 \label{eqn:ICNNEQ_deg_x3}\\
        h & = \eta \label{eqn:ICNNEQ_deg_y3} \\
        - 0.25 \zeta_2 d  + j  &  = 0 \label{eqn:ICNNEQ_deg_b3}
    \end{align}
\end{itemize}
From which we identify:
\begin{itemize}
    \item by \Cref{eqn:ICNNEQ_deg_y1,eqn:ICNNEQ_deg_y2,eqn:ICNNEQ_deg_y3}, $\gamma = \delta = \eta$,
    \item by \Cref{eqn:ICNNEQ_deg_x1}
    \begin{align}
        -c -d + \underbrace{\zeta_1 c + \zeta_2 d}_{=0 \text{ by \eqref{eqn:ICNNEQ_deg_x3}}} + g = 0
    \end{align}
    and by \Cref{eqn:ICNNEQ_deg_x2}
    \begin{align}
        -d + \underbrace{\zeta_1 c + \zeta_2 d}_{=0 \text{ by \eqref{eqn:ICNNEQ_deg_x3}}} + g = 2 \gamma
    \end{align}
    from which if follows
    \begin{equation}
        c = 2 \gamma,
    \end{equation}
    together with $c \geq 0 $ (ICNN condition), it follows $\gamma$ (which is $\neq 0 $) must be positive.
    \item by \Cref{eqn:ICNNEQ_deg_b1,eqn:ICNNEQ_deg_b3}
    \begin{equation}
        0.25d = -0.5 \gamma - \underbrace{(j - 0.25d\zeta_2)}_{= 0 \text{ by \eqref{eqn:ICNNEQ_deg_b3}}}
    \end{equation}
    leading to $0.25d = -0.5 \gamma < 0$  which contradicts the non-negativity constraint on $d$.
\end{itemize}
We can conclude that the degenerate case is non-feasible with respect to the ICNNs constraints.

To wrap up, we cannot find an ICNN with $2$ layers and $2$ neurons per layer such that it implements $f_\ex$, which concludes the proof.

\subsection{Sanity check of the necessary convexity condition given in \Cref{lemma:pos_cond_2_layer}}
\label{app:counter_ex_isolated}

For any neuron $\eta \in (N_1 = \{ \mu_1, \mu_2 \}) \cup (N_2 = \{ \nu_1, \nu_2 \})$, note that $\fX_{\eta}$ is by definition the set of points such that $z_{\eta}(x_1, x_2) = 0$ and $z_{\eta'}(x_1, x_2) \neq 0 $ for $\eta' \neq \eta$.
Each pre-activation is a function from $\bbR^2$ to $\bbR$ given by:
\begin{align}
    z_{\mu_1} : (x_1, x_2) & \mapsto x_1 \\
    z_{\mu_2} : (x_1, x_2) & \mapsto x_2 \\
    z_{\nu_1} : (x_1, x_2) & \mapsto -\relu{(x_1)} + \relu(x_2) -1 \\
    z_{\nu_2} : (x_1, x_2) & \mapsto 2 \relu{(x_1)} + \relu{(x_2)} -0.5
\end{align}
It follows
\begin{align}
    \fX_{\mu_1} & = \{ x \in \bbR^2 : x_1 = 0 \} \setminus \{(0, 0), (0, 1), (0, 0.5) \}  = \{ 0 \} \times (-\infty, 0) \cup (0, 0.5 )\cup (0.5, 1) \cup (1, +\infty) \\
    \fX_{\mu_2} & = \{ x \in \bbR^2 : x_2 = 0 \} \setminus \{(0, 0), (0, 0.25) \}  =  (-\infty, 0) \cup (0, 0.25) \cup (0.25, +\infty)  \times \{ 0 \} \\
    \fX_{\nu_1} & = \{ x \in \bbR^2_+ :  - x_1 + x_2 - 1 = 0 \} \setminus \{(0, 1)\}   \\
    \fX_{\nu_2} & = \{ x \in \bbR^2_+ : 2 x_1 + x_2 -0.5 = 0 \} \setminus \{(0, 0.25), (0.25, 0) \}
\end{align}

\paragraph{All neurons are isolated}

By definition, a neuron $\eta \in (N_1 = \{ \mu_1, \mu_2 \}) \cup (N_2 = \{ \nu_1, \nu_2 \})$ is isolated if $\fX_{\eta} \neq \emptyset$, which is clearly satisfied.

\paragraph{Determine $\bfa_2(\fX_{\mu_1})$}

\begin{itemize}
    \item Consider $x = (x_1 = 0, x_2)$ with $x_2 \in ( 1, +\infty)$:
\begin{equation}
    z_{\nu_1}(x) > 0, \qquad z_{\nu_2}(x)   > 0,
\end{equation}
so $(1,1) \in \bfa_2(\fX_{\mu_1})$.
\item Consider $x = (x_1 = 0, x_2)$ with $x_2 \in (0.5, 1)$:
\begin{equation}
    z_{\nu_1}(x) < 0, \qquad z_{\nu_2}(x)   > 0,
\end{equation}
so $(0,1) \in \bfa_2(\fX_{\mu_1})$.
\item Consider $x = (x_1 = 0, x_2)$ with $x_2 \in (-\infty,0) \cup (0, 0.5) $:
\begin{equation}
    z_{\nu_1}(x) < 0, \qquad z_{\nu_2}(x)  < 0,
\end{equation}
so $(0,0) \in \bfa_2(\fX_{\mu_1})$.
\end{itemize}
We end up with $\bfa_2 (\fX_{\mu_1}) = \{ (0,0), (0,1), (1,1)\} $.

 \section{Algorithm}
\label{app:algo}

In the case of $\relu$ neurons, any input point $x$ in $\bbR^d$ can be assigned a {\em sign-vector}
$\sigma(x) := (\sign(z_\nu(x)))_{\nu \in H}$
-- where $\sign(u) = u / |u|$ except for $\sign(0)$ whose value is $0$.
The sign-vector of a point is simply related to its activations by $(a_\nu(x))_{\nu \in H} = \relu(\sigma(x))$.

The algorithm of \citet{berzins23relu} extracts the $1$-faces of the polyhedral complex, i.e. its edges, together with their sign-vectors.
We use this information to reconstruct the set of reachable activations $\bfa(\fX_\nu)$:
first, we use sign-vectors of points on 1-faces to derive sign-vectors of points on $d-1$-faces.
Then, taking the $\relu$ of this vector yields the activations needed to check convexity.

The sign-vector of any point belonging to a $1$-face contains $d-1$ zero entries: it lies on the 0-level set of $d- 1$ different preactivations.
On the other hand, sign-vectors of points on $d-1$-faces have exactly one zero, because they correspond to points where only one preactivation is 0.
Sign-vectors on such point can be recovered by following the so-called {\em perturbation} process described in \citet[Section 4.1]{berzins23relu}  which enables to recover all faces, in particular the $d-1$ faces.
For a neuron $\nu$:
\begin{itemize}
    \item we find all possible sign-vectors of points on $1$-faces where the coordinate $\nu$ is 0
    \item for each vector, for every subset of $d-2$ vanishing entries, we replace those 0 values by every possible combination of 1 and -1 ({\em perturbation} process).
    This gives all the sign-vectors of points on $d-1$-faces.
    \item taking the $\relu$ of those sign-vectors gives the corresponding activations, that we use in the same fashion as for the first bottleneck in \Cref{sec:numerics} to compute $\langle \bfa, \Phi^{\nu \to} \rangle$.
\end{itemize}

\addrebut{\Cref{algo1:convexity_check} presents pseudo-code of our algorithm.}
\addrebut{ \Cref{fig:curve_box_wdth} provides an ablation on the choice of the size of the hypercube domain on which convexity is checked.}
\addrebut{ \Cref{fig:times} gives computation time to check convexity of a network with architecture $\mathbf n =(n,n)$ (averaged on $10^4$ draws per architecture).}

\begin{figure}
    \centering
    \subfloat[Impact of the size of the hypercube domain.]{\includegraphics[width=0.45\linewidth]{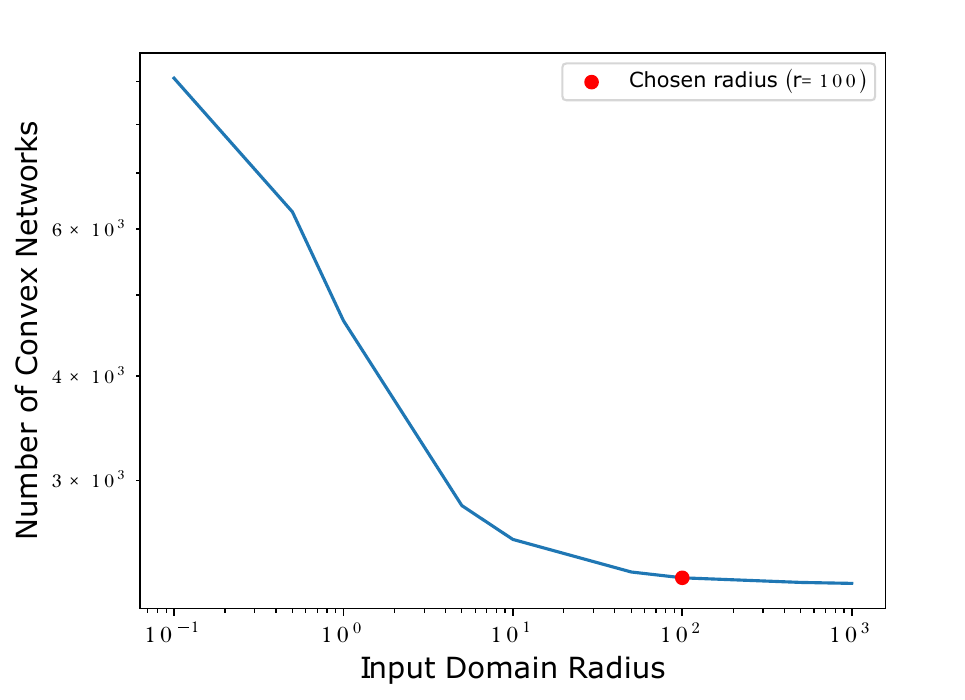}
    \label{fig:curve_box_wdth}}
    \subfloat[Computational time to check convexity of a network with architecture $\mathbf n =(n,n)$.]{\includegraphics[width=0.415\linewidth]{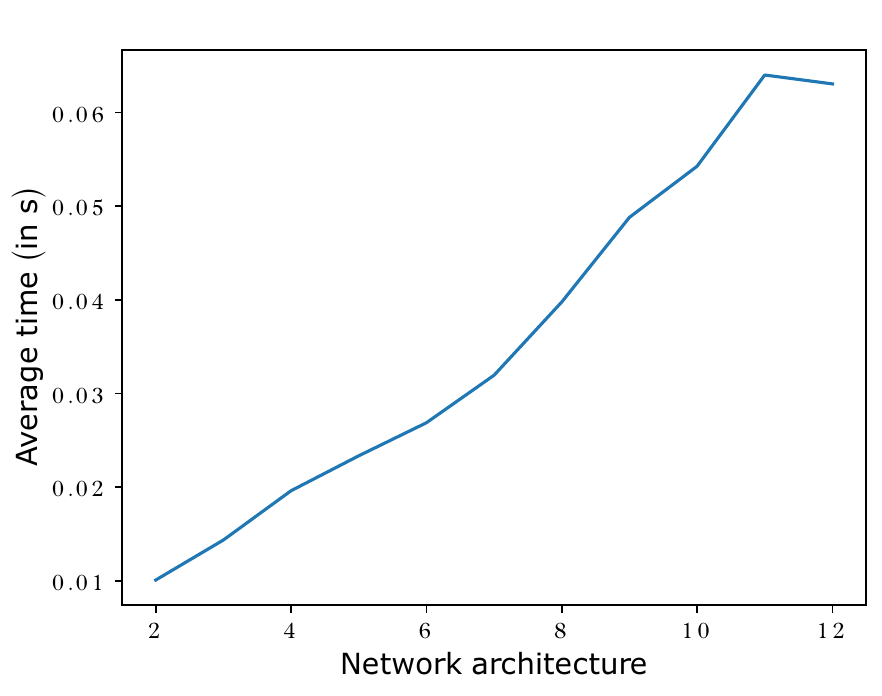}
    \label{fig:times}}
\caption{\addrebut{Ablation studies.}}
\end{figure}

\begin{algorithm}

    \caption{Convexity check}\label{algo1:convexity_check}
    \begin{algorithmic}[1]
    \Require $(G = (N,E), /\theta)$: a feed-forward, fully connected $\relu$ NN with parameters $\theta$
    \State $\mathsf{IsConvex} \gets \text{ True }$
    \State $(G', \theta') \gets$ create a copy of $G = (N,E)$ with biases set to $0$
    \State $\mathsf{signed-edges} \leftarrow$  Berzins \cite{berzins23relu} algorithm $(R_\theta)$
    \For{$\nu \in N$} \hfill \textit{// Loop over every neuron $\nu$ of the network}
        \State $\mathsf{signed-edges}_\nu \gets \mathsf{signed-edges}\left[\mathsf{signed-edges}[\nu]= 0 \right]$ \hfill \textit{// Extract edges $\mathbf e$ such that $\sigma_\nu(\mathbf e) = 0$, \emph{i.e.} edges belonging to the bent hyperplane $H_\nu$}
        \State $\mathsf{signed-hyperplanes}_\nu \gets$ Generate sign-vectors of $H_\nu$ by replacing every $0$ in $\mathsf{signed-edges}_\nu $ with both $\pm 1$ \hfill \textit{// The sign-vector of an edge $\mathbf e$ has $d-1$ zeros. Sign-vectors of hyperplanes have $1$ zero.}
        \State $\mathbf a^{\nu \to} \gets \relu(\mathsf{signed-hyperplanes}_\nu[l_\nu+1:])$ \hfill \textit{// Extract the sign vectors for neurons in layers after $l_\nu$, the layer of $\nu$ in $G$.}
        \State \textbf{Initialize} $\mathsf{res} \gets \text{zeros}(\text{size of layer } l)$ \hfill \textit{// Input for the network corresponding to $G^{\nu \to}$  }
        \State $\mathsf{res}[\nu] \gets 1$ \hfill \textit{// Set the value at index $\nu$ to 1}
        \For{layer $\in$ $(G', \theta')$ from $l_\nu + 1$ to  $l_{\text{out}}$}
        \hfill \textit{// Only consider layers after the layer of $\nu$ in $G$.}
            \State $\mathsf{res} \gets [\mathbf a^{\nu \to}]_{\text{layer}} * \text{layer}(\mathsf{res})$  \hfill \textit{// Replace standard $\relu$ activation by a entrywise product with $\mathbf a^{\nu \to}$.}
        \EndFor
        \If{$\mathsf{res} \not\geq 0$}
               \State  $\mathsf{IsConvex} \gets \text{ False }$ \hfill \textit{// As soon as a condition is violated, the function implemented by $R_\theta$ is not convex.}
        \EndIf
    \EndFor \\
    \Return $\mathsf{IsConvex}$
\end{algorithmic}
\end{algorithm}

\end{appendices}

\end{document}